\documentclass[Afour, sageh, times]{style/sagej}

\usepackage{tikz}
\usetikzlibrary{arrows}
\usetikzlibrary{arrows.meta, positioning}

\usepackage[utf8]{inputenc}
\usepackage[notransparent]{svg}

\usepackage{amssymb}
\usepackage{tabularx}
\usepackage{multirow}

\usepackage{subcaption}

\usepackage{listings}
\usepackage{algorithm}
\usepackage{algpseudocode}
\usepackage{hyperref}
\usepackage{xcolor}
\usepackage{amsmath}
\usepackage{amsfonts}
\usepackage{ifmtarg}
\usepackage{listings}
\usepackage{siunitx}
\usepackage{graphbox}
\usepackage{xspace}
\usepackage[toc,page]{appendix}
\usepackage{balance}

\def\volumeyear{2025}
\runninghead{Orthey and Pokorny}

\usepackage{enumitem}
\newlist{todolist}{itemize}{2}
\setlist[todolist]{label=$\square$}
\usepackage{pifont}
\def\R{\mathbb{R}}
\def\N{\mathbb{N}_{\geq 0}}
\def\SO{SO}
\def\SE{SE}

\renewcommand\SO[1]{\ensuremath{\mathrm{SO}(#1)}}
\renewcommand\SE[1]{\ensuremath{\mathrm{SE}(#1)}}
\renewcommand\S{\ensuremath{\mathbb{S}}}

\newcommand\Rnonneg{\ensuremath{\mathrm{R_{\geq 0}}}}

\def\X{X}
\def\xi{x_I}
\def\xg{X_G}
\def\PriorityQueue{\ensuremath{\mathbf{X}}}

\def\tree{T}
\newtheorem{theorem}{Theorem}

\def\xb{x_{\text{base}}}

\def\xbase{\xb}

\def\pb{p_{\text{base}}}

\def\FibrationRRT{Fibration-RRT\xspace}
\DeclareMathOperator{\atantwo}{atan2}

\algnewcommand{\algorithmicparams}{\textbf{Parameters:}}
\algnewcommand\Params{\item[\algorithmicparams]}

\algnewcommand\True{\textbf{true}\xspace}
\algnewcommand\False{\textbf{false}\xspace}

\makeatletter
\renewcommand{\toprule}{\hrule height.8pt depth0pt \kern2pt} 
\renewcommand{\midrule}{\kern2pt\hrule\kern2pt} 
\renewcommand{\bottomrule}{\kern2pt\hrule\relax}
\newcommand{\algcaption}[2][]{%
  \refstepcounter{algorithm}%
\textbf{{\raggedright\fname@algorithm~\thealgorithm}}\ #2\par 
  \midrule
}
\makeatother
\algnewcommand{\algorithmicbreak}{\textbf{break}}
\algnewcommand{\BREAK}{\algorithmicbreak}
\algnewcommand{\algorithmicor}{\textbf{ or }}
\algnewcommand{\OR}{\algorithmicor}
\algnewcommand{\algorithmicand}{\textbf{ and }}
\algnewcommand{\AND}{\algorithmicand}

\makeatletter
\newcommand*{\algrule}[1][\algorithmicindent]{\makebox[#1][l]{\hspace*{.5em}\vrule height .75\baselineskip depth .25\baselineskip}}%

\newcount\ALG@printindent@tempcnta
\def\ALG@printindent{%
    \ifnum \theALG@nested>0
    \ifx\ALG@text\ALG@x@notext
    \addvspace{-3pt}
    \else
    \unskip
    \ALG@printindent@tempcnta=1
    \loop
    \algrule[\csname ALG@ind@\the\ALG@printindent@tempcnta\endcsname]%
    \advance \ALG@printindent@tempcnta 1
    \ifnum \ALG@printindent@tempcnta<\numexpr\theALG@nested+1\relax
    \repeat
    \fi
    \fi
}%
\usepackage{etoolbox}
\patchcmd{\ALG@doentity}{\noindent\hskip\ALG@tlm}{\ALG@printindent}{}{\errmessage{failed to patch}}
\makeatother
\makeatletter
\algnewcommand{\LineComment}[1]{\Statex \hskip\ALG@thistlm \(\triangleright\) #1}
\makeatother

\definecolor{parallel_color}{RGB}{220,90,0}
\definecolor{sequential_color}{RGB}{0,120,0}
\definecolor{partial_color}{RGB}{128,0,128}

\newcommand{\sqboxs}{1.5ex}
\newcommand{\sqbox}[1]{\textcolor{#1}{\rule{\sqboxs}{\sqboxs}}}
\def\parallelColorBox{\sqbox{parallel_color}}
\def\sequentialColorBox{\sqbox{sequential_color}}
\def\partialColorBox{\sqbox{partial_color}}

\tikzset{
    inner sep=2mm,
    outer sep=0mm,
    sibling distance=4em,
    none/.style={
        line width = 0pt, 
        draw=none,
        edge from parent/.style={none}
    },
    partial/.style={
        color=partial_color,
        edge from parent/.style={color = partial_color, line width = 1pt, draw}
    },
    sequential/.style={
        color=sequential_color,
        edge from parent/.style={color = sequential_color, line width = 1pt, draw}
    },
    parallel/.style={
        color=parallel_color,
        edge from parent/.style={color = parallel_color, line width = 1pt, draw}
    },
    root/.style={
    color=black,
    line width=1.5pt,
   }
}
\definecolor{nonexisting_property}{rgb}{0.95, 0.95, 0.95}
\definecolor{existing_property}{rgb}{0.01, 0.75, 0.24}
\definecolor{unknown_property}{rgb}{0.9, 0.9, 0.9}

\newcommand{\sqboxsTable}{1.5ex}
\newcommand{\sqboxTable}[1]{\textcolor{#1}{\rule{\sqboxsTable}{\sqboxsTable}}}
\def\markYes{\sqboxTable{existing_property}}
\def\markNo{\sqboxTable{nonexisting_property}}

\newlength{\subfigheight}
\newlength{\scenarioheight}

\definecolor{colorFTD}{RGB}{76,178,76} 
\definecolor{colorFTP}{RGB}{102,229,102} 
\definecolor{colorRRT}{RGB}{255,178,102} 
\definecolor{colorLBT}{RGB}{255,237,140} 
\definecolor{colorEST}{RGB}{255,107,104} 
\definecolor{colorFMT}{RGB}{199,217,239} 

\definecolor{colorQRRT}{RGB}{255,178,102}
\definecolor{colorDiscreteRRT1}{RGB}{255,230,102}
\definecolor{colorDiscreteRRT5}{RGB}{255,204,77}
\definecolor{colorDiscreteRRT10}{RGB}{255,179,51}
\definecolor{colorDiscreteRRT50}{RGB}{255,153,26}

\newcommand\BibTeX{{\rmfamily B\kern-.05em \textsc{i\kern-.025em b}\kern-.08em
T\kern-.1667em\lower.7ex\hbox{E}\kern-.125emX}}

\def\volumeyear{2026}
\begin{document}

\title{Fibration Trees: A Unified Approach to Multi-Robot Motion Planning}

\author{Andreas Orthey\affilnum{1} and Florian T. Pokorny\affilnum{2} and Lydia E. Kavraki\affilnum{3}}
\runninghead{Orthey et al.}

\affiliation{
\noindent\affilnum{1}Technical University of Berlin, Germany\\
\affilnum{2}KTH Royal Institute of Technology, Stockholm, Sweden\\
\affilnum{3}Rice University and the Ken Kennedy Institute, Houston, TX, US
}
\corrauth{Andreas Orthey}
\email{\{andreas\}@orthey.net}

\newcommand\todo[1]{{\color{blue}{\textbf{TODO:} #1}}}

\begin{abstract}
    State space projections and decompositions have emerged as powerful tools to
    tackle the curse of dimensionality in high-dimensional, multi-robot motion planning problems.
    However, existing methods lack a unified framework which seamlessly handles combinations of projections (prioritization or task-space) and decompositions (parallel or decoupled subspaces). 
    To fill this gap, we introduce fibration trees, which are trees consisting of state spaces as nodes and fibrations as edges, whereby a fibration models a projection from a higher-dimensional space to a lower-dimensional (or simplified) space.    
    By modeling projections as fibrations, we unify sequential prioritization, parallel decomposition, and task-space projections under a single, coherent formalism.
    Building on this, we develop the rapidly-exploring random fibration trees (Fibration-RRT) planner, a sampling-based motion planner that generalizes strategies from quotient-space RRT (for sequential prioritizations) and discrete RRT (for parallel decompositions), while allowing the inclusion of task-space projections. Fibration-RRT operates on user-defined fibration trees and is proven to be probabilistically complete.
    To test the generality and efficiency of Fibration-RRT, we provide an open-source implementation and conduct experiments on 32 scenarios using multi robot teams with up to 96 degrees of freedom. 
    Our results indicate that Fibration-RRT efficiently solves high-dimensional problems by exploiting user-defined fibration trees, thereby establishing fibration trees as a powerful, unified framework
    for multi-robot motion planning.
\end{abstract}

\keywords{Multi-Robot Motion Planning, Heterogenous Planning, Hierarchical Motion Planning}

\maketitle

\section{Introduction}

State space projections and decompositions are useful tools to break the complexity of high-dimensional motion planning and optimization problems. In the context of sampling-based motion planning, projections and decompositions have been used for multi-robot scenarios, where state spaces can have tens or hundreds of dimensions, like in the coordination of robot swarms~\citep{dorigo2021swarm, kaiser2022innate},
spacecrafts~\citep{Nino2025Spacecraft}, robot construction
teams~\citep{Hartmann2021TRO}, underwater vehicles~\citep{zhou2022survey}, or quadrotor swarms~\citep{hoenig_2018}.
In previous works, efficient solvers have been devised to solve heterogeneous
multi-robot planning problems using projections (prioritized planning)~\citep{Orthey2024IJRR, Vidal2019, Ferbach1997} or decompositions (parallel or decoupled planning)~\citep{Solovey2016, Wagner2015, Shome2020}. To us, those are examples of hierarchical representations which have been imposed onto the state space of a problem to more efficiently solve high-dimensional, multi-robot motion planning problems.

\begin{figure}[t]
\centering
  \resizebox{0.8\linewidth}{!}{
\begin{tikzpicture}[
    every node/.style={inner sep=0}
]
    \node (fibration) {\includegraphics[width=1.0\linewidth]{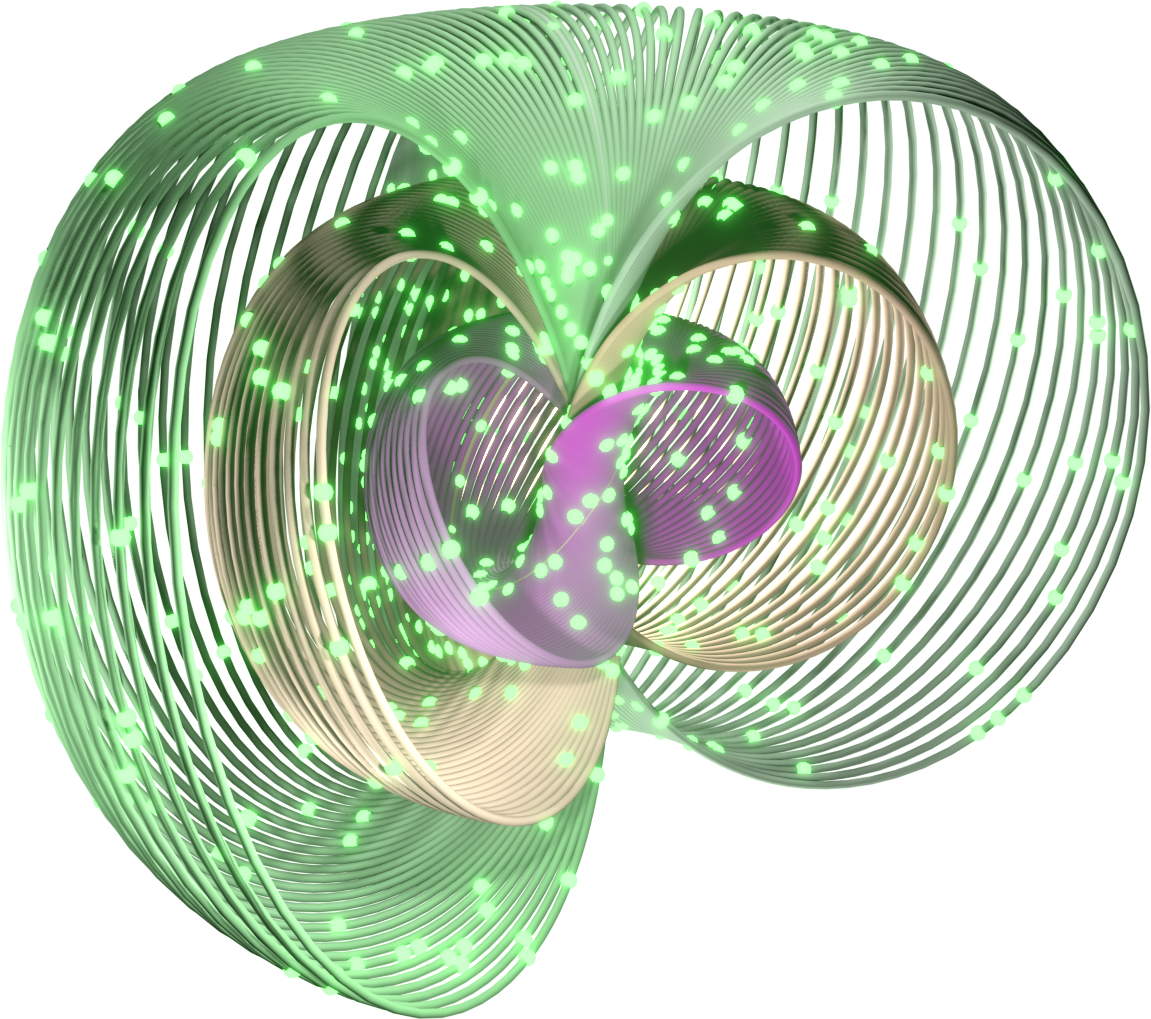}};

    \node (sphere) [below right=-2.0cm and -0.0cm of fibration.south east, anchor=north west]
        {\includegraphics[width=0.3\linewidth]{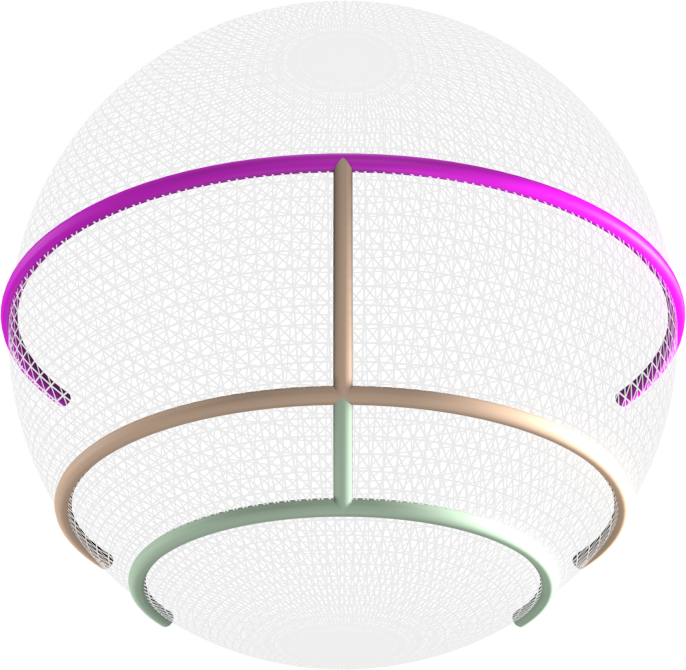}};

    \draw[->, ultra thick, bend right=25, >=Latex]
        ([xshift=-3.0cm,yshift=+1.3cm]fibration.south east) to
        ([xshift=-0.2cm,yshift=-1.5cm]sphere.north west);
\end{tikzpicture}
}
\caption{
We simplify multi-robot motion planning problems using fibration trees, which are hierarchical representations of state spaces where nodes model state space simplifications. An example is the Hopf fibration $\S^3 \rightarrow \S^2$, which
  can be understood as a simplification of $\S^3$ by $\S^2$. 
  A graph on $\S^2$ is shown (bottom right), with the graph restriction on
  $\S^3$ visualized as the fibers of the Hopf fibration.
  The color gradient of the edges on $\S^2$ reflects the corresponding fibers on $\S^3$.
}
\vspace*{-0.3cm}
\label{fig:pullfigure}
\end{figure}

\def\textIndent{\indent\hspace{0.5cm}}

\begin{figure*}[t]
\centering
\tikzstyle{fullblock}=[draw, fill=white!50, 
text centered, dashed, minimum height=6.5cm, minimum width=\linewidth]
\tikzstyle{block}=[draw, fill=gray!10, 
text centered, text depth=2cm, minimum height=4cm, minimum width=0.3\linewidth]
\begin{tikzpicture}
\node[fullblock] (main) at (0,+0.025\linewidth) {\small Fibration Trees};
\node[block] (parallel) at (-0.345\linewidth,0) {
\begin{minipage}{0.25\linewidth}
\footnotesize 
\begin{center}
\textbf{Parallel Fibrations}
\end{center}
Coordinated planning\\
\textIndent\citep{svestka_1998}\\
Path coordination~\citep{simeon_2002}\\
M* method\\
\textIndent~\citep{Wagner2015}\\
Discrete RRT (dRRT)\\
\textIndent\citep{solovey_2016}\\
\textIndent\citep{Shome2020}\\
\textIndent\citep{Solano2024}
\end{minipage}
};

\node[block] (sequential) at (-0.0\linewidth,0) {
\begin{minipage}{0.36\linewidth}
\begin{center}
\textbf{\footnotesize Sequential Fibrations}\\[0.3cm]    
\end{center}
\footnotesize
\textbf{Multiple Robots}\\
Prioritized Planning~\citep{VanDenBerg2005Prioritized}\\
Quotient-spaces~\citep{Orthey2019ISRR}\\
Fiber Bundles~\citep{Orthey2024IJRR}\\
\textbf{Single Robot}\\
Multilayered Planning~\citep{Vidal2019}\\
Hierarchical FMT*~\citep{Reid2020}\\
Synergistic Layers~\citep{Plaku2010motion}\\
Possibility Graphs~\citep{Grey2017Footstep}\\
\end{minipage}
};
\node[block] (partial) at (+0.345\linewidth,0) {
\begin{minipage}{0.23\linewidth}
\begin{center}
\textbf{\footnotesize Partial Fibrations}\\[0.3cm]    
\end{center}
\footnotesize
XXL~\citep{Luna2020XXL}\\
EET~\citep{Rickert2014}\\
KPIECE~\citep{Sucan2011}\\
AVP-RRT~\citep{pham2017admissible}
\end{minipage}
};
\node[draw, fill=gray!10, 
text centered, minimum width=0.25\linewidth] (fibtree) at (0,+0.18\linewidth) {\textbf{Fibration Trees}};
\draw[->] (parallel.north) |- (fibtree.west);
\draw[->] (sequential.north) -- (fibtree.south);
\draw[->] (partial.north) |- (fibtree.east);
\end{tikzpicture}
\caption{Fibration trees as unification tool to represent parallel, sequential, and partial fibrations. For each fibration, we list a non-exhaustive list of methods which make use of this particular instantiation of a fibration tree together with a tailor-made solver to exploit the particular structure. Please see Sec.~\ref{sec:related-work} for a more comprehensive list of methods and Sec.~\ref{sec:fibration-types} for related definitions.
\label{fig:fibration-trees-as-unification}}
\end{figure*}

However, those hierarchical representations for multi-robot motion planning have so far been treated separately and lack a unified framework.
This is problematic.
First, when a user wants to change the hierarchical representation, they would also need the change the planner associated with it. This is inconvenient and makes it difficult to benchmark multi-robot planners, because we do not understand if the change in hierarchical representation or the change in the planner contributes to a lower runtime.
Second, it is currently not possible to switch hierarchical representations seamlessly without changing the planner. For example, when users understand a planning problem better and want to change the representation to improve runtime, they would also need to change the planner itself, because planners in the literature are often tailor-made for a particular representation (e.g. rapidly-exploring random quotient-space trees (QRRT)~\citep{Orthey2024IJRR} for sequential prioritizations or discrete-RRT (dRRT)~\citep{Shome2020} for decoupled representations).
Finally, there is no unified way to create hierarchical representations for multi-robot motion planning problems in the Open Motion Planning Library (OMPL)~\citep{Sucan2012}. This makes it difficult to select, compare, and evaluate different representations to multi-robot motion planning and to improve upon them in the robotics community.

To close those gaps, we propose the framework of fibration trees. Fibration
trees are trees consisting of state spaces as nodes and
fibrations~\citep{hatcher2002algebraic} as edges, whereby a fibration models a
projection from a
higher-dimensional space to a lower-dimensional (or simplified) space. An example is the Hopf
fibration~\citep{hopf1931abbildungen} (see Fig.~\ref{fig:pullfigure}), which is a fibration from $\S^3$ to $\S^2$,
which we use as a simplification of the space $\SO{3}$ by the space $\S^2$ (see
details in Appendix~\ref{appendix:hopffibration}). In
this example, we leverage the fibration by planning on $\S^2$
to create a tree on the sphere (visualized in lower right corner). By using this
tree, we then build up the space of
points on $\S^3$ which project onto the tree (called a restriction). By sampling exclusively this restriction, we can bias sampling towards the existing tree structure, which is similar to the way an admissible heuristic prioritizes paths which have lower cost on the simplified space~\citep{Orthey2019ISRR}. The resulting planned motions lie on $\SO{3}$ and represent those motions which project onto the existing tree on $\S^2$. Simplifications like this are helpful to derive admissible heuristics which have been shown to improve runtime significantly~\citep{Orthey2019ISRR}.

By developing fibration trees, we directly contribute to multi-robot motion planning by integrating hierarchical representations into a single, unified framework. This works as follows. Existing representations are analyzed and cast as specific fibration types. Prioritized planning is represented as sequential fibrations; Decomposition-based planning is represented as parallel fibrations; Task-space projections are represented as partial fibrations, which are projections which might have empty lifts.
This is shown in Fig.~\ref{fig:fibration-trees-as-unification}, where the three categories of fibrations are shown together with algorithms which use those representations to simplify planning problems. In each case, users of the algorithm have to (implicitly) make a decision about which representation to use for a given problem. We make this decision explicit by requiring users to explicitly construct fibration trees, which requires users for example to decide between a prioritization or a decomposition.

To efficiently exploit a given fibration tree, we propose a new algorithm, the Rapidly-Exploring Random Fibration Trees method (\FibrationRRT). \FibrationRRT can solve motion planning problems with an added structure like projection-based planning, decomposition-based planning, or planning with task-space projections. On top of that, it can also deal with any combination of projections, for example when users have domain knowledge and want a custom hierarchical representation.
To implement \FibrationRRT, we propose an extension to the OMPL library, such that multi-robot motion planning problems can be defined both using prioritization-based approaches, decomposition-based planning, using task-space projections, or using a combination of them. This opens the way towards more robust, comparable multi-robot planning algorithms.

To summarize, this paper makes the following contributions:
\begin{enumerate}
    \item We define the framework of fibration trees unifying decomposition-based frameworks, projection-based frameworks, and task-space projections.
    \item We develop a novel planner, \FibrationRRT, which combines elements
      from dRRT \citep{Solovey2016}, and QRRT \citep{Orthey2024IJRR} to exploit fibration trees and we show that \FibrationRRT is probabilistically complete.
    \item We evaluate \FibrationRRT on $32$ scenarios divided into $5$ benchmarks, involving comparisons to classical planners, prioritization-based planners, decomposition-based planners, and task-space planners on different multi-robot planning scenarios.
    \item We implement \FibrationRRT in the open motion planning library (OMPL)
      \citep{Moll2015, Sucan2012} with a front-end based upon the Dynamic
      Animation and Robotics Toolkit (DART)~\citep{Lee2018} to create fibration trees. 
\end{enumerate}

To our knowledge, this is the first time that one single planner (\FibrationRRT) can solve all scenarios by taking the different projection types into account. We thereby demonstrate that fibration trees provide a unified framework for multi-robot planning algorithms.

\begin{figure*}
    \centering
    \includegraphics[width=\textwidth]{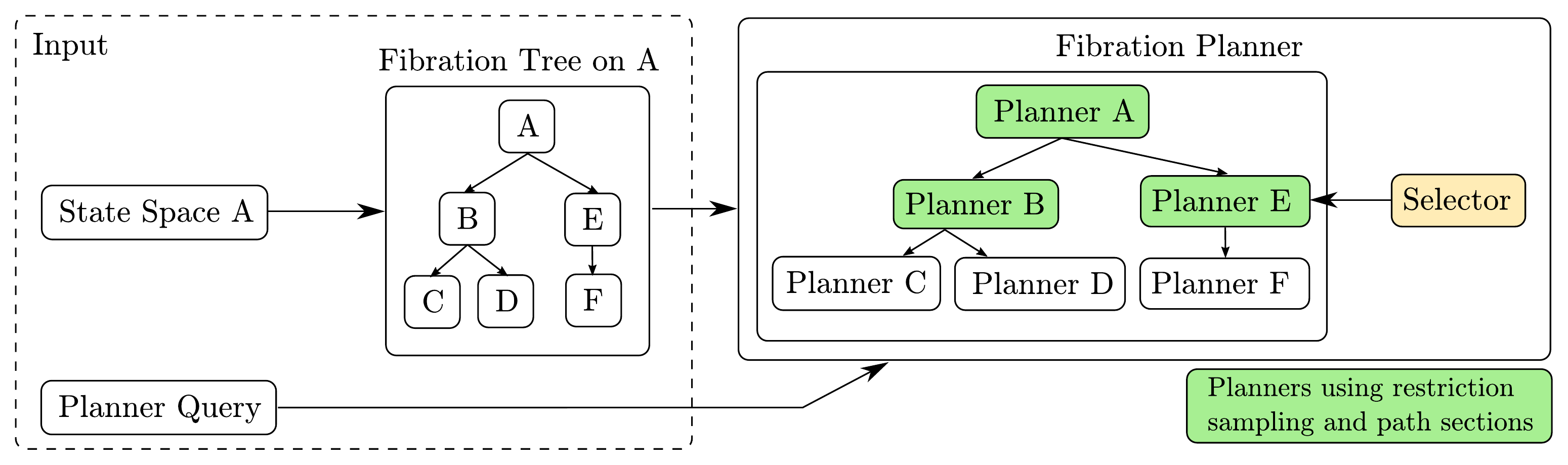}
    \caption{Overview about planning with fibration trees. Given a state space $A$, a fibration tree formalizes abstractions imposed onto $A$. This fibration tree contains, as an example, additional state spaces $B, C, D, E, F$ which are simplifications of the original state space $A$. A number of projections are attached as edges between state spaces, which describe how state spaces are related. Given such a fibration tree and a planner query, a fibration planner is created, which attaches an individual planner to each state space, and uses a selector to choose which planner to run. The non-leaf nodes in the fibration tree have attached planners which can plan efficiently using restriction sampling (see Sec.~\ref{sec:unified-restriction-sampling}) and path sections (see Sec.~\ref{sec:unified-path-section-search}).\label{fig:system}}
\end{figure*}

\section{Related Work\label{sec:related-work}}
{
\newcolumntype{Y}{>{\centering\arraybackslash}m{.001\linewidth}}
\newcolumntype{T}{>{\centering\arraybackslash}m{.6\linewidth}}
\newcolumntype{L}{>{\raggedright\let\newline\\\arraybackslash\hspace{0pt}}m{.63\linewidth}}

\def\customIndent{\hspace{3mm}}
\begin{table}[t]
    \vspace{1em}
\caption{Comparison of planners using projections to solve motion planning problems. For each planner, we note if a certain property exists (green) or does not exists (grey). Sec.~\ref{sec:fibration-axioms} for the definition of projections.}
    \footnotesize
    \center
    \begin{tabularx}{\linewidth}{| L | *{6}X |}
         \hline
         Method & 
         \rotatebox{90}{Admissible} &  
         \rotatebox{90}{Liftable} &  
         \rotatebox{90}{Non-Euclidean} &  
         \rotatebox{90}{Non-trivial} &
         \rotatebox{90}{Partial} &
         \rotatebox{90}{Decompositions} \\
         \hline
        \textbf{Parallel Fibrations} (Decompositions) & \multicolumn{6}{c|}{}\\
        \customIndent  M* ~\citep{Wagner2015}
        & \markYes & \markYes & \markYes & \markNo & \markNo & \markYes\\
        \customIndent Discrete RRT (dRRT*)~\citep{Shome2020}
        & \markYes & \markYes & \markNo & \markNo & \markNo & \markYes\\   
        \customIndent Fast-dRRT*~\citep{Solano2024}
        & \markYes & \markYes & \markNo & \markNo & \markNo & \markYes\\   
        \textbf{Partial Fibrations} & \multicolumn{6}{c|}{}\\
        \customIndent XXL~\citep{Luna2020XXL}
        & \markYes & \markYes & \markYes & \markNo & \markYes & \markNo\\
        \customIndent KPIECE~\citep{Sucan2011}
        & \markYes & \markYes & \markYes & \markNo & \markYes & \markNo\\
        Path-Velocity Decomposition& \multicolumn{6}{c|}{}\\
        \customIndent AVP-RRT~\citep{pham2017admissible}
        & \markYes & \markYes & \markNo & \markNo & \markYes & \markNo\\
        \textbf{Sequential Fibrations}& \multicolumn{6}{c|}{}\\
        \customIndent Hierarchical FMT*~\citep{Reid2020}
        & \markYes & \markYes & \markNo & \markNo & \markNo & \markNo\\
        \customIndent Possibility Graphs~\citep{Grey2017Footstep}
        & \markYes & \markYes & \markNo & \markNo & \markNo & \markNo\\
        \customIndent Quotient-space~\citep{Orthey2018IROS} 
        & \markYes & \markYes & \markYes & \markNo & \markNo & \markNo\\
        \customIndent Fiber Bundles~\citep{Orthey2024IJRR} 
        & \markYes & \markYes & \markYes & \markYes & \markNo & \markNo\\
        \hline
        Fibration Trees (\textbf{this paper}) 
        & \markYes & \markYes & \markYes & \markYes & \markYes & \markYes\\
        \hline
    \end{tabularx}
\end{table}
}

Fibration trees form a framework for sampling-based motion planning, which is
based upon and extends prioritization-based and decomposition-based planning. Moreover, it integrates task-space and workspace
projections to create a unified framework. To put fibration trees into the wider scientific context, we give a
brief overview about sampling-based multi-robot motion planning and review the underlying methods of prioritization-based and decomposition-based planning.
Finally, we discuss how fibration trees are related to learning-based
approaches in multi-robot motion planning.

\subsection{Sampling-based Multi-Robot Motion Planning}
Sampling-based motion planning~\citep{Orthey2023AnnualReview} is an approach to
motion planning~\citep{Kavraki2016Motion, Choset2005PrinciplesOfRobotMotion, lavalle_2006}, whereby motions are found through random
sampling of robot configurations. Early works in the field formulated this
problem in terms of planning through configuration
spaces~\citep{Lozano1979,Lozano1983}, whereby sampling-based methods like
graph-based approaches~\citep{Kavraki1996} and tree-based
approaches~\citep{Kuffner2000, Hsu1999EST} have become the de facto standard way to
conduct planning~\citep{lavalle_2006, Orthey2023AnnualReview}.

Multi-robot motion planning is a generalization of single-robot motion planning,
whereby planning is conducted for multiple disconnected kinematic chains. 
This
can be formulated as planning in a composite configuration
space~\citep{Schwartz1983PianoMoversIII}. However, since the complexity of
motion planning scales
exponentially with the number of
dimensions~\citep{Reif1979,hopcroft_1984,Canny1988complexity}, research on multi-robot motion
planning has focused on a smarter fragmentation of the composite configuration
space, such that planners can operate more efficiently. For some specific cases,
like homogeneous robot teams, this can be accomplished by reducing the problem
to the pebbles-on-a-graph problem~\citep{kornhauser_1984}, which can for example be solved using methods like push and swap~\citep{luna_2011},
linear programming~\citep{yu_2016}, subgraph partitioning~\citep{ryan_2010}, or
conflict-based search (CBS)~\citep{sharon2015conflict, hoenig_2018,
moldagalieva2024db}.
Our work explicitly tackles the heterogeneous case, where robot teams can have different compositions. 

For heterogeneous robot teams, most
planning approaches can be divided into either prioritized or decomposition-based
planning. Prioritization-based
approaches~\citep{Schwartz1983PianoMoversIII,erdmann_1987} plan first
for a single robot, then use the resulting path as a constraint for the next robot, until we
accounted for all robots~\citep{Orthey2024IJRR}. This creates a sequential planning problem where the
ordering of the robots has to be prespecified~\citep{VanDenBerg2005Prioritized} or optimized~\citep{Bennewitz2001PrioritizedPathPlanning}.
Decomposition-based
(or decoupled)
planning~\citep{svestka1995coordinated,van2009centralized,Solovey2016} differs by
planning in parallel for each robot, and then leveraging the results to compute
a solution in
the composite configuration space. 
We review both approaches and discuss their individual trade-offs.

\subsection{Prioritization-based and Projection-based planning}

Instead of planning in the composite state space, it is often advantageous to
plan motions sequentially, where the robots are ordered and planning is
prioritized~\citep{erdmann_1987,Tournassoud1986MRS,VanDenBerg2005Prioritized}.
Prioritized motion planning is particularly effective if robots are near-independent and do not block each other~\citep{Orthey2024IJRR}.

It is often crucial to choose the right ordering of robots to avoid any deadlocks~\citep{heselden2023heuristics}. While most methods utilize a fixed ordering~\citep{VanDenBerg2005Prioritized}, one can also search over the space of all robot orderings to find an order which leads to a collision-free path~ \citep{ma2019searching}. Methods such as hill-climbing can be utilized~\citep{Bennewitz2001PrioritizedPathPlanning}, which swap two robots in the order on each iteration step.

Once a prioritization (an ordering of robots) has been chosen, there are multiple (heuristic) ways to efficiently
compute solutions and resolve conflicts.
A frequently used approach is based on the analysis of the robot workspace. By using the workspace of the robot, one can for instance compute an environment skeleton~\citep{McBeth2023MRMPTopologicalGuidance}. This skeleton can then be used to quickly reroute robots if conflicts arise~\citep{zhang5219981hierarchical}. Another method is based upon homotopy classes~\citep{bhattacharya_2018}. Here, robots compute path prospects (paths in different homotopy classes) to avoid and resolve conflicts on the fly~\citep{wu2020multi}. Another strategy is to use distributed planning, where robots plan in parallel 
\citep{velagapudi2010decentralized} and conflicts are resolved according to priority. Some research has also focused on using approximations of the area swept by a lower-priority robot to improve planning time for subsequent higher-priority robots~\citep{keppler2020prioritized}. 

Often it is even possible to generate sufficient conditions
on the solvability of prioritized motions~\citep{vcap2015prioritized}. This involves low-priority robots avoiding the start configurations of higher-priority robots, while higher-priority robots avoiding the goal configuration of the lower-priority robots. If such paths are found, higher-priority robots could just wait until lower priority robots have reached their goal. This would be a sufficient condition for a solution.
Our work is complementary by providing a unified framework in which those heuristics can be implemented and compared.

\subsubsection{Projection-Based Planning}
Prioritization-based approaches for multiple robots are also closely related to projection-based planning for single robots.
While prioritization-based approaches simplify problems by removing robots, projection-based approaches simplify problems by removing degrees of freedom~\citep{Orthey2018IROS,Orthey2019ISRR}. This is known in the literature under different names like multi-layered planning~\citep{Vidal2019}, quotient-space
planning~\citep{Orthey2018IROS,Orthey2019ISRR}, or
subspace projections~\citep{Reid2019, Reid2020}. 

A widely used method to solve projection-based problems more efficiently is the path section method (also called root- or lead-path
method)~\citep{wermelinger2016navigation, Tonneau2018, Vidal2019,
Orthey2021TRO}. This method works by first computing a solution on a simplified
space (a base path). Using this base path, the search space is restricted to
allow only solutions "above" the base path, i.e. solutions which lie in the path
restriction (see our terminology in Sec.~\ref{sec:restrictions-and-sections}). 
This can speed up planning problems significantly~\citep{Orthey2019ISRR}. Another popular
method leverages probabilistic roadmaps (PRM)~\citep{Kavraki1996} by using it as
a method to solve individual spaces~\citep{Orthey2018IROS}.
This is similar to subspace projections, a method which has been used to extend
the fast marching trees (FMT)~\citep{Janson2015FMT} planner to create the hierarchical fast
marching trees (HFMT*) planner~\citep{Reid2019, Reid2020}. HFMT* can optimally
solve sequential fibrations in euclidean spaces. Our method is similar in that we can integrate projections as input while implementing a unified path section method (see Sec.~\ref{sec:unified-path-section-search}). 

Fibration trees are also closely related to the fiber bundle
approach~\citep{Orthey2024IJRR}. 
This approach used quotient-spaces to create
sequential projections. 
Such projections can be efficiently exploited using
dedicated planners like QMP~\citep{Orthey2018IROS}, QRRT~\citep{Orthey2019ISRR}, QMP*, and
QRRT*~\citep{Orthey2024IJRR}. 
Our approach generalizes this approach to
provide a unified perspective including not only sequential, but also parallel and partial fibrations~\citep{Sucan2011}. 

\subsection{Decomposition-based planning}

An efficient method to solve multi-robot motion planning problems are
decomposition-based planners~\citep{simeon_2002}. Instead of planning in the
combined state space of all robots, decomposition-based planners solve first the
state spaces of each individual robot (while ignoring the other robots), and then lift those solutions into the
combined state space (while checking collisions in-between robots). This lifting
action can either use path coordination, where only the solution paths are
considered~\citep{simeon_2002}, or use graph coordination, where all graphs on
the individual spaces are considered~\citep{svestka_1998}.

Since graph coordination leads to probabilistically complete planners~\citep{svestka_1998}, it has
been the main approach for decomposition-based planning.
Several
planners have been proposed for graph coordination, such as
M*~\citep{Wagner2015}, which extends the A* planner, by using the individual
solutions as admissible heuristics to guide the search  
A similar idea is used in the discrete RRT (dRRT)~\citep{Solovey2016} and dRRT*
planner~\citep{Dobson2017scalable, Shome2020}, which improves upon M* by using a more
efficient neighbor evaluation function (oracle) for Euclidean spaces.

The dRRT planner has been particularly successful and has been extended by
multiple works. For example, one can combine it with a conflict-based search
where a sampling-based motion planner is spawned in a subspace whenever a conflict between two robots arises~\citep{solis2024adaptive}. 
For lower dimensional spaces, it can also be helpful to use lattice-based roadmaps~\citep{Parque2023LatticeGuidedDiscreteRRT} or simultaneous sampling~\citep{Okumura2023CombiningSamplingAndSearch} on the individual spaces.

Recently, planners have been developed which pursue a hybrid approach. Scheduling to Avoid Collisions (StAC)~\citep{guo2026stac} is such a planner which uses a hybrid strategy between coupled (prioritization-based) and decoupled (decomposition-based) approaches. They plan motions for each individual robot, then do path coordination with random pauses to find a valid solution for all robots. If this fails, the individual planners are updated with this information and planning continues until a valid solution is found.  

Our work is complementary to approaches like dRRT or StAC by providing a unified framework
in which decomposition-based or hybrid planning can be implemented as a special case.
Since our framework can handle
any fibration, it allows us to compare decomposition-based
planners~\citep{Solovey2016, Shome2020} with prioritization-based
approaches---to understand which approach is more appropriate for a given task.

\subsection{Learning-Based Approaches to Multi-Robot Planning}

While this work focuses on a unified framework for sampling-based multi-robot motion planning, recent research has focused on learning-based multi-robot motion generation.  
Promising methods are diffusion models~\citep{shaoul2025multirobot,liang2025simultaneous,jiang2023motiondiffuser}, which are networks which learn trajectories by gradually adding noise to the input data. Other notable learning-based directions include hierarchical reinforcement learning approaches~\citep{bettini2023heterogeneous,bettini2024benchmarl,sebastian2025physics,wu2024state}, imitation learning~\citep{sun2025realtime}, and dedicated multi-robot reinforcement-learning environments~\citep{hu2023neuronsmae}.
Closely related to our work are also quotient-based Markov decision processes (Quotient-MDP)~\citep{Welde2025symmetry}, which are ways to train a policy on a lower-dimensional quotient-MDP which is then lifted using a tracking controller to the original space. This is particularly effective for homogeneous robot teams, since robots can
share neural network weights between themselves.

While learning-based approaches have shown significant improvements, they require large datasets to train a policy. Our approach is complementary in that we provide tools to provide abstractions and methods to efficiently solve those problems. Our methods can then be leveraged to generate data points to learn and extend robot policies~\citep{ha2020multiarm, lai2025roboballet}.

\section{Multi-Robot Motion Planning}

Let $X = Y_1 \times \cdots \times Y_M$ be the composite state space of $M$
robots with individual state spaces $Y_m$. Each state space $X$ is defined by the
set of configurations a robot can attain plus the following structures:
\begin{itemize}
  \item Distance metric $d: X \times X \rightarrow \R_{\geq 0}$ which measure
    the distance between two states.
    \item Steering function $f$ which extends path segments from a state $x \in
      X$ towards another state $y \in X$. 
    \item Boolean constraint function $\phi: X \rightarrow \{0,1\}$ which
      evaluates to $0$ if a state $x$ is constraint free and $1$ otherwise.
    \item Sampling sequence $s = \{x_1, x_2, \ldots\}$ which produces a uniform, infinite set of (dense) samples on $X$.
\end{itemize} 

The constraint function $\phi$ decomposes the state space into two regions,
whereby the states for which $\phi$ evaluates to true are called \emph{feasible
states} and $X_f = \{ x \in X\mid \phi(x) = 0\}$ denotes the free state space.

Let $(x_I, X_G)$ be a planner query consisting of an initial state $x_I \in
X_f$ and a goal region $X_G \subseteq X_f$. A motion planning problem is then defined
as the tuple $(X, x_I, X_G)$ which asks us to find a path (a continuous mapping
$p: [0,1] \rightarrow X_f$) from the start state to the goal region. 

\section{Fibration Trees\label{sec:fibration-trees}}
In this section, we introduce the concept of fibration trees. A fibration tree is a user-defined structure, which can be imposed on the state space and which serves as a hierarchical representation of it.
Fibration trees can be exploited by a dedicated planner like \FibrationRRT (see Sec.~\ref{sec:fibrationrrt}) to efficiently search the state space and lower runtime significantly. Examples of fibration trees include decompositions of the state space, prioritizations of robots in multi-robot planning, or task-space projections.

Let us define fibration trees more precisely. Let $X$ be a state space. 
A fibration tree on $X$ is a directed tree $\tree_X = (V, E)$ consisting of
nodes $V$ and edges $E$. Nodes represent state spaces together with an associated node planner, internal datastructure (tree, graph), and initial state and goal region. The root node
contains the original state space $X$. Edges represent fibrations, which are
tuples $(A, B, \pi_{AB})$ (written as $\pi_{AB}: A \rightarrow B$ or just $A
\rightarrow B$), consisting of a state space $A$ (the total space of a
fibration), a state space $B$ (the base space of a fibration), and a projection $\pi_{AB}$ which maps elements of $A$ to elements of $B$. Those concepts are summarized in Table.~\ref{tab:notations}. The fibration mapping have to fulfill two axioms, which we detail next.

\subsection{Projection axioms\label{sec:fibration-axioms}}

Let $\pi_{AB}: A \rightarrow B$ be a fibration. To be useful for motion planning, a fibration has to be liftable and admissible. 

\subsubsection{Liftable}

A fibration is liftable if its projection has an inverse. 
Given the projection $\pi_{AB}$ mapping elements of $a \in A$ to $b \in B$, we
say that $\pi_{AB}$ is liftable, if $\pi_{AB}$ has an inverse $\pi_{AB}^{-1}$
mapping an element $b \in B$ to an element $a \in A$. 

\textbf{Raison d'être:} Motion planning with non-liftable projections would prevent us from transfering information upwards in a fibration tree. 

\newcolumntype{Z}{>{\centering\arraybackslash}m{.25\linewidth}}
\begin{table}
    \centering
    \begin{tabularx}{\linewidth}{|Z|X|}
        \hline
       Name  & Description \\
       \hline
        Node & A structure including a state space, a local planner, a datastructure (tree), plus a local planning problem including an initial state and a goal region.\\
        Edge & Representing a Fibration between nodes.\\
        Node Space & The corresponding space associated to a node.\\
        Node Planner & The corresponding planner associated to a node.\\
        Root Node & The root of a fibration tree\\
        Fibration $\pi: A \rightarrow B$ & A mapping between a node $A$ (total space) and a node $B$ (base space)\\
        Total Space & The domain of a fibration\\
        Base Space & The co-domain of a fibration\\
        \hline        
    \end{tabularx}
    \caption{Notations used in this paper.}
    \label{tab:notations}
\end{table}

\subsubsection{Admissible}

A projection is called admissible, if it preserves validity. A projection $A \rightarrow B$ is admissible, if, for any $a \in A$, $\phi(a) \Rightarrow \phi(\pi_{AB}(a))$~\citep{Orthey2019ISRR}. This means that every feasible element $a \in A$ will be projected onto a feasible element $b \in B$. 

\textbf{Raison d'être:} Motion planning with non-admissible projections would prevent us from keeping theoretical guarantees like probabilistic completeness.

\begin{figure*}
    \centering
    \begin{subfigure}[t]{0.33\textwidth}
        \vskip 0pt
        \centering
        \includegraphics[align=c,width=\textwidth, height=0.9\textwidth]{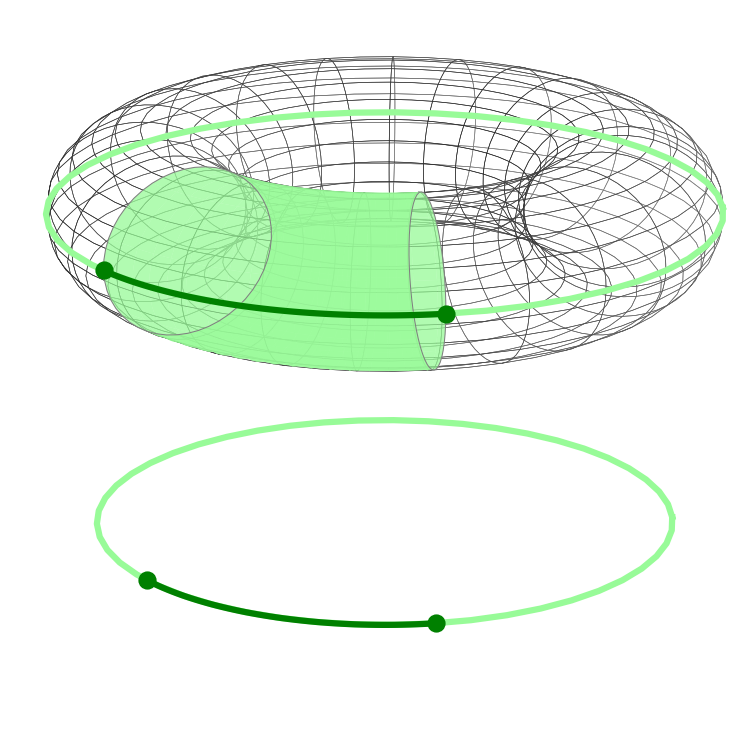}
        \caption{Sequential Fibration}
    \end{subfigure}
    \begin{subfigure}[t]{0.33\textwidth}
        \vskip 0pt
        \centering
        \includegraphics[align=c,width=\textwidth, height=0.9\textwidth]
        {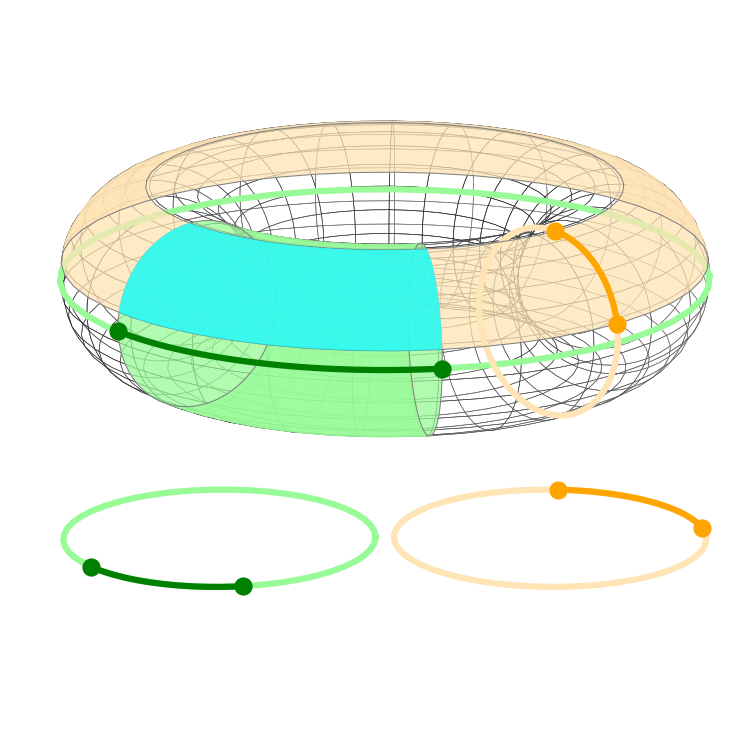}
        \caption{Parallel Fibration}
    \end{subfigure}
    \begin{subfigure}[t]{0.33\textwidth}
        \vskip 0pt
        \centering
        \includegraphics[align=c,width=\linewidth, height=0.6\linewidth]{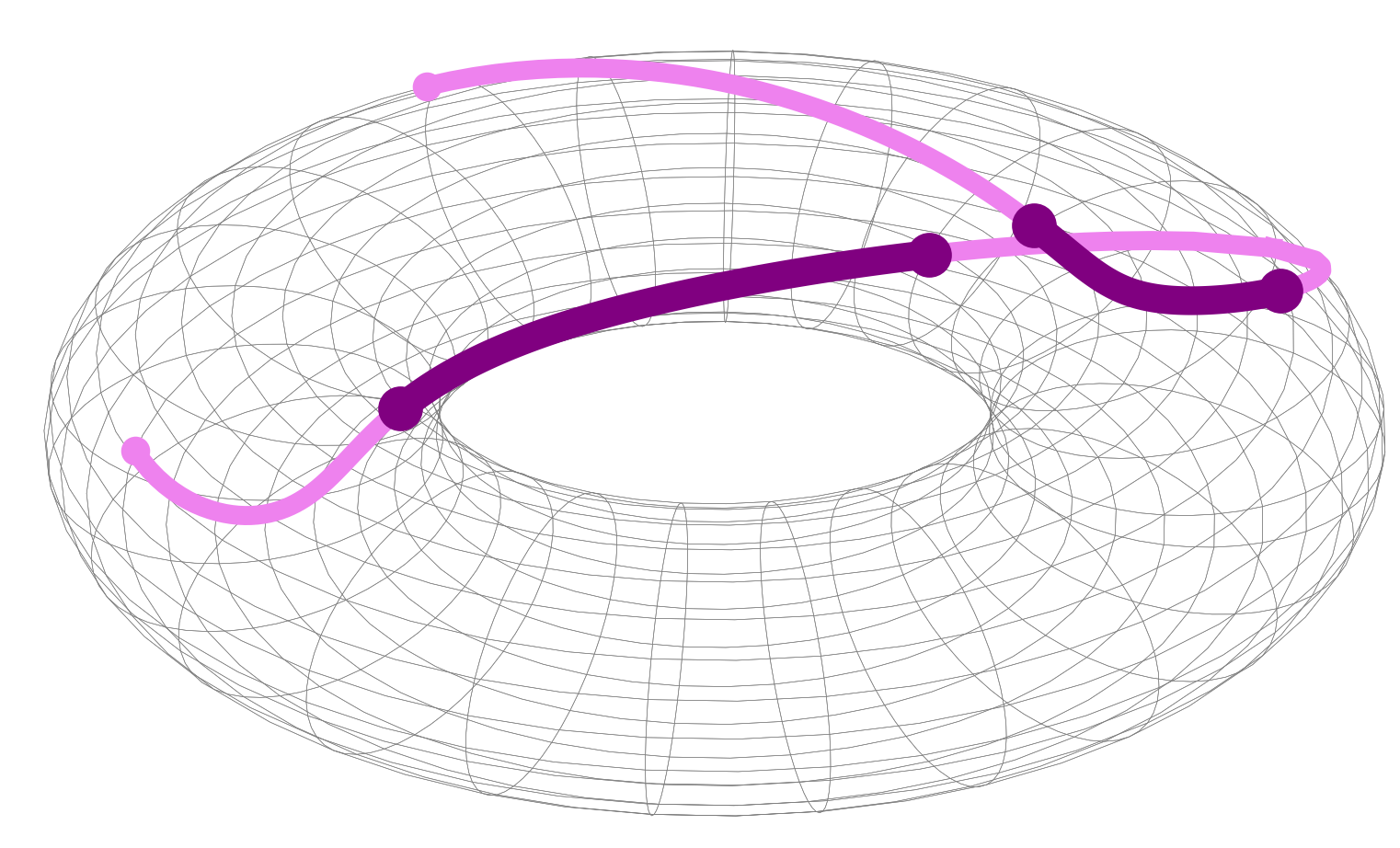}
        \includegraphics[align=c,width=\linewidth, height=0.3\linewidth]{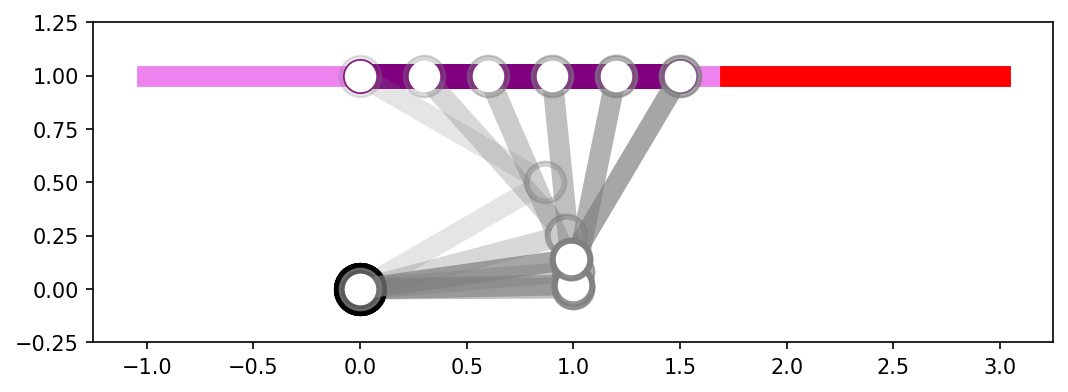}
        \caption{Partial Fibration}
    \end{subfigure}

    \caption{Examples of the different fibration types we consider in this paper as visualized on the torus $T^{2}$. \textbf{Left:} Sequential fibration with a projection $T^2 \rightarrow S^1$. The circle (green) has a path segment (darkgreen), which is lifted to a path restriction (green, on torus). \textbf{Middle:} A parallel fibration of $T^{2}$ to two circle spaces, one for the major axis (green), one for the minor axis (orange) of the torus. Two path segments are shown and their corresponding path restrictions on the torus. The intersection of both path restrictions (cyan) is shown on the torus. \textbf{Right:} A partial fibration $T^{2} \rightarrow R^1$ of the torus to a line segment in the workspace of a 2-dof robot. Note that there are segments not liftable (red) in the workspace. The path on the torus (light purple) is a two-on-one mapping, since every point on the depicted workspace path (dark purple) has two IK solutions (corresponding to the dark purple segments on the torus).}
    \label{fig:fibration_types}
\end{figure*}

\subsection{Fibration Types\label{sec:fibration-types}}

While many fibrations are liftable and admissible, there are three types which
are particularly interesting for motion planning applications. Those are
sequential, parallel, and partial fibrations. 
Those fibrations types are visualized in Fig.~\ref{fig:fibration_types}, and further detailed below.

\subsubsection{Sequential Fibrations}

Sequential fibrations are fiber bundles~\citep{steenrod_1951, lee_2003}, which
are tuples $(A, B, \pi_{AB}, F)$ consisting of a fibration with an additional fiber $F$. A fiber is the preimage $\pi^{-1}_{AB}(b)$, i.e. the subspace of all configurations on $A$ which project onto the same point $b$ on the base space $B$.
The defining property of a fiber bundle is that $A$ is locally a product space $B \times F$. 
This means that the preimage of $\pi_{AB}$ is homotopically equivalent to the fiber, i.e. for any $b \in B$, the preimage $\pi^{-1}_{AB}(b)$ is a copy of $F$. 
For fiber bundles, the total space $A$ can be thought of as a union of identical copies of $F$ glued together by the properties of $\pi$. For details, see~\citep{Orthey2024IJRR}.

\subsubsection{Parallel Fibrations}

Parallel fibrations are product spaces~\citep{solovey_2016}, such that $A$ factors as $A \rightarrow A_1 \times \cdots \times A_M$, whereby $A = A_1 \times \cdots \times A_M$ is the product space, together with $M$ projections  $\pi_{A_1}, \ldots, \pi_{A_M}$ onto the individual factors as $\pi_{A_m} : A \rightarrow A_m$. Parallel fibrations model for example multi-robot decomposition-based planning problems, where $A$ is the compound space, and $A_1,\cdots,A_M$ are the individual robot subspaces.

\subsubsection{Partial Fibrations}

A partial fibration $(A, B, \pi_{AB})$ consists of a projection which can only lift a subset of elements of $B$, i.e. some points $b \in B$ lift to an empty set.
As an example, consider the fibration $\R^6 \rightarrow T$ of a $6$-dof manipulator arm with a projection onto a Tool-center point (Tcp) region $T \subseteq \R^3$ of the robot. For each point $b$ on the base space (Tcp region), the states projecting onto $b$ are given by the set $F(b)=\{x \in \X\mid \pi^{-1}(b)=x\}$ (the fiber of $b$). 
For the manipulator, this involves solving an inverse kinematics (IK) problem. 
However, the fiber depends on the IK solver, and can be either an empty space (no IK solution), contain a single point, contain multiple points, or even contain a subspace of IK solutions. 
This means that there is only a subspace $B' \subset B$ on which the fibration returns a solution.

\subsection{Restrictions and Sections on Fibration Trees\label{sec:restrictions-and-sections}}

Fibrations provide the useful concepts of restrictions and sections, which we summarize in the next two sections. 

Those two concepts are used in two primitives of the \FibrationRRT planner, namely the unified restriction sampling (Sec.~\ref{sec:unified-restriction-sampling}) and the unified path section search (Sec.~\ref{sec:unified-path-section-search}).

\subsubsection{Restrictions}

Given a fibration $A \rightarrow B$, and a set $U \subseteq B$, a restriction is defined as the set $\pi^{-1}_{AB}(U)$ on $A$. In this work, we make use of three main types of restrictions. 

\begin{itemize}
    \item \textbf{Fiber}: Given a point $b \in B$, we call $\pi^{-1}(b)$ the \emph{fiber} of $b$ in $A$. The fiber contains all points which, when projected onto $B$, are exactly equivalent to the point $b$.
    \item \textbf{Path restriction}: Given a path $p: [0,1] \rightarrow B$, we call the inverse of the image\footnote{Note that we make an explicit distinction between a \emph{path}, a mapping from $[0,1]$ to a space $B$, and a \emph{path image}, the image of the path, i.e. the set $p([0,1])$.} of $p$ the path restriction on $A$. It contains all points which, when projected onto $B$, are located on the path image of $p$. 
    \item \textbf{Graph restriction}: Let $G = (V, E)$ be a graph on $B$ with vertices being points, and edges being path segments. The union of all fibers and path restrictions associated to the vertices and edges is called the \emph{graph restriction}. It contains all points which, when projected onto $B$, are located on an edge or vertex of $G$. 
\end{itemize}

\subsubsection{Sections}

Let $A \rightarrow B$ be a fibration, and $R = \pi^{-1}_{AB}(U)$ be a restriction on $A$ with associated set $U$ on $B$. We define a section as a mapping $s: U \rightarrow R$ from $U \subseteq B$ to the restriction $R$. We make use of the following two section concepts.

\begin{itemize}
    \item \textbf{Lift}: Given a fiber $R$ over a base point $b$, we define a lift as the mapping $s: {b} \rightarrow R$, which selects a point in the fiber associated to $b$. 
    \item \textbf{Path section}: Given a path restriction $R$ over a base path $p$, we define a path section as $s: [0,1] \rightarrow X$ such that $\pi(s(x)) = x$ for any $b$ in $p[I]$.
\end{itemize}

This concludes our description of fibration trees, their notations, and their internal structures. Next, we describe a dedicated planner which can efficiently exploit fibration trees to improve runtime for high-dimensional multi-robot motion planning problems.
\section{Rapidly-Exploring Random Fibration Trees\label{sec:fibrationrrt}}

\def\Xbest{A}
\def\Xcur{\X_{k}}
\def\Xparent{\X_{p}}

\begin{algorithm}[t]
\algcaption{FibrationRRT}
\begin{algorithmic}[1]
  \Require Planning query $(X, \xi, \xg)$, Fibration tree $T_X$
  \Ensure Solution path on $X$
  \State $\PriorityQueue = \{\Call{GetLeafNodes}{T_X}\}$
\While{$\neg\Call{ptc}{\PriorityQueue, T_X}$}
    \State $\Xbest \gets \Call{SelectNode}{\PriorityQueue}$\Comment{Sec.~\ref{sec:planner-selection}}
  \State $\Xbest \gets \Call{PlanNode}{\Xbest}$\Comment{Sec.~\ref{sec:planning-on-a-node}}
  \State $\PriorityQueue \gets \Call{UpdateActiveNodes}{T_X, \Xbest, \PriorityQueue}$
\EndWhile
  \State \Return $G_X$
\end{algorithmic}
\label{alg:fibrationrrt}
\end{algorithm}

\begin{algorithm}[t]
\algcaption{UpdateActiveNodes}
\begin{algorithmic}[1]
  \Require Fibration Tree $T_X$, $\Xbest$, $\PriorityQueue$
  \Ensure Updated active nodes $\PriorityQueue$
  \If{$\neg\Call{HasParent}{\Xbest}$}
    \State \Return
  \EndIf
  \If{$\neg\Call{SolutionExists}{\Xbest}$}
    \State \Return
  \EndIf
  \State $\Xparent \gets \Call{Parent}{\Xbest, T_X}$
  \If{$\Xparent \not\in \PriorityQueue$}
  \State $\PriorityQueue.\Call{insert}{\Xparent}$
  \EndIf
  \State \Return $\PriorityQueue$
\end{algorithmic}
\label{alg:updateactivenodes}
\end{algorithm}

\begin{algorithm}[t]
\algcaption{SelectNode}
\begin{algorithmic}[1]
  \Require $\PriorityQueue$
  \Ensure Node from $\PriorityQueue$
  \State $\Phi \gets \emptyset$\Comment{Probability Density Function (PDF)}
  \For{$X \in \PriorityQueue$}
    \If{\Call{IsSolved}{X}}
        \If{\Call{HasNonSolvedSiblings}{X}}
            \State \textbf{continue}
        \EndIf
    \EndIf
    \State $n \gets \Call{GetDimension}{X}$
    \State $V \gets \Call{ValidNodes}{X}$
    \State $p \gets \dfrac{1}{\|V\|^{1/n}+1}$
    \State $\Phi.\Call{Insert}{\{X, p\}}$
  \EndFor
  \State \Return $\Call{SampleFromPDF}{\Phi}$
\end{algorithmic}
\label{alg:select-node}
\end{algorithm}

In this section, we describe the Rapidly-Exploring
Random Fibration Trees (\FibrationRRT) algorithm. \FibrationRRT generalizes RRT~\citep{Kuffner2000} by taking as input an additional, user-defined fibration tree as defined in Sec.~\ref{sec:fibration-trees}. Given a motion planning problem and a fibration tree, \FibrationRRT can efficiently exploit the tree to quickly solve even high-dimensional, multi-robot planning problems. We further show that \FibrationRRT is probabilistically
complete given an arbitrary fibration tree (see Appendix~\ref{appendix:proof-completeness}) and we detail how goals are set, which metric we use, and how the algorithm is implemented.

To start, we describe the high-level algorithm in Alg.~\ref{alg:fibrationrrt}. As input we use a planning query consisting of a state space $X$,  start configuration $x_I$ and a goal $X_G$ plus a fibration tree structure $T_X$. 
Our first task is to extract the leaf nodes of $T_X$ (Line 1) and add them to $\PriorityQueue$, the set of active nodes. We then iterate while the planner terminate condition (PTC) is false (Line 2). 
Inside one iteration, we run a three-step process. In step one (Line 3), we select a node from the set of active nodes. This is detailed in Sec.~\ref{sec:planner-selection}. 
In step two (Line 4), we run the planner associated to the selected node
(detailed in Sec.~\ref{sec:planning-on-a-node}).
Finally, in step three (Line 5), we update the active nodes.

The update of the active nodes is further detailed in Alg.~\ref{alg:updateactivenodes}. 
This method adds the parents of an input node A to the active nodes 
when three conditions are true: Node A needs to have a parent (Line 1--3), a solution
has to exist on node A (Line 4--6), and the parent of node A is not yet member of the set
of active nodes (Line 8). 
If those conditions are met, we add the parent to the active nodes (Line 9).

\subsection{Select Node\label{sec:planner-selection}}

Given all nodes in the queue, a selection method is used to decide which node
(and the planner therein) to execute next. 
This selection is based on an exponential importance criterion~\citep{Orthey2024IJRR}, which is defined as
\begin{equation}
    i(X_k) = \dfrac{1}{\|V_k\|^{1/n_k} + 1}\label{eq:exponential-importance-criterion}
\end{equation}
whereby $X_k$ is the $k$-th active node, $V_k$ are the number of valid vertices
in the tree of the planner on $X_k$, and $n_k$ is the dimensionality of $X_k$.
This criterion is motivated by the sampling density---the number of samples per
unit distance---on an $n$-dimensional space which scales with $V^{1/n}$, whereby
$V$ are the number of samples~\citep{Hastie2009}. Intuitively, Eq.~\eqref{eq:exponential-importance-criterion} gives a low number for a high sampling density (i.e. we sampled a node already sufficiently well) and it gives a high number for a low sampling density (i.e. we have not yet sampled a node enough).

The algorithm to select a node is depicted in Alg.~\ref{alg:select-node}. It works by building a probability distribution (Line 1) over nodes which are then sampled. We start by iterating over all active nodes (Line 2) and check if the node should be considered for selection (Line 3-7). This is a special case for parallel fibrations, i.e. whenever we have a solution for one node in a parallel fibration, we first need to solve all sibling nodes---otherwise we cannot move on to the parent node. 

After we verified that the node is selectable, we compute the exponential importance criterion from Eq.~\eqref{eq:exponential-importance-criterion} (Line 8-10) and add it to the probability distribution function (Line 11). Eventually, we sample from the probability distribution to select a node---biased towards the lower sampling density (Line 13).

It is important to note that the planner selection will never stop exploring
active nodes, and it ensures that every node is selected infinitely many times
in the limit. We say that the selection method has the \emph{Uniform Infinite
Recurrence} property. This is a requirement to prove probabilistic completeness,
as we will do in Appendix~\ref{appendix:proof-completeness}.

To obtain a classical prioritized multi-robot planner (sequential fibration)~\citep{VanDenBerg2005}, we can modify the selection by using a \emph{last-node} selection where only the last unsolved node is selected. While this has been implemented in our code, we have not used this in our experiments, since it would violate the Uniform Infinite Recurrence property and thereby remove the probabilistic completeness guarantee of the planner.

\subsection{Planning on a Node\label{sec:planning-on-a-node}}
\def\xrand{a_\text{rand}}
\def\xnear{a_\text{nearest}}
\def\xnew{a_\text{new}}

\begin{algorithm}[t]
\algcaption{PlanNode}
\begin{algorithmic}[1]
  \Require Node $A$ including initial configuration $\xi$, goal region $\xg$,
  fibration $\pi: A \rightarrow B$, and data structure $G$
  \Ensure Updated node $A$
  \If{$\Call{IsEmpty}{G}$}
  \State $G \gets \Call{PathSectionSearch}{\pi, \xi, \xg}$\Comment{Sec.~\ref{sec:unified-path-section-search}}
  \EndIf
  \While{$\neg\Call{ptc}{A}$}
    \State $\xrand \gets \Call{RestrictionSampling}{\pi}$\Comment{Sec.~\ref{sec:unified-restriction-sampling}}
    \State $\xnear \gets \Call{Nearest}{\xrand, G}$
    \State $\xnew \gets \Call{Steer}{\xnear, \xrand, A}$
    \If{$\Call{IsValid}{\xnear, \xnew}$}
    \State $G \gets G \cup \{\xnear, \xnew\}$ 
      \If{$\xnew \in \xg$}
        \State $A.\text{solved} \gets \text{True}$
      \EndIf
    \EndIf
\EndWhile
\State \Return $A$
\end{algorithmic}
\label{alg:plannode}
\end{algorithm}

The core of the \FibrationRRT planner is the \texttt{PlanNode} method. While any motion planning algorithm could be used in this function, we have chosen the RRT~\citep{Kuffner2000} method due to its versatility. 

The adjusted algorithm is shown in Alg.~\ref{alg:plannode}. 
This method is used
to find a solution on a given total space represented by a node $A$. 
From the node, we use
the (projected) initial configuration $x_I$ and goal region $X_G$,
fibration
$\pi: A \rightarrow B$, and
datastructure $G$. 
In our case this includes the tree of the RRT and
additional information about the number of runs.

Planning on a node differs from RRT in two important aspects. First, if there is
no data yet in $G$ (Line 1), we run a path section method (Line 2), where we
compute a path directly in the path restriction of the solution on $B$ (if
possible). This is
further detailed in Sec.~\ref{sec:unified-path-section-search}. Second, the
sampling function is replaced by restriction sampling (Line 5), which takes the
datastructure on the base space $B$ into account. This is detailed in Sec.~\ref{sec:unified-restriction-sampling}.
The remainder of the algorithm
follows the standard RRT method~\citep{Kuffner2000} by executing a loop while the
planner termination condition (PTC) is not fulfilled (Line 4). Inside the loop,
we sample and compute the nearest configuration in the datastructure $G$ (Line
6), steer towards it (Line 7), and add the resulting edge to the datastructure
(Line 8--9). If the new configuration has reached the goal region (Line 10), we
mark the node as solved (Line 11). Finally, we return the updated node (Line
15).

Note that the \texttt{PlanNode} is an extension of the standard
RRT~\citep{Kuffner2000} planner. This can be seen when we look at the special
case where $\pi$ is an empty projection, i.e. there is no base space. In this
case, \FibrationRRT performs exactly equivalent to RRT, whereby path section
performs no operation and restriction
sampling reverts back to uniform sampling. This is similar to
previous works, where RRT was extended to the quotient-space
RRT~\citep{Orthey2019ISRR}. However, this only worked for the special case
of quotient spaces as sequential fibrations, while our method 
represents a general approach which can also handle parallel and partial
fibrations.

\subsection{Unified Restriction Sampling\label{sec:unified-restriction-sampling}}
\begin{algorithm}[t]
\algcaption{RestrictionSampling}
\begin{algorithmic}[1]
\Require Fibration $\pi: A \rightarrow B$
  \Params Path sampling bias (Case 2), path restriction surrounding bias (Case
  2), sampling
  perturbation bias (Case 2)
\Ensure A configuration $a \in A$
\LineComment{\fbox{\textbf{Case 1}: $A$ is a leaf node or $\pi$ is not a fibration.}}
\If{$\neg$\Call{Exists}{$B$}}
    \State \Return \Call{Sample}{A}
\EndIf
\LineComment{\fbox{\textbf{Case 2}: $\pi$ is a sequential- or partial fibration.}}
\If{\Call{HasFiber}{$\pi$}}
    \State $\xb \gets \Call{SampleDatastructure}{B}$ 
    \State $x \gets \Call{SampleFiber}{\xb, \pi}$
    \State \Return $x$
\EndIf
\LineComment{\fbox{\textbf{Case 3}: $\pi$ is a parallel fibration.}}
\State $x \gets \Call{IdentityElement}{A}$
\For{$B_m \in B_{1:M}$}
    \State $\xb \gets \Call{SampleDatastructure}{B_m}$
    \State $x \gets \Call{InsertionMap}{\xb, \pi, B_m}$
\EndFor
\State \Return $x$
\end{algorithmic}
\label{alg:restrictionsampling}
\end{algorithm}

Given a fibration $\pi : A \rightarrow B$, one of the most important planning primitives is restriction sampling. 
Restriction sampling (Alg.~\ref{alg:restrictionsampling}) returns a
configuration which
lies in the restriction of the current datastructure on the base space $B$. 

Depending on the type of projection, we have to distinguish three cases. 
In the first case (Line 1--3), $\pi$ is an empty projection (i.e. the space $A$ belongs to a leaf node of the fibration tree). In this case, we uniformly sample the space $A$ and return.

In the second case (Line 4--8), if the projection has a fiber, there is an
explicit way to sample the fiber. We therefore first sample a configuration
$\xbase$ on the base space $B$ (Line 5) and then compute a fiber element to
obtain a configuration $x$ on the total space $A$. Note that partial fibrations
can have no solutions, in which case a default invalid element is returned. Note
that this case has three associated parameters. First, the path sampling bias.
This is a percentage of samples drawn from the solution path on $B$ instead of
sampling from the tree. Second, the path restriction surrounding bias, which
provides a small margin around the solution path on $B$ in which the path
sampling takes place. Finally, the sampling perturbation bias, which
additionally perturbates a sampled configuration to sample effectively in a
neighborhood of the graph restriction~\citep{Orthey2021ICRA}. See Sec.~\ref{sec:experimental-setup} for implementation values.

In the last case (Line 9--14), $\pi$ is a parallel projection. In this case we iterate over all children spaces, sample a base space element (Line 11) and apply the insertion map to map the element into the total space (Line 12). Once all states are mapped, the total space element (Line 14) is returned.

\subsection{Unified Path Section Search\label{sec:unified-path-section-search}}
\begin{algorithm}[t]
\algcaption{PathSectionSearch}
\begin{algorithmic}[1]
\Require Fibration $\pi: A \rightarrow B$, initial configuration $\xi$
  \Params Maximum branching (Case 2), maximum depth (Case 2), Restriction accuracy (Case 3), propagation attempts (Case 3), maximum
  number of permutations (Case 4)
\Ensure A (solution) path on the path restriction or a failure
\LineComment{\fbox{\textbf{Case 1}: $A$ is a leaf node or has no solution.}}
\If{$\neg$\Call{Exists}{$B$}\OR $\neg$\Call{HasSolution}{$B$}}
    \State \Return
\EndIf
\State $\pb \gets \Call{GetSolutionPath}{B}$ 
\LineComment{\fbox{\textbf{Case 2}: $\pi$ is a sequential fibration}}
\If{\Call{IsSequential}{$\pi$}}
\State $F \gets \Call{GetFiber}{\pi}$
\State $p \gets \Call{InterpolateSection}{\pb, \pi, \xi}$
\State $p \gets \Call{RecursiveSideStep}{\pb, \pi}$
\State \Return $p$
\EndIf

\LineComment{\fbox{\textbf{Case 3}: $\pi$ is a partial fibration}}
\If{\Call{IsPartial}{$\pi$}}
\State $n \gets 1$
\State $p \gets \{\xi\}$
\While{$\Call{NotEmpty}{\pb(n)}$}
  \State $a' \gets \Call{SampleFiber}{\pb(n), \pi}$
  \State $a \gets p(n-1)$
  \State $a' \gets \Call{Steer}{a, a'}$
  \If{$\neg$\Call{IsValid}{$a, a'$}}
    \State \BREAK
  \EndIf
  \State $p \gets p \cup \{a, a'\}$
\EndWhile
\State \Return $p$
\EndIf

\LineComment{\fbox{\textbf{Case 4}: $\pi$ is a parallel fibration}}

\State $\Sigma \gets \Call{GetBoundedUniquePermutations}{\pi}$
\For{$\sigma \in \Sigma$}
  \State $p \gets \Call{InterpolatePermutatedPath}{\pb, \sigma, \xi}$
  \If{\Call{IsValid}{$p$}}
    \State \Return $p$
  \EndIf
\EndFor
\State \Return

\end{algorithmic}
\label{alg:pathsectionsearch}
\end{algorithm}

Given a fibration $\pi : A \rightarrow B$, and a solution path on the base space $B$, a valuable primitive for efficient planning is the path section search~\citep{Orthey2021TRO}. 
A path section search is a dedicated local planner which searches exclusively in the path restriction for a valid path. 
In detail, given a solution path $\pb: [0,1] \rightarrow B$, the path
restriction on $A$ is defined as the set $\{x \in A\mid \pi(x) \in
\pb([0,1])\}$, i.e. all elements in $A$, which, when projected onto $B$, lie on
the path image $\pb([0,1])$ in $B$. 
The start and goal configurations for the path section search are elements of
the fibers of $\pb(0)$ and $\pb(1)$ on $A$, respectively. 
Those are either well-defined from a top-down goal, or sampleable using a bottom-up goal. 

The inner workings of the path section search are shown in
Alg.~\ref{alg:pathsectionsearch}. There are four cases we have to address.
First, when $A$ is a leaf node or no solution exists on $B$, there does not
exist a path section. In this case, we can terminate immediately (Line 1--3).
Once this case is exhausted, we can get the solution path from $B$ (Line 4) and
continue.

In the second case, $\pi$ is a sequential fibration. In this case, we can employ
a path section search like section patterns~\citep{Orthey2021TRO}. However, due
to the possible high-dimensionality of the space $A$, especially when using multiple robots, we
used the recursive side step method which we 
introduced in an earlier paper~\citep{Orthey2024IJRR}. 
This method interpolates
an L1-metric path on the path restriction (Line 6--7), and then follows it until
a constraint violation occurs. Once this happens, we compute a new L1-metric
path starting from the violated configuration and continue until no side step is
valid or we reach the goal (Line 8). The resulting path is
then returned (Line 9). This function uses two parameters, the maximum
branching (which determines how many L1 paths are created once a violated
configuration is found) and the maximum depth (which determines how many times a
branching is allowed to happen until we stop).

In the third case, $\pi$ is a partial fibration. In this case, a path
interpolation method is not directly available. Instead, we use the steering
function to move from one configuration to the next along the restriction of the
solution path. We start by initializing a path from the initial configuration
(Line 13). We then steer until we reach the last base path configuration (Line
14). For each base path configuration, we sample a random fiber element (Line
15) and steer towards it (Line 17). If this segment is not valid or we do not
reach the lifted element (Line 18), we break (Line 19) and return the path so far (Line
23). Otherwise, we add the last segment to the path and continue (Line 21). This
method uses two parameters, the restriction accuracy (which determines how close
we need to steer to a fiber element) and the number of propagation attemps (which determines
how often we repeat the steering when unsuccessful).

Finally, in the last case, $\pi$ is a parallel fibration. We found that in this
case, the most efficient way to find a path section is similar to a side step
function~\citep{Orthey2024IJRR}, but instead of interpolating an L1-metric path,
we interpolate each factor separately. For example, when there are three
factors, we would go through every permutation and find a path section for each
without moving the degrees of freedom of the other two factors. 
This method can
be generalized by first finding a set of permutations (Line 25). We cap this
number because the number of permutations grows exponentially with the number of
factors. We then iterate over all permutations (Line 26) and interpolate the
section based on the permutation (Line 27). Once this path section is found, we
check its validity and return the path in case of success (Line 28--29). If no
valid section is found, we return without a path (Line 32). This method uses one
parameter, the maximum number of permutations.

Note that this method is similar to the local connector method~\cite{Solovey2016, VanDenBerg2005Prioritized}, but uses multiple ordering permutations to increase the likelihood of finding a feasible path.

\subsection{Goal Setting\label{sec:goalsetting}}

Fibration trees support two modes of goal setting, a top-down goal and a
bottom-up goal. Those are two ways of specifying problems: In a top-down goal, you specify a single configuration as a goal which is valid for all robots. In a bottom-up goal, you only specify the goals for leaf nodes, which is useful for task-space goals (e.g. the end-effector position) or for simplified robot geometries (e.g. the base of a robot). We explain both types below.

\noindent\textbf{Top-Down Goal} For the top-down goal, a single goal
configuration is set on the root node. In this case, the goals for all nodes are defined by recursively projecting this goal configuration downwards. This provides a unique, well-defined goal configuration for each of the nodes in the fibration tree. Note that a top-down goal, if available, is the recommended choice for fibration tree planning. However, a top-down goal might not always be available or the goal has different regions, where picking a single configuration might lead to making the problem infeasible.

\noindent\textbf{Bottom-Up Goal} A bottom-up goal is defined by specifying a configuration for each leaf node in the fibration tree. This is not well-defined, since multiple root configurations can project on the same leaf-node configurations. This ambiguity can be resolved in one of two ways. Either, before planning, all the leaf-nodes are lifted to the root node state and a validity check is performed. If the state is valid it is taken as a top-down goal. Otherwise, the procedure continues until a maximum number of iterations is reached, or a valid root state is found. This method is called \emph{valid leaf-state lifting}. 

Or, as a second approach, leaf-node states can be used as sampleable goals for their parent nodes. Every goal, including the root goal, is defined implicitly by sampling leaf-node states and then lifting them upwards. Note that this differs from valid leaf-state lifting by postponing the lifting for each node towards a time when a goal state is already available for the corresponding child node. This method is called a \emph{liftable goal region}.

\noindent\textbf{Synthesis of Bottom-Up Approaches} Both bottom-up approaches have disadvantages. One disadvantage is early termination. If a problem is infeasible, a liftable goal region will need to go through all nodes and perform unneccessary computations, even if valid leaf-state lifting could quickly terminate. Another disadvantage is completeness. If valid leaf-state lifting produces a configuration, this configuration might be unreachable, and therefore make the problem infeasible---even if a solution exists. 

In practice, we therefore use a combination of both approaches to resolve their respective disadvantages. First, we use valid leaf-state lifting to guarantee that a valid goal configuration exists. Second, we start planning with liftable goal regions. This makes the planner keep the probabilistic completeness properties, while ensuring that a goal configuration theoretically exists on the root node.

\subsection{L-Infinity Metric}

In this work, \FibrationRRT and all planners in our benchmark use exclusively the L-infinity metric~\citep{atias2018effective}. In this section, we explain our reasoning for this choice.
While earlier work~\citep{atias2018effective} has not found a significant difference in the choice of metric, we have opted to use the L-infinity metric for \FibrationRRT based upon the following reasons.

\begin{itemize}
    \item \textbf{Interpretation} Thresholds for goal regions in
      high-dimensional spaces using Lp-metrics are difficult to interpret.
      However, when we use the L-infinity metric, a threshold can be interpreted as the maximum joint displacement over all DoFs of all robots. This is straightforward to define and intuitive for users of the planner.
    \item \textbf{Avoidance of Clustering} Lp-metrics consider the sum of many
      independent variables, which, by the law of large numbers, often have
      similar expected values. Most of the outcome therefore clusters around certain values~\citep{aggarwal2001surprising}. In contrast, the L-infinity metric provides the largest DoF deviation, which is independent of the number of DoFs.
    \item \textbf{More Meaningful Nearest Neighbor}. When using Lp-metrics, we could get a low value for the distance, even if one robot is having a large distance, since we average over all dimensions. However, when using the L-infinity metric, we ensure that robots are really nearby.    
\end{itemize}

Due to those reasons, we have opted to use the L-infinity metric for \FibrationRRT. To ensure that the metric is not influencing benchmarking, we use the same metric for all planners. We believe future work should investigate the influence of the metric when the dimensionality is increased significantly~\citep{atias2018effective, aggarwal2001surprising}.

\subsection{Software Implementations\label{sec:fibrationrrt:software}}

All code is open source and available as an extension of the OMPL library\footnote{https://github.com/aorthey/ompl/tree/FibrationTrees}. We plan to merge this as a multi-robot extension into the OMPL main github repository.
For evaluations and implementation of partial (task-space) projections, we wrote a framework based upon this OMPL extension. This framework is based upon the Dynamic Animation and Robotics Toolkit (DART)~\citep{lee2018dart} and is also available on github\footnote{https://github.com/aorthey/FibrationTrees}.

\setlength{\subfigheight}{0.3\linewidth}
\setlength{\scenarioheight}{0.4\linewidth}
\def\vdistance{0.7em}
\begin{figure*}[htbp]
    \centering
    
    \begin{subfigure}[b]{0.24\textwidth}
        \centering
        \includegraphics[width=\linewidth,height=\scenarioheight]{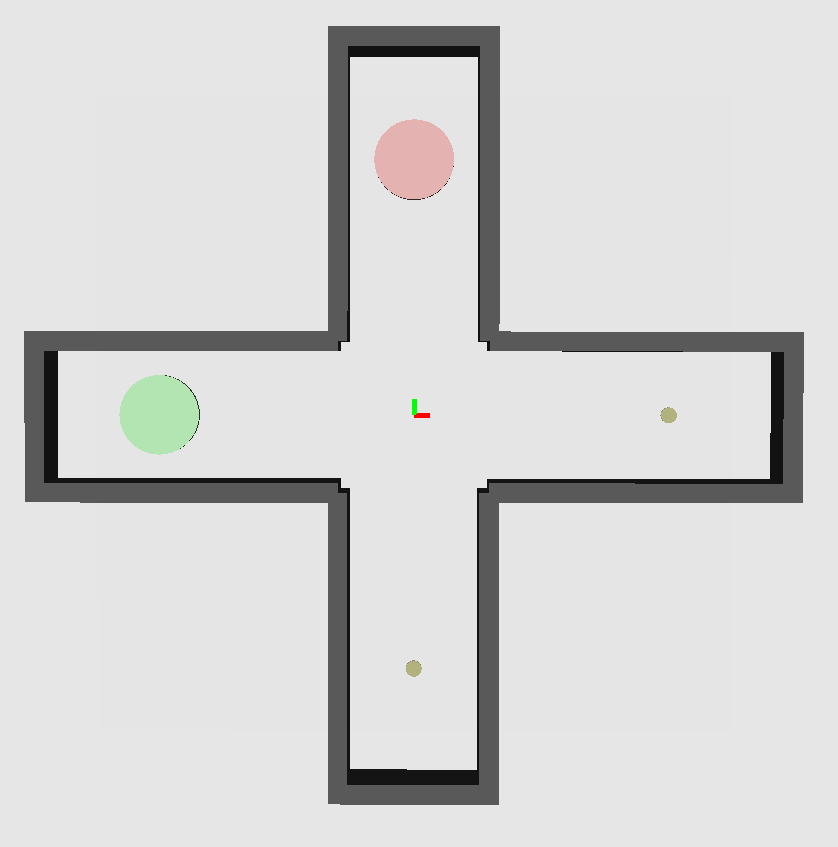}
        
        \vspace{\vdistance}
        \includegraphics[width=\linewidth,height=\subfigheight]{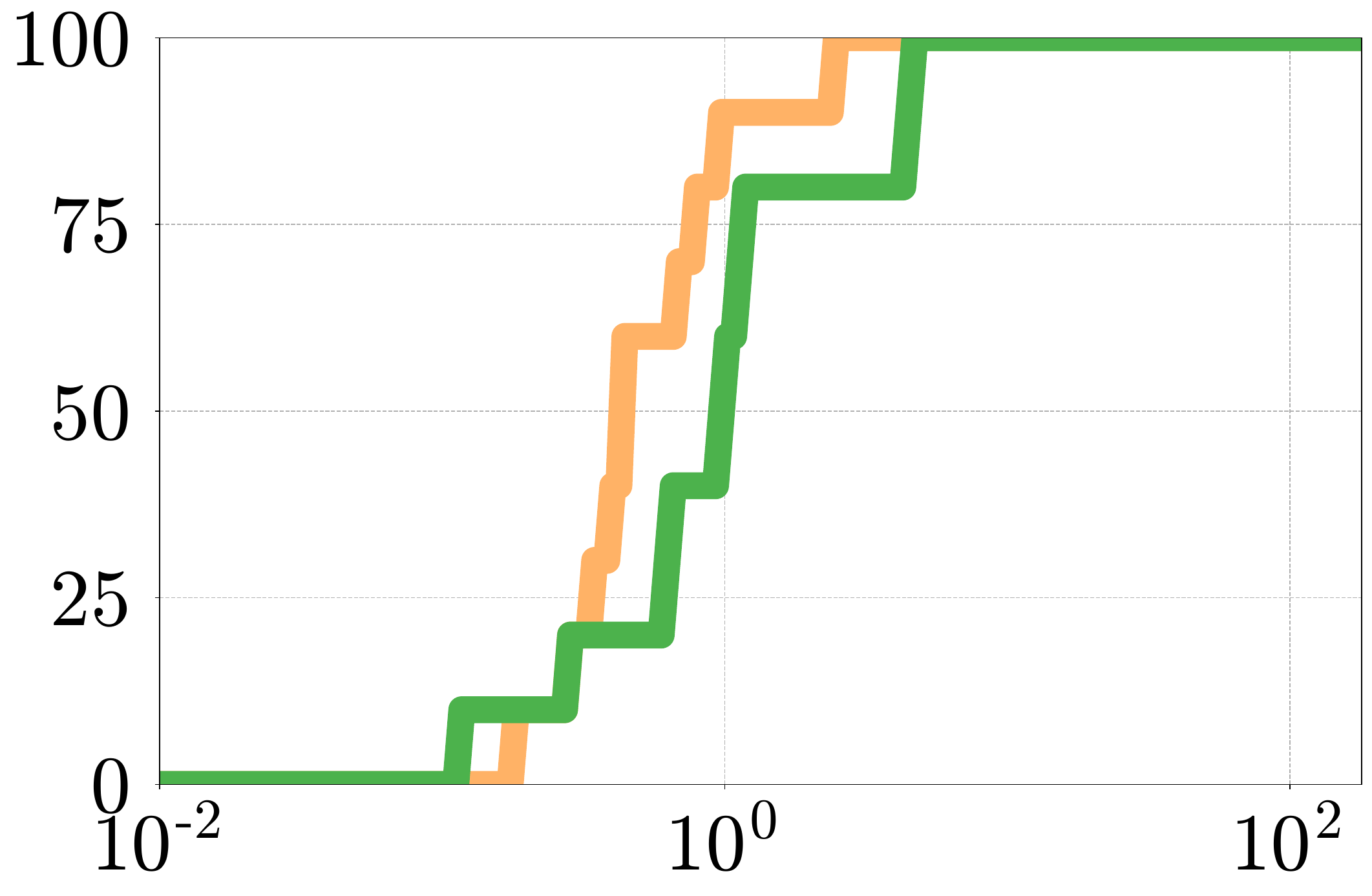}
        \caption{Disks In Crossing}
    \end{subfigure}
    \begin{subfigure}[b]{0.24\textwidth}
        \centering
        \includegraphics[width=\linewidth,height=\scenarioheight]{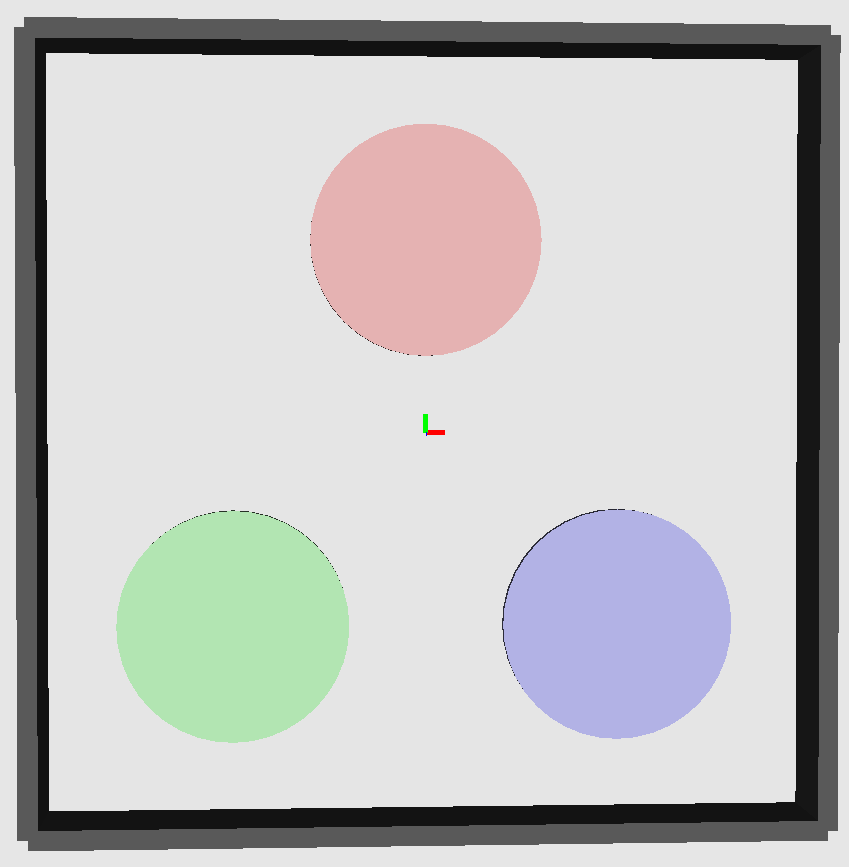}
        
        \vspace{\vdistance}
        \includegraphics[width=\linewidth,height=\subfigheight]{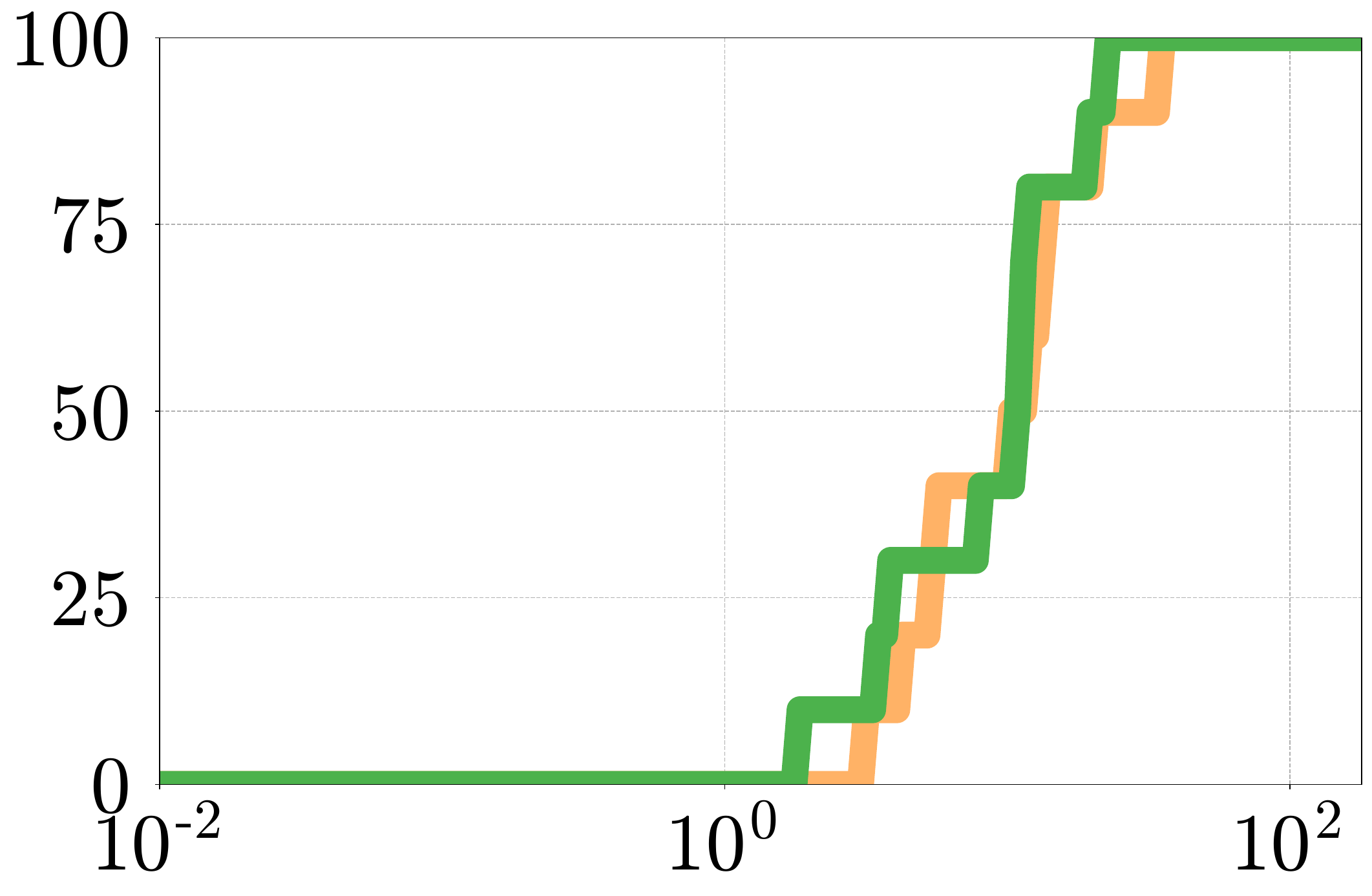}
        \caption{Disks On Square}
    \end{subfigure}
    \begin{subfigure}[b]{0.24\textwidth}
        \centering
        \includegraphics[width=\linewidth,height=\scenarioheight]{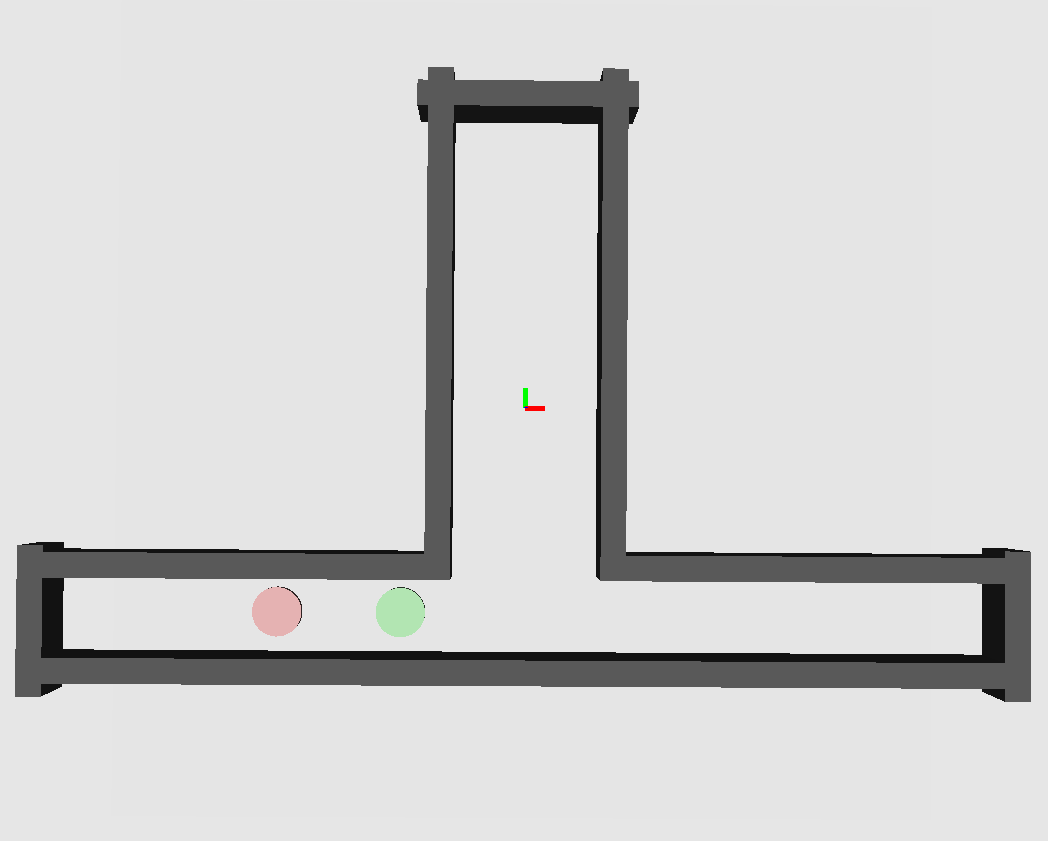}
        
        \vspace{\vdistance}
        \includegraphics[width=\linewidth,height=\subfigheight]{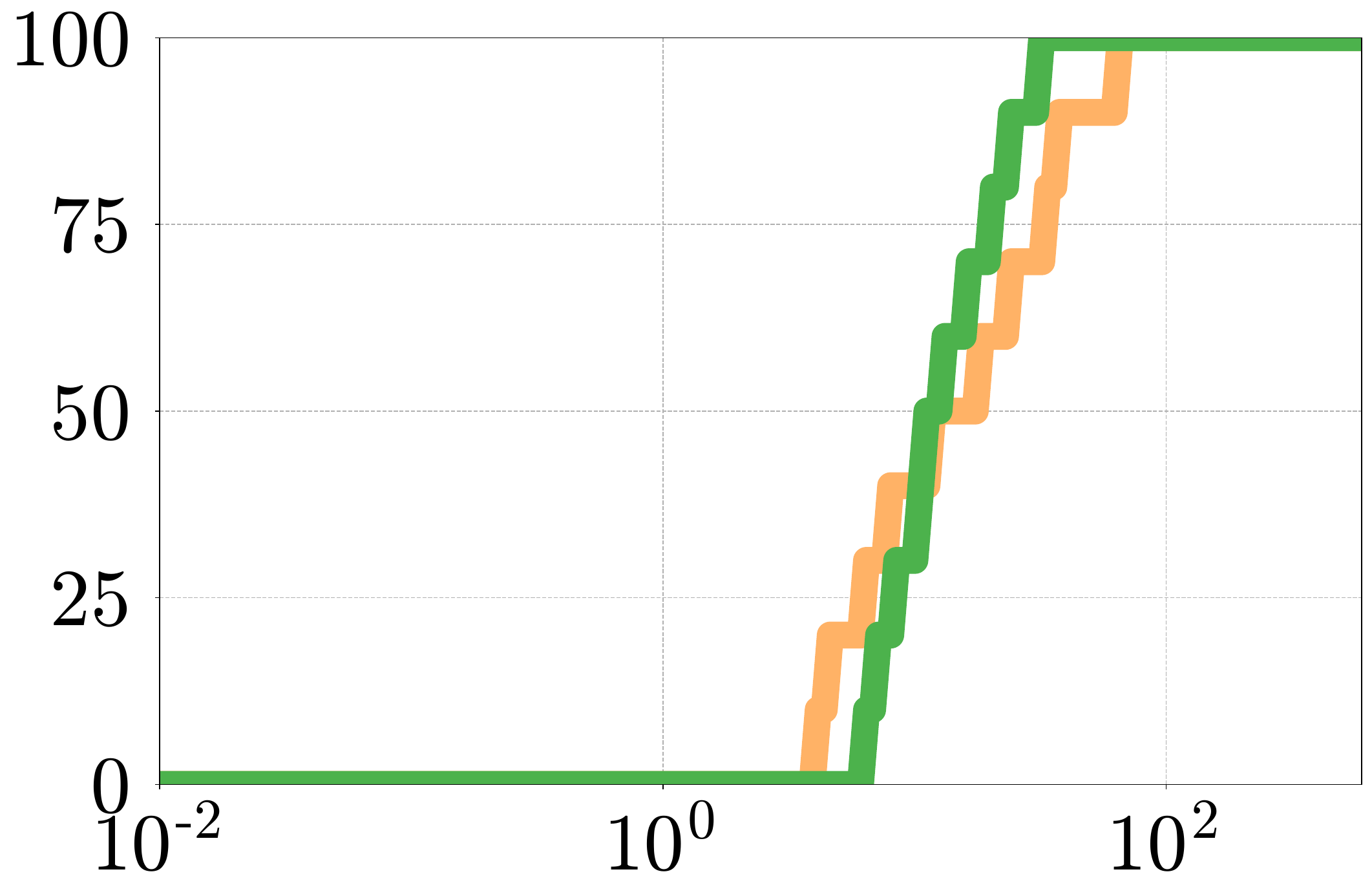}
        \caption{Disks In Tee}
    \end{subfigure}
    \begin{subfigure}[b]{0.24\textwidth}
        \centering
        \includegraphics[width=\linewidth,height=\scenarioheight]{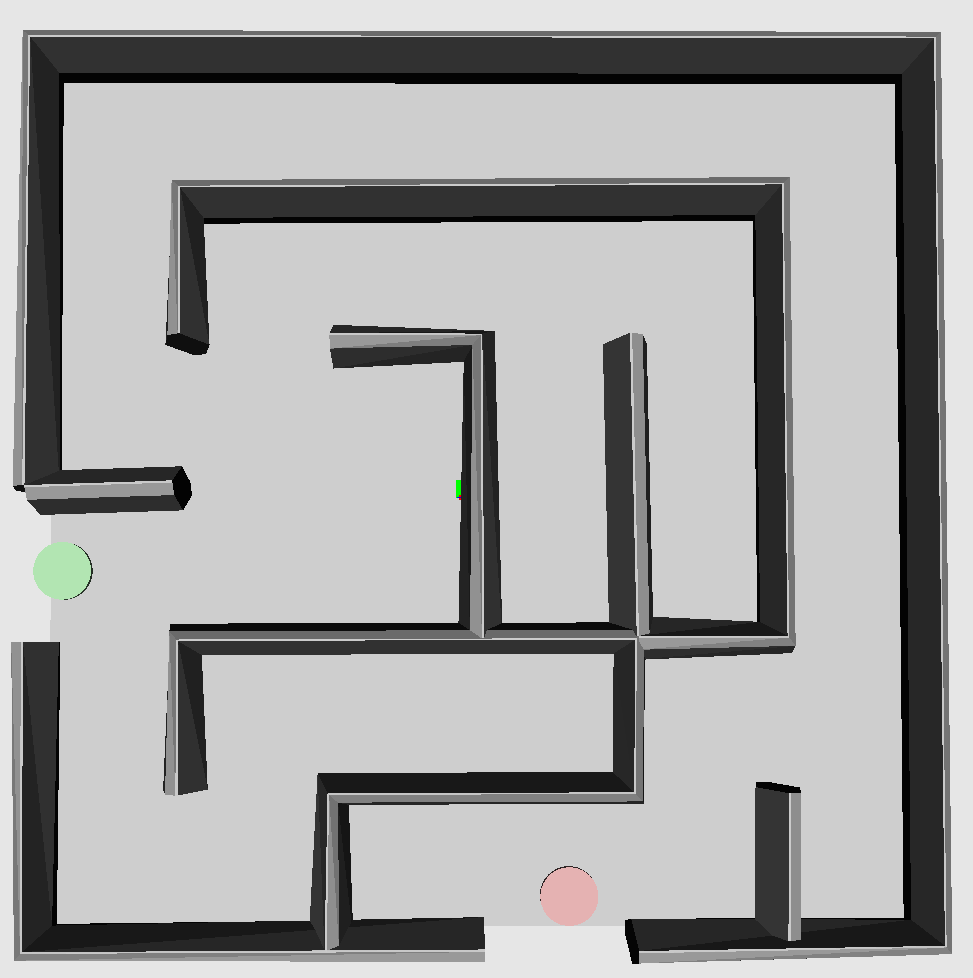}
        
        \vspace{\vdistance}
        \includegraphics[width=\linewidth,height=\subfigheight]{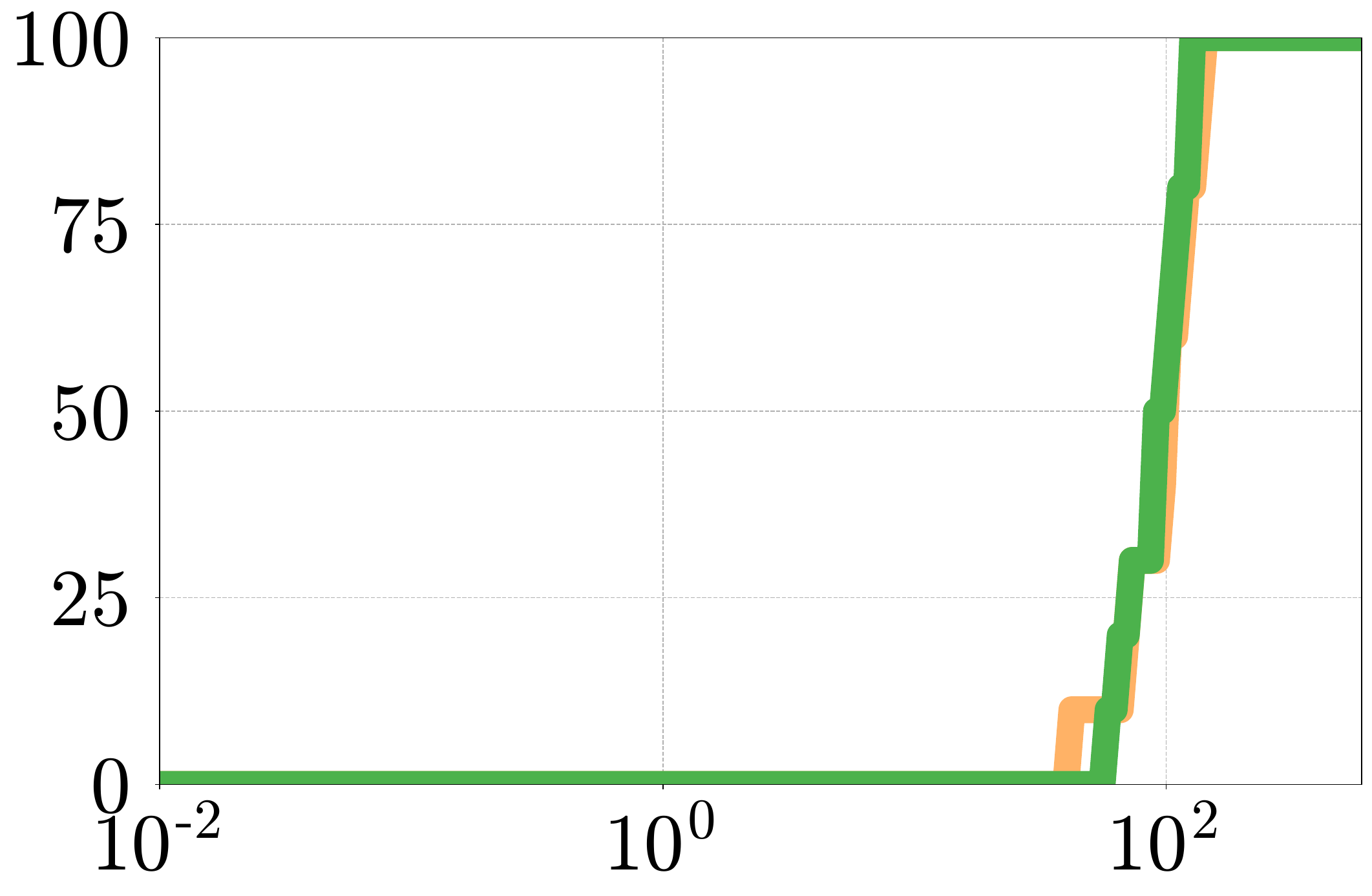}
        \caption{Disks In Maze}
    \end{subfigure}
    
    \vspace{0.8em}

    \centering
    \fbox{\parbox{\dimexpr\linewidth-4\fboxsep-2\fboxrule}{\centering\small
        \vspace{0.15cm}
        \sqbox{colorFTD} \textbf{\FibrationRRT [Ours]} \quad
        \sqbox{colorRRT} RRT \quad
        \vspace{0.15cm}
    }}
    
    \caption{Overhead Benchmarks. Success graphs of the benchmarks for the Multi-Disks scenarios where we plan in component space (Fibration tree has a single root node which represents the component state space).
    Top row shows the scenarios, bottom row shows the success graphs, whereby the $x$-axis shows time in log-space, while the $y$-axis shows the success rate from $0$ to $100$ percent. 
    Colors indicate the planner as shown in the legend above.}
    \label{fig:benchmark_overhead}
\end{figure*}

\section{Benchmarks\label{sec:benchmarks}}

To evaluate \FibrationRRT on different fibration trees, we utilize the DART library~\citep{Lee2018} in combination with OMPL for benchmarking~\citep{Moll2015}. Our evaluations involve five benchmarks:

\begin{itemize}
    \item \textbf{Overhead Benchmark} Comparison of \FibrationRRT with the classical RRT algorithm to find any computational overhead (Sec.~\ref{sec:benchmarks_overhead}).
    \item \textbf{Structural Benchmark} Comparison of decomposition-based and prioritization-based fibration trees using \FibrationRRT and comparing them to single-tree, single-query planners (Sec.~\ref{sec:benchmarks_structure}).
    \item \textbf{Prioritization Benchmark} Comparison of \FibrationRRT using a prioritization-based fibration tree against a prioritization-based planner, namely the rapidly exploring random quotient-space tree (QRRT) planner~\citep{Orthey2024IJRR} (Sec.~\ref{sec:benchmarks_prioritization}).
    \item \textbf{Decomposition Benchmark} Comparison of \FibrationRRT using a decomposition-based fibration tree against a decomposition-based planner, namely multi-robot discrete RRT (dRRT) planner~\citep{Solovey2016} (Sec.~\ref{sec:benchmarks_decomposition}).
    \item \textbf{Task-Space Benchmark} Comparison of \FibrationRRT using task-space constraints with Task-RRT~\citep{berenson2011task} (Sec.~\ref{sec:benchmarks_task_space}).
\end{itemize}

We report in the following sections on the results of those benchmarks. Please see Sec.~\ref{sec:discussion} on the interpretation of those results.

\subsection{Experimental Setup\label{sec:experimental-setup}}

We evaluate our benchmarks on a $4$-core, $8$GB RAM laptop running Ubuntu $24.04$. 
The following parameters are specified for \FibrationRRT, together with the default values set for the experiments.
\begin{itemize}
    \item \textbf{Goal bias} A value in $[0,1]$ indicating the percentage of samples drawn from the goal. The default value is $0.05$ (same as RRT).
    \item \textbf{Range} A value in $\Rnonneg$ indicating the maximum extension distance of a single iteration. The default value is computed from the volume of the configuration space (same as RRT). 
    \item \textbf{Selector function type} A type indicating that either uniform selection, exponential selection (Sec.~\ref{sec:planner-selection}), or the last active level is used. The default is exponential selection.
\end{itemize}

Additionally, we need to specify parameters for the two primitives of
restriction sampling and path section search.
For unified restriction sampling, we use a path restriction sampling bias of
$0.5$, a path restriction surrounding bias of $0.1$, and a sampling perturbation value of $0.05$. 
For unified path section search, we use a maximum branching of $2$, a maximum
depth of $5$, a restriction accuracy of $0.1$, a number of propagation attempts
of $5$, and a maximum number of permutations of $10$.

We run each planner in each benchmark for $10$ runs and collect the time when a first feasible solution has been found. 
The results are visualized using a success graph, which shows the success percentage over runtime. Please see Tab.~\ref{tab:scenario-properties} for the properties of each scenario.
\subsection{Overhead Benchmark\label{sec:benchmarks_overhead}}
In the first four scenarios, we like to establish a baseline to show that \FibrationRRT performs similar to RRT when no fibration trees are used. This should show that the overhead of using of fibration trees is minimal. We call those experiments the overhead experiments.

The scenarios are shown in Fig.~\ref{fig:benchmark_overhead} (top row). Each
scenario is a multi-disk planning problem, inspired by problems in the
multi-robot literature~\citep{Solovey2016,Orthey2020WAFR}. In scenario Disks in Crossing (Fig.~\ref{fig:benchmark_overhead} (a)), two disks need to traverse a crossing to the opposite side while avoiding collision with themselves. In scenario Disks on Square (Fig.~\ref{fig:benchmark_overhead} (b)), three larger disks are located inside a square and have to move to the opposite side of their start location. The green disks has to move to the upper right corner, the blue disks to the upper left corner, and the right disk to the bottom center position. In scenario Disks in Tee (Fig.~\ref{fig:benchmark_overhead} (c)), two disks have to move from the left to the right side of a Tee-shaped structure. However, both disks have to exchange their positions to reach their respective goal configurations. Finally, in scenario Disks in Maze (Fig.~\ref{fig:benchmark_overhead} (d)), two disks are located on opposite ends of a maze and have to exchange positions by navigating the maze while avoiding collisions with each other. 

\subsubsection{Results}

The results for \FibrationRRT and RRT are shown in Fig.~\ref{fig:benchmark_overhead} (bottom row). The scenarios are ordered in ascending difficulty, meaning it takes, on average, longer to solve (d) than it takes to solve (a). It can be seen that there is a small, visible difference for low runtimes in (a), but both algorithms perform relatively similar in (b), (c), and (d). 

The results show that there seems to be a small, measurable overhead for low runtimes, but that it vanishes for higher runtimes and does not constitute a bottleneck. 

\subsection{Structural Benchmark\label{sec:benchmarks_structure}}

In the structural benchmark, we compare \FibrationRRT with different fibration trees as input to a suite of single-tree planners. This benchmark uses eight multi-robot scenarios as depicted in Fig.~\ref{fig:scenarios}. We compare \FibrationRRT with a decomposition-based
fibration tree (\FibrationRRT-Decomposition) and \FibrationRRT with a
prioritization-based fibration tree (\FibrationRRT-Prioritization). Additionally, we compare those two planners to the single-query, single-tree planners rapidly-exploring random tree (RRT)~\citep{Lavalle1998}, expansive space-trees (EST)~\citep{Hsu1999EST}, fast marching trees (FMT)~\citep{Janson2015FMT}, and lower bound tree-RRT (LBT-RRT)~\citep{Salzman2016LBTRRT}.

\subsubsection{Scenarios}

The eight scenarios consists of the following problems. In Multi Disks
(Fig.~\ref{fig:scenarios} (a)), we have a
scenario of eight disk robots (2-dof) with four on the left and four on the right of
a plane. Both sets of disks have to change their position with the opposite
disk. A single cylindrical obstacle in the middle of the plane has to be
avoided. This scenario has a total of 16-dof.

In Airship Coordination (Fig.~\ref{fig:scenarios} (b)), eight flying airship robots are
given which can move freely in space while rotating around its Z-axis (4-dof).
Those airships have to fly from one side of the plane to the other while
avoiding collisions with themselves and with multiple objects in the
environment. This scenario has a total of 32-dof.

In Mobile Navigation (Fig.~\ref{fig:scenarios} (c)), a total of six mobile
manipulator robots are given. Each robot consists of a base (3-dof) and a
manipulator arm (7-dof) for a total of 10-dof. Two robots are tasked to move
from the right side to the left, while the other robots have to move from the
left side to the right. The goal positions for the robots are indicated by the transparent yellow robot (showing only base position for visibility). This scenario has a total of 60-dof.

In Reeds Shepp (Fig.~\ref{fig:scenarios} (d)), four cars are given, which are
modelled as Reeds Shepp cars~\citep{reeds1990optimal} (3-dof). The tasks consists
of a parking lot, where two cars have to leave while two cars have to enter and
reach the previous occupied parking spots.
This scenario has a total of 12-dof.

In Cube Robots (Fig.~\ref{fig:scenarios} (e)), a total of nine cubes are given,
which can translate and rotate on the plane (3-dof each). The cubes have
different sizes, and need to move each from one side of the plane to the
opposite side while avoiding collisions. This scenario has a total of 24-dof.

In Drones in Pipe (Fig.~\ref{fig:scenarios} (f)), we have a scenario of sixteen
drone robots which can freely translate and rotate in space (6-dof). The task
requires the drones to fly through a narrow pipe with metal lattices. This
scenario has a total of 96-dof.

In Forest (Fig.~\ref{fig:scenarios} (g)), we use four mobile manipulator robots
(as in Mobile Navigation) which have to trade places with each other in a
diagonal fashion. The difficulty here consists of a narrow passage environment
where multiple tall cuboids are blocking the robots. This scenario has a total
of 40-dof.

Finally, in Warehouse (Fig.~\ref{fig:scenarios} (h)), we 
use two forklifts (3-dof) and three lifting trucks (3-dof) which have to reach
specific positions inside a warehouse. This scenario has a total of 15-dof.

{
\newcolumntype{Y}{>{\centering\arraybackslash}m{.002\linewidth}}
\newcolumntype{L}{>{\raggedright\let\newline\\\arraybackslash\hspace{0pt}}m{.53\linewidth}}

\def\customIndent{\hspace{3mm}}

\begin{table}[t]
    \vspace{1em}
    \caption{Overview of the properties of each scenario used for benchmarking. 
    The properties include if a scenario is homogeneous (all robots are the
    same), heterogeneous (robots differ), task-constrained (IK solver required),
    space-time (planning includes a time dimensions and dynamic obstacles), and
    if the fibration trees used includes parallel, sequential, or partial fibrations.
    For each scenario, we note if a certain property exists (green), or does not
    exists (grey).\label{tab:scenario-properties}}
    \footnotesize
    \center
    \begin{tabularx}{\linewidth}{| X | *{7}{Y} |}
        \hline
        Scenario & 
        \rotatebox{90}{Homogeneous} &  
        \rotatebox{90}{Heterogeneous} &  
        \rotatebox{90}{Task-Constraint} &  
        \rotatebox{90}{Space-Time} &
        \rotatebox{90}{Parallel} &
        \rotatebox{90}{Sequential} &
        \rotatebox{90}{Partial} \\
        \hline
        Multi Disks (16-dof) & \markYes & \markNo & \markNo & \markNo & \markYes & \markYes & \markNo \\
        Zeppelin Control (32-dof) & \markYes & \markNo & \markNo & \markNo & \markYes & \markYes & \markNo \\
        Mobile Navigation (60-dof) & \markYes & \markNo & \markNo & \markNo & \markYes & \markYes & \markNo \\
        Reeds Sheep (12-dof) & \markYes & \markNo & \markNo & \markNo & \markYes & \markYes & \markNo \\
        \hline
        Cube Robots (24-dof) & \markNo & \markYes & \markNo & \markNo & \markYes & \markYes & \markNo \\
        Drones In Pipe (96-dof) & \markYes & \markNo & \markNo & \markNo & \markYes & \markYes & \markNo \\
        Forest (40-dof) & \markYes & \markNo & \markNo & \markNo & \markYes & \markYes & \markNo \\
        Warehouse (25-dof) & \markNo & \markYes & \markNo & \markNo & \markYes & \markYes & \markNo \\
        \hline
        Fixed Manipulator (7-dof) & \markYes & \markNo & \markYes & \markNo & \markNo & \markNo & \markYes \\
        Fixed on Wall (14-dof) & \markYes & \markNo & \markYes & \markNo & \markYes & \markNo & \markYes \\
        Mobile Manipulators (30-dof) & \markYes & \markNo & \markYes & \markNo & \markYes & \markNo & \markYes \\
        Path Velocity Decomposition (11-dof) & \markYes & \markNo & \markYes & \markYes & \markNo & \markYes & \markYes \\
        \hline
    \end{tabularx}
\end{table}
}

\subsubsection{Results}

Fig.~\ref{fig:benchmark_structure} shows the results for this benchmark. It can be seen that \FibrationRRT achieves a success rate of over $80$ percent
in $12$ out of $12$ cases and reaches a $100$\% success rate in $7$
cases---given the time limits of each scenario. The
planners RRT, EST, FMT, and LBTRRT achieve a success rate of over $80$ percent
on $3$ out of $8$ cases and reach a $100$\% success rate in $1$ case, namely FMT for the Cube
Robots (Fig.~\ref{fig:benchmark_structure}e).

In the cases where \FibrationRRT reaches $100$\% success rate, it outperforms the
remaining planners in $5$ cases by at least one order of magnitude in reaching the
$100$\% success rate. 
This is the case for Multi Disks (Fig.~\ref{fig:benchmark_structure}a), Airship Coordination (Fig.~\ref{fig:benchmark_structure}c), Reeds-Shepp Simple (Fig.~\ref{fig:benchmark_structure}d), Drones in Pipe (Fig.~\ref{fig:benchmark_structure}f), and Warehouse (Fig.~\ref{fig:benchmark_structure}h).

\subsection{Prioritization Benchmark\label{sec:benchmarks_prioritization}}

In this benchmark, we compare \FibrationRRT to the quotient-space RRT (QRRT)~\citep{Orthey2019ISRR, Orthey2024IJRR}, by comparing a prioritization-based multi-robot motion planner to \FibrationRRT using a prioritized fibration tree. We compare the two planners on the same set of multi-robot scenarios as for the structural benchmark (see Fig.~\ref{fig:scenarios}). Ideally, the benchmarks should show that \FibrationRRT performs equivalent to QRRT, while having more flexibility in the input. To compare both algorithms, we ensured that QRRT uses the same path section method, same selection method (exponential importance criterion), and same restriction sampler with equivalent parameters. Note that QRRT is just one prioritization-based algorithm which we use to showcase that \FibrationRRT can handle such scenarios in a similar fashion.

\subsubsection{Results}

The results are shown in Fig.~\ref{fig:benchmark_prioritization}. Both algorithms, \FibrationRRT and QRRT, perform relatively similar on the first six scenarios (a--f), where there is no clear advantage for either one of them. However, in the last two scenarios (g, h), \FibrationRRT outperforms QRRT slightly by reaching a success rate of 100\% inside the time budget, while QRRT fails to do this. 

\subsection{Decomposition Benchmark\label{sec:benchmarks_decomposition}}

In the decomposition benchmark, we compare \FibrationRRT using a decomposition-based fibration tree (a single parallel fibration onto the individual robot spaces) with the decomposition-based multi-robot discrete RRT (dRRT) planner~\citep{solovey_2016}. The scenarios are the same as for the structural and prioritization benchmark (see Fig.~\ref{fig:scenarios}).

To guarantee a fair comparison, we created four different versions of dRRT because of one important distinction between \FibrationRRT and dRRT. While \FibrationRRT uses a selection function (Sec.~\ref{sec:planner-selection}) to decide with which planner to continue, dRRT has a fixed selection sequence. This sequence first computes roadmaps on the individual spaces until a fixed number of vertices has been generated or a time budget has been reached~\citep{solovey_2016}. After those roadmaps have been generated, the planner invokes the dRRT coordination method which explores the (implicit) tensor graph of the individual roadmaps. It is therefore important to choose a good time budget for the roadmap generation phase of dRRT. To have a fair comparison, we opted to include four different values, namely time budgets of $1$s, $5$s, $10$s, and $50$s. We call the corresponding planners dRRT-1, dRRT-5, dRRT-10, and dRRT-50, respectively. 

\subsubsection{Results}

The results are shown in Fig.~\ref{fig:benchmark_decomposition}. 
\FibrationRRT is able to solve six scenarios with a success rate of over 90\% while only having a success rate of 20\% for the remaining two scenarios, namely Drones in Pipe (Fig.~\ref{fig:benchmark_decomposition}f) and Mobile Robots Forest (Fig.~\ref{fig:benchmark_decomposition}g). dRRT has a success rate of over 90\% for three scenarios (a, d, e), reaches 50\% for one scenario (c), but fails for the remaining four scenarios. 

\subsection{Task-Space Benchmark\label{sec:benchmarks_task_space}}

In this benchmark, we evaluate \FibrationRRT on scenarios containing task-space constraints on the end-effector of the robots. To compare this planner to
classical planners, we implement a task-space version of RRT (Task-RRT). Task-RRT
extends RRT in two aspects. First, a task-space sampler is implemented, which
generates samples which are constrained on the respective task-space. For
example, for the Vertical Maze scenario (see below), samples are generated via an inverse kinematics (IK) sampler who constraints the end-effector to lie close to the surface, consistent with the task constraint used by \FibrationRRT. 
Second, we use a forward propagation function, which moves the robot towards a
target sample in such a way that its end-effector is guaranteed to stay on the
task-space. This is accomplished using a Jacobian-based method~\citep{Craig2005}
to move the robot. Thus, the planner is similar to the task-space region
RRT~\citep{berenson2011task}, but uses an explicit representation of the task
manifold for sampling~\citep{kingston2019}.
\setlength{\subfigheight}{0.3\linewidth}

\begin{figure*}[htbp]
    \centering
    \begin{subfigure}[b]{0.24\textwidth}
        \centering
        \includegraphics[width=\linewidth,height=\subfigheight]{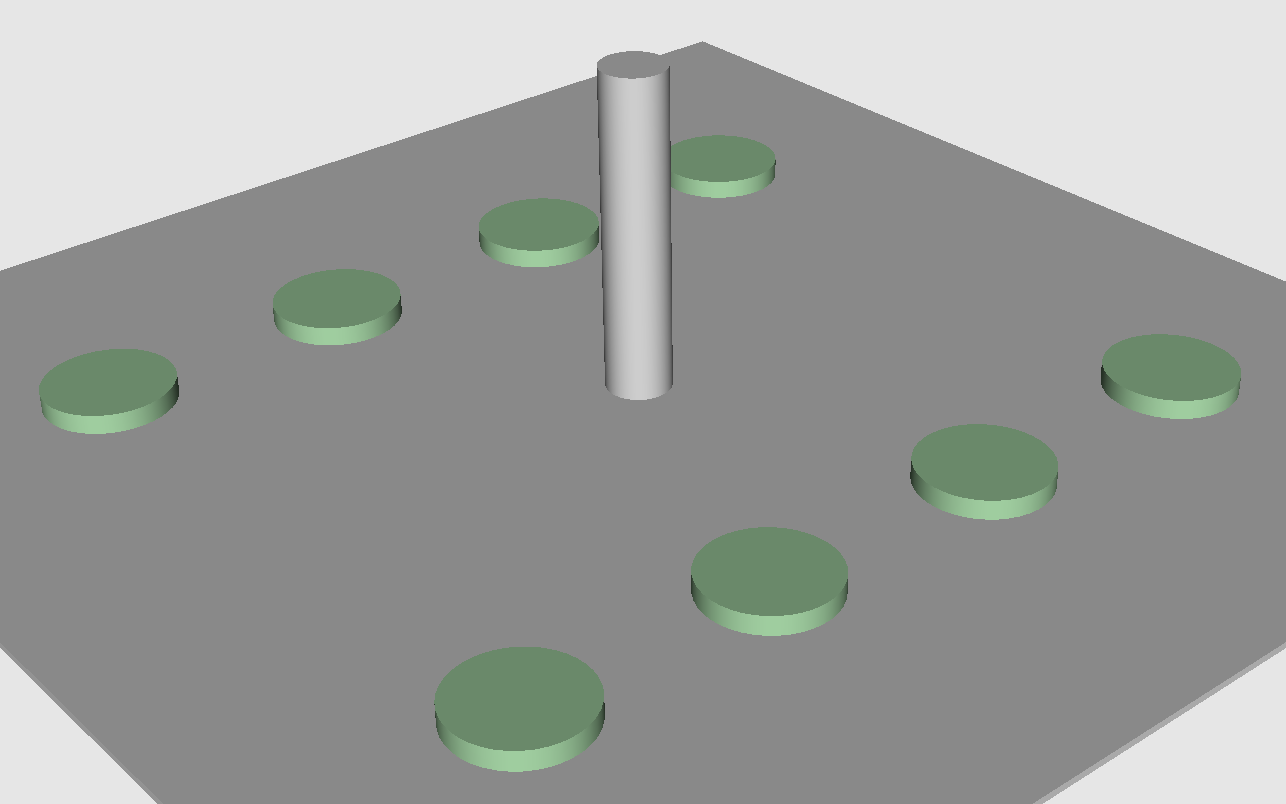}
        \caption{Multi Disks (16-dof)}
    \end{subfigure}
    \begin{subfigure}[b]{0.24\textwidth}
        \centering
        \includegraphics[width=\linewidth,height=\subfigheight]{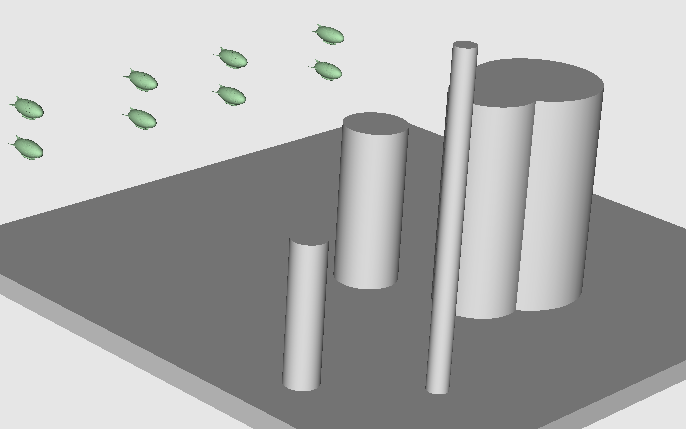}
        \caption{Airship Coordination (32-dof)}
    \end{subfigure}
    \begin{subfigure}[b]{0.24\textwidth}
        \centering
        \includegraphics[width=\linewidth,height=\subfigheight]{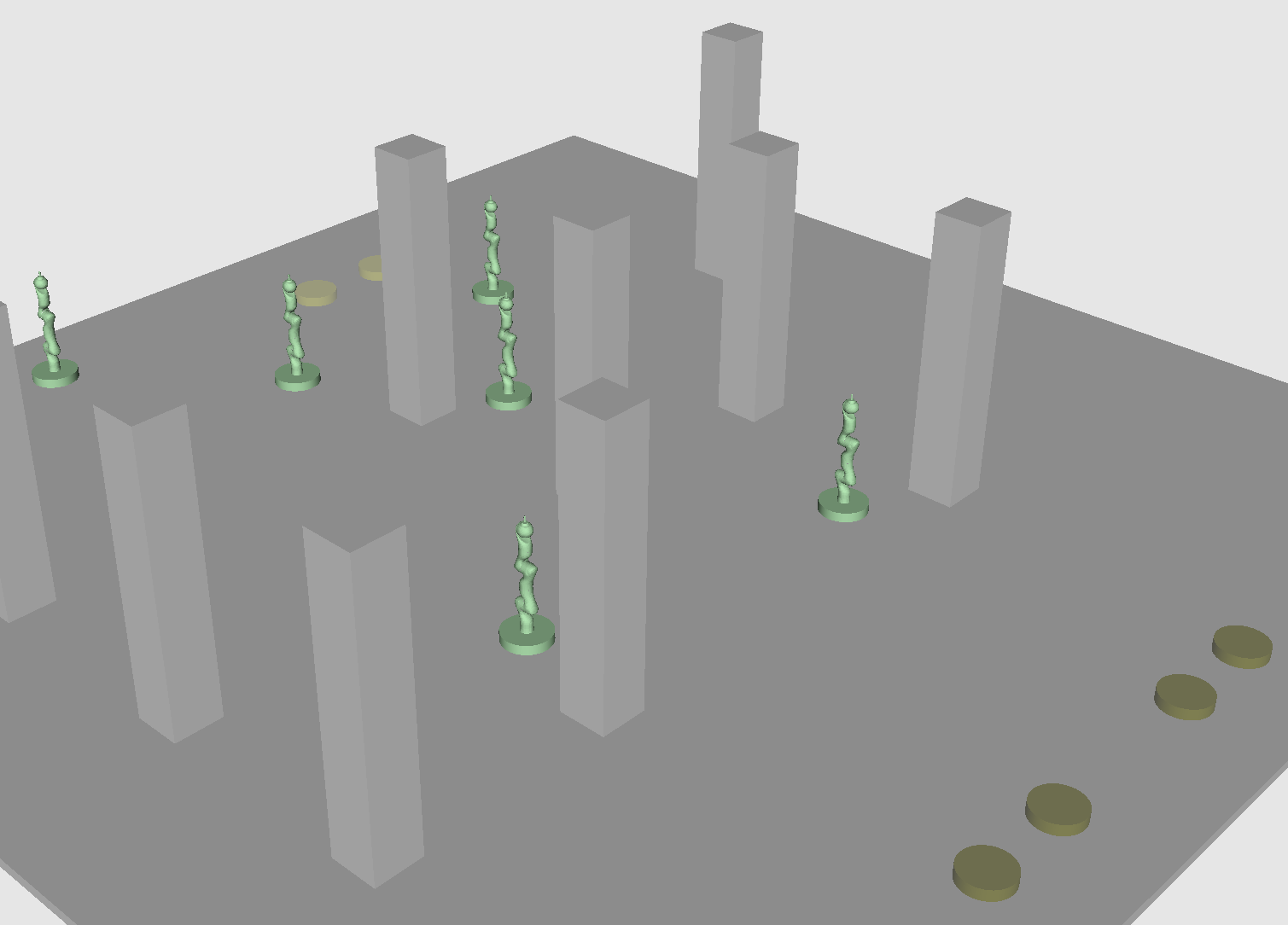}
        \caption{Mobile Navigation (60-dof)}
    \end{subfigure}
    \begin{subfigure}[b]{0.24\textwidth}
        \centering
        \includegraphics[width=\linewidth,height=\subfigheight]{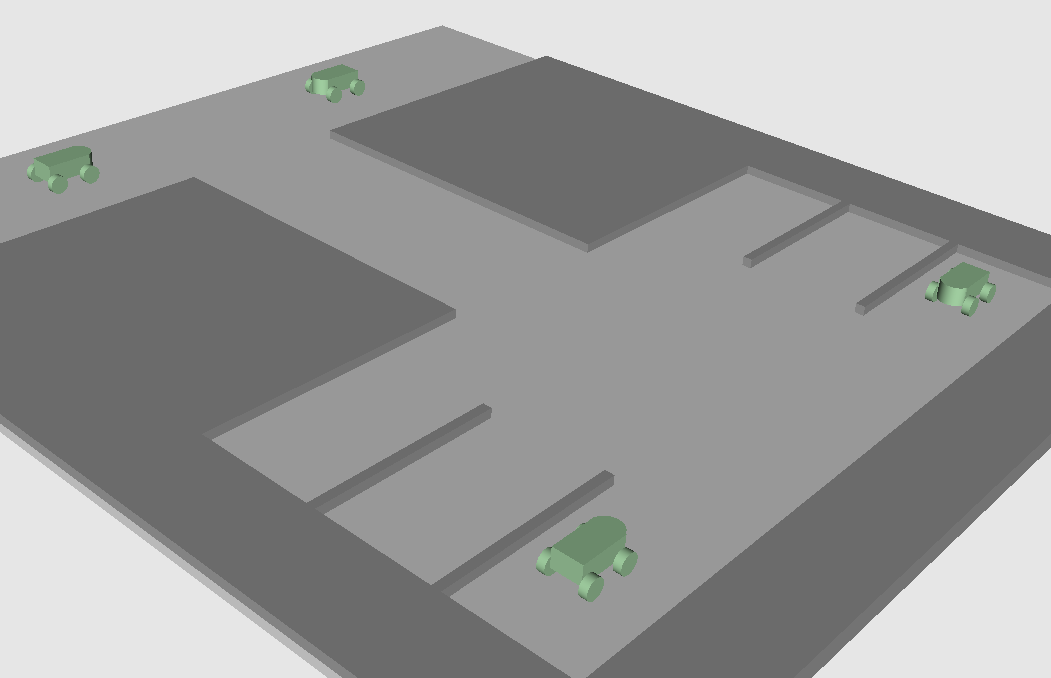}
        \caption{Reeds Shepp (12-dof)}
    \end{subfigure}

    \vspace{0.5em}

    \begin{subfigure}[b]{0.24\textwidth}
        \centering
        \includegraphics[width=\linewidth,height=\subfigheight]{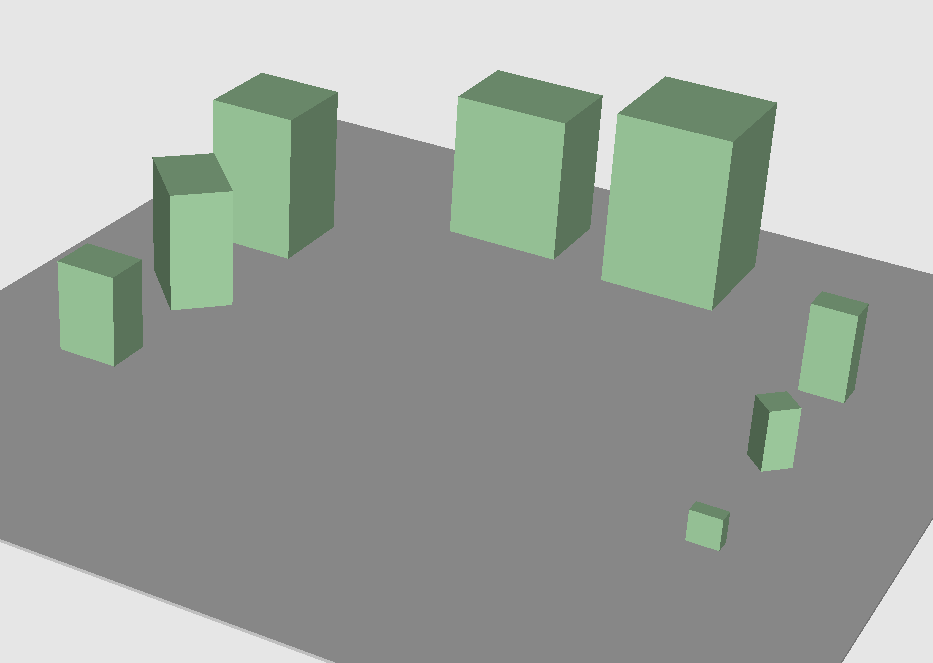}
        \caption{Cube Robots (24-dof)}
    \end{subfigure}
    \begin{subfigure}[b]{0.24\textwidth}
        \centering
        \includegraphics[width=\linewidth,height=\subfigheight]{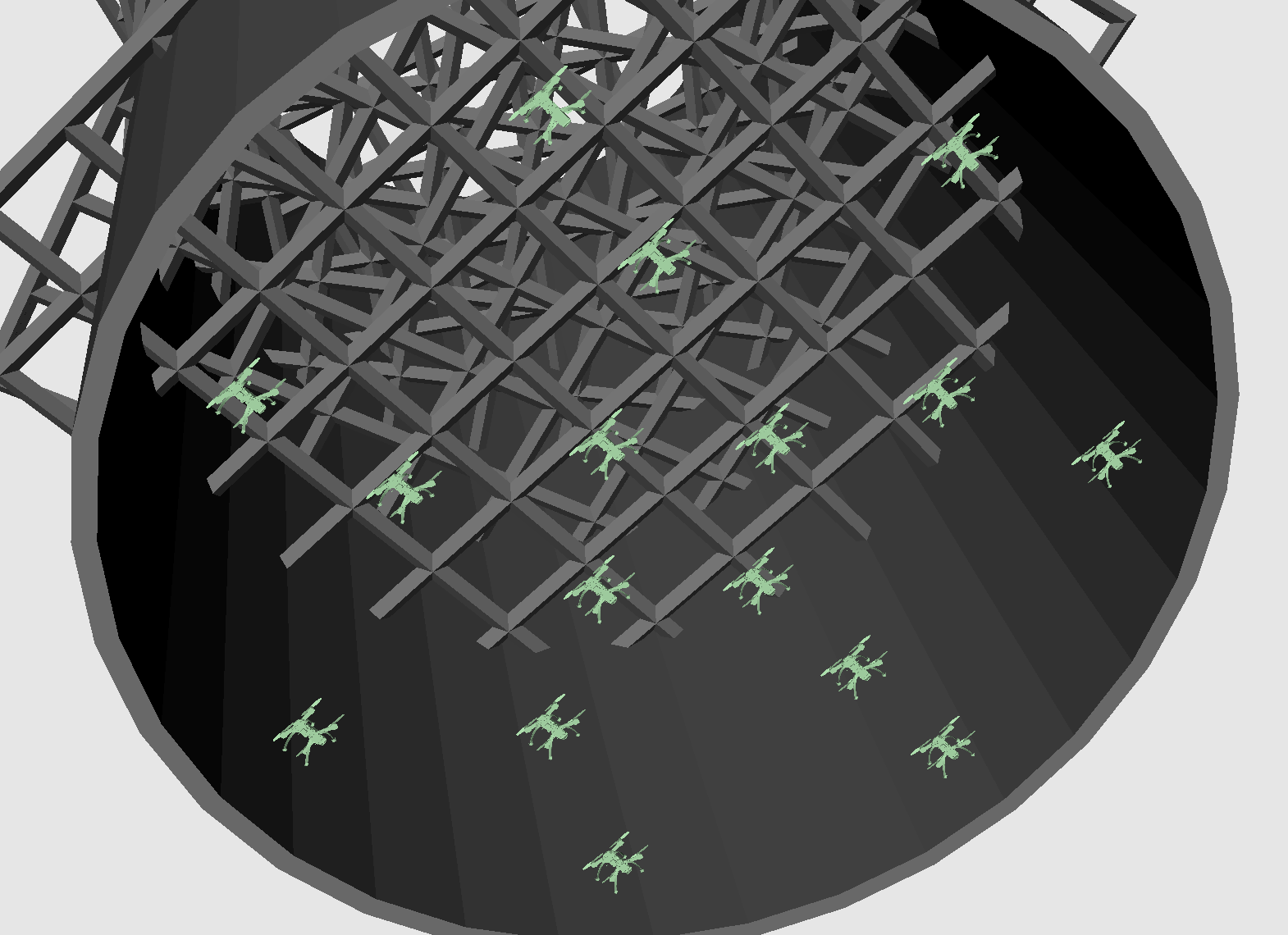}
        \caption{Drones In Pipe (96-dof)}
    \end{subfigure}
    \begin{subfigure}[b]{0.24\textwidth}
        \centering
        \includegraphics[width=\linewidth,height=\subfigheight]{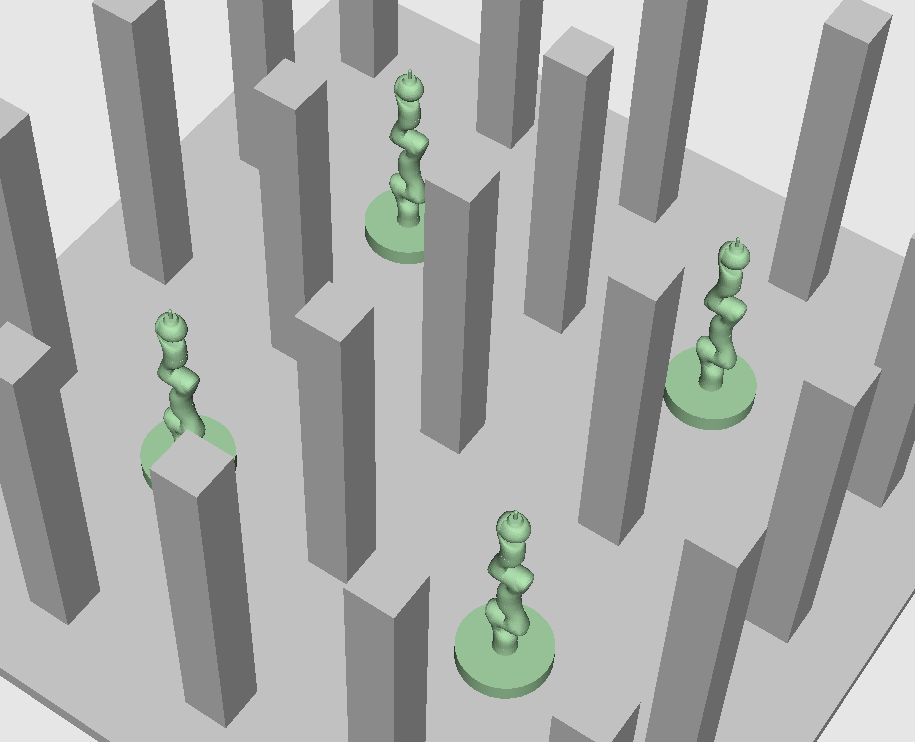}
        \caption{Forest (40-dof)}
    \end{subfigure}
    \begin{subfigure}[b]{0.24\textwidth}
        \centering
        \includegraphics[width=\linewidth,height=\subfigheight]{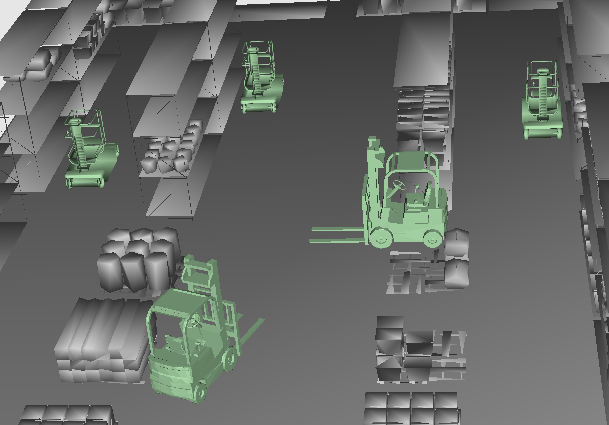}
        \caption{Warehouse (15-dof)}
    \end{subfigure}

    \vspace{0.5em}


    \caption{Experiment used for benchmarking. Movable robots are depicted in
    light green, obstacles in gray, and movable obstacles in light red. First
    and second row are multi-robot navigation scenarios, while the third row
    uses task constraints on the endeffector and dynamic obstacles in space-time.
    \label{fig:scenarios}}
\end{figure*}

\setlength{\subfigheight}{0.3\linewidth}

\begin{figure*}[htbp]
    \centering
    \begin{subfigure}[b]{0.24\textwidth}
        \centering
        \includegraphics[width=\linewidth,height=\subfigheight]{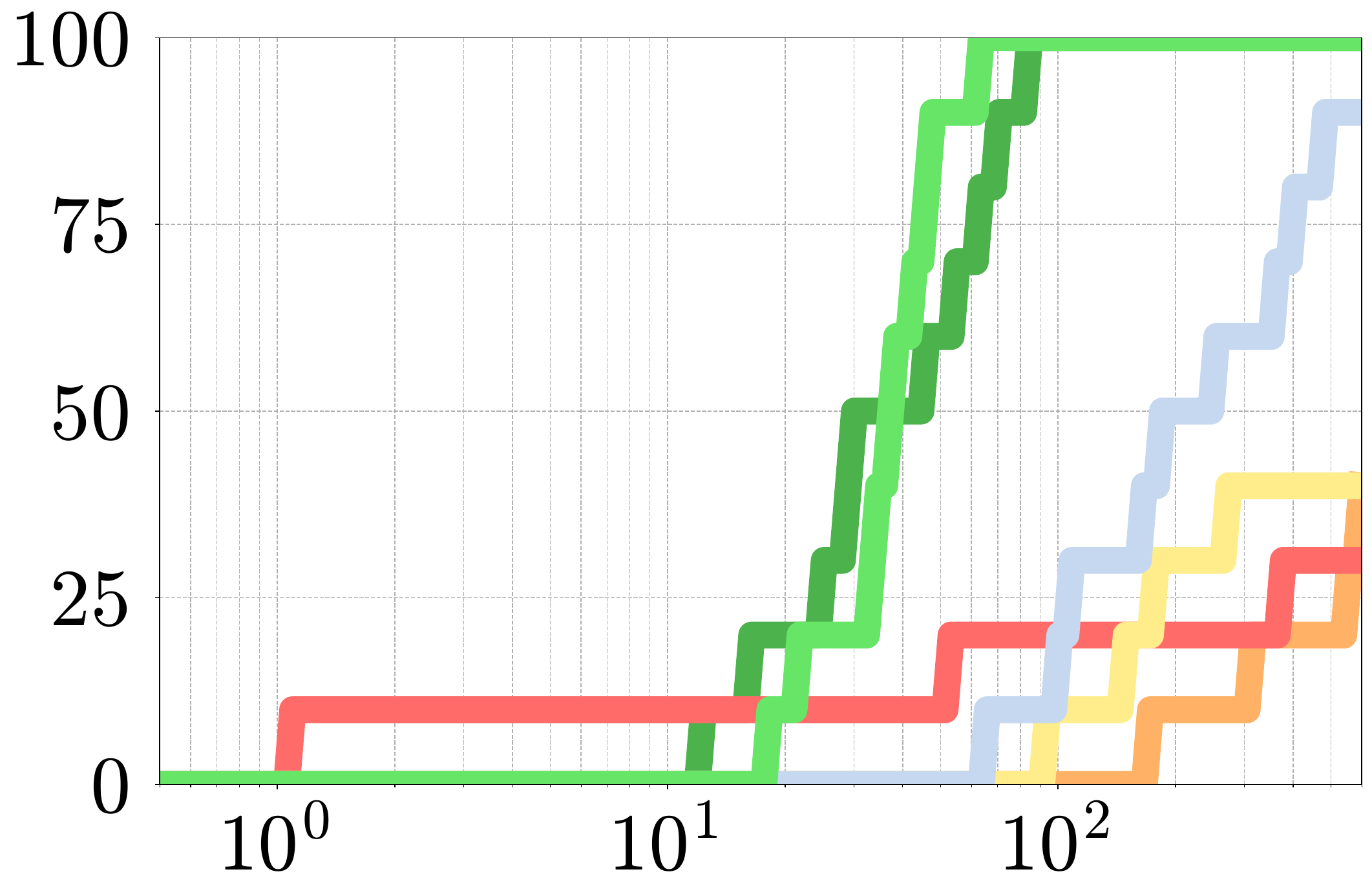}
        \caption{Multi Disks}
    \end{subfigure}
    \begin{subfigure}[b]{0.24\textwidth}
        \centering
        \includegraphics[width=\linewidth,height=\subfigheight]{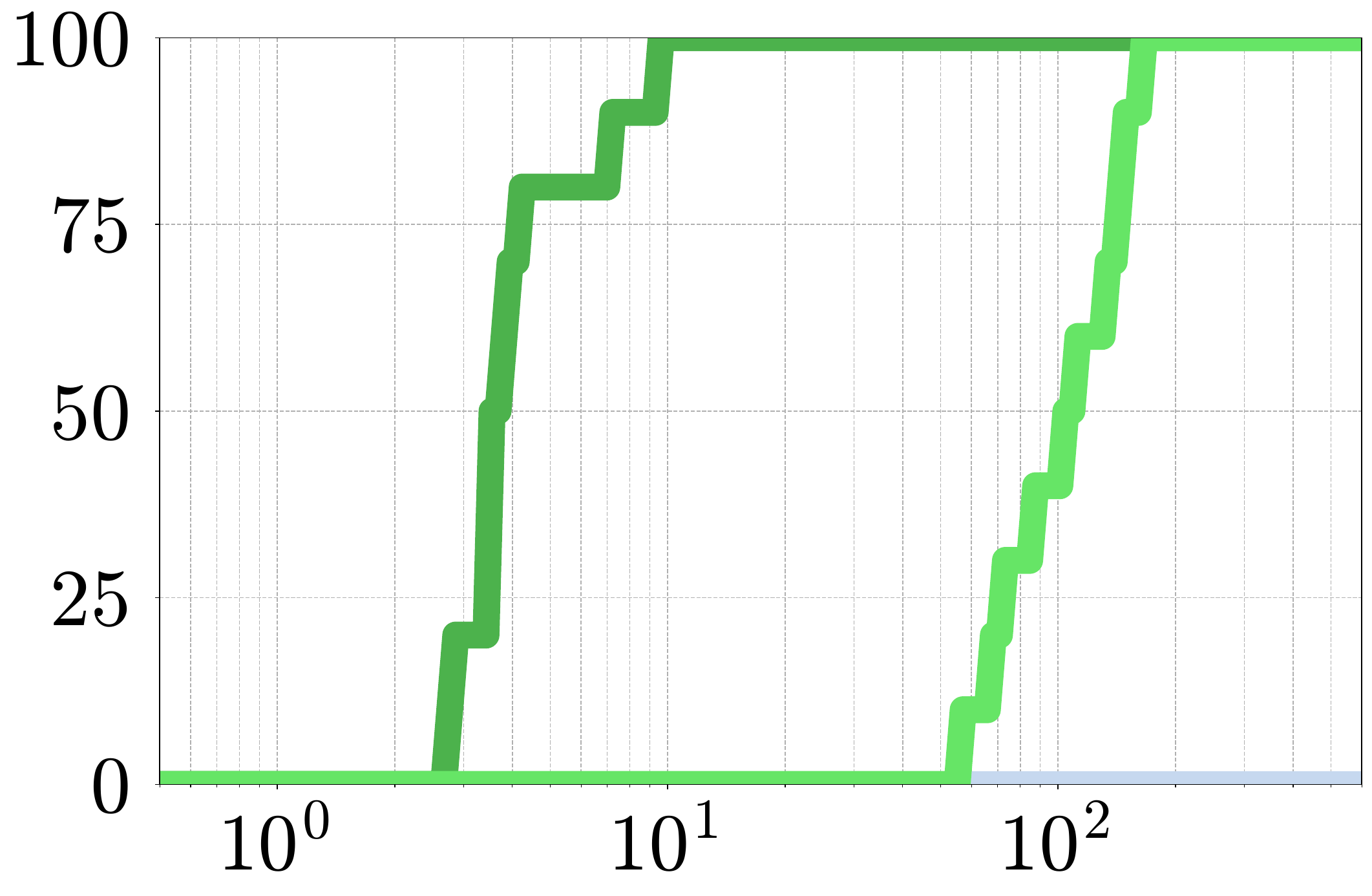}
        \caption{Airship Coordination}
    \end{subfigure}
    \begin{subfigure}[b]{0.24\textwidth}
        \centering
        \includegraphics[width=\linewidth,height=\subfigheight]{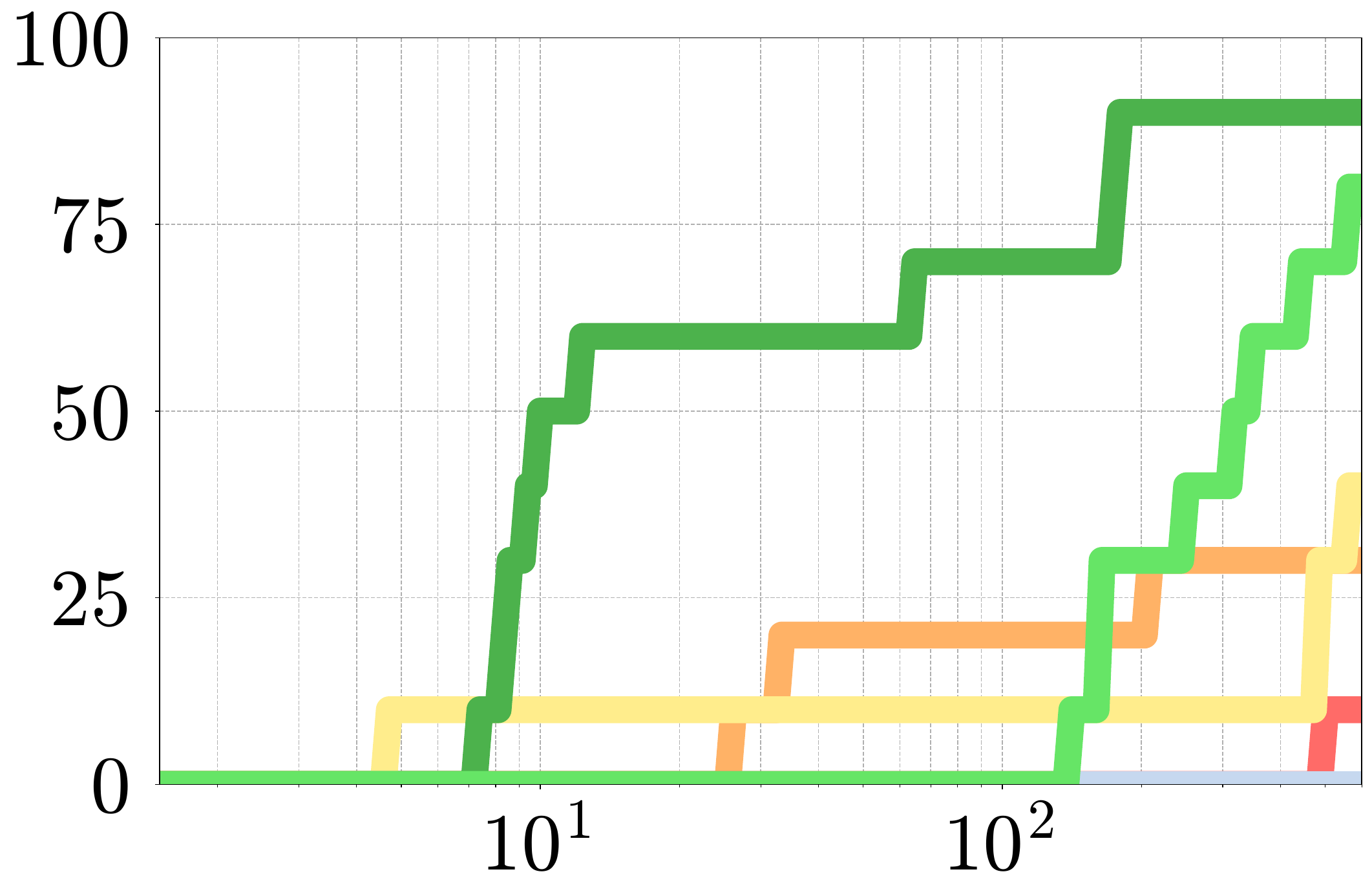}
        \caption{Mobile Navigation}
    \end{subfigure}
    \begin{subfigure}[b]{0.24\textwidth}
        \centering
\includegraphics[width=\linewidth,height=\subfigheight]{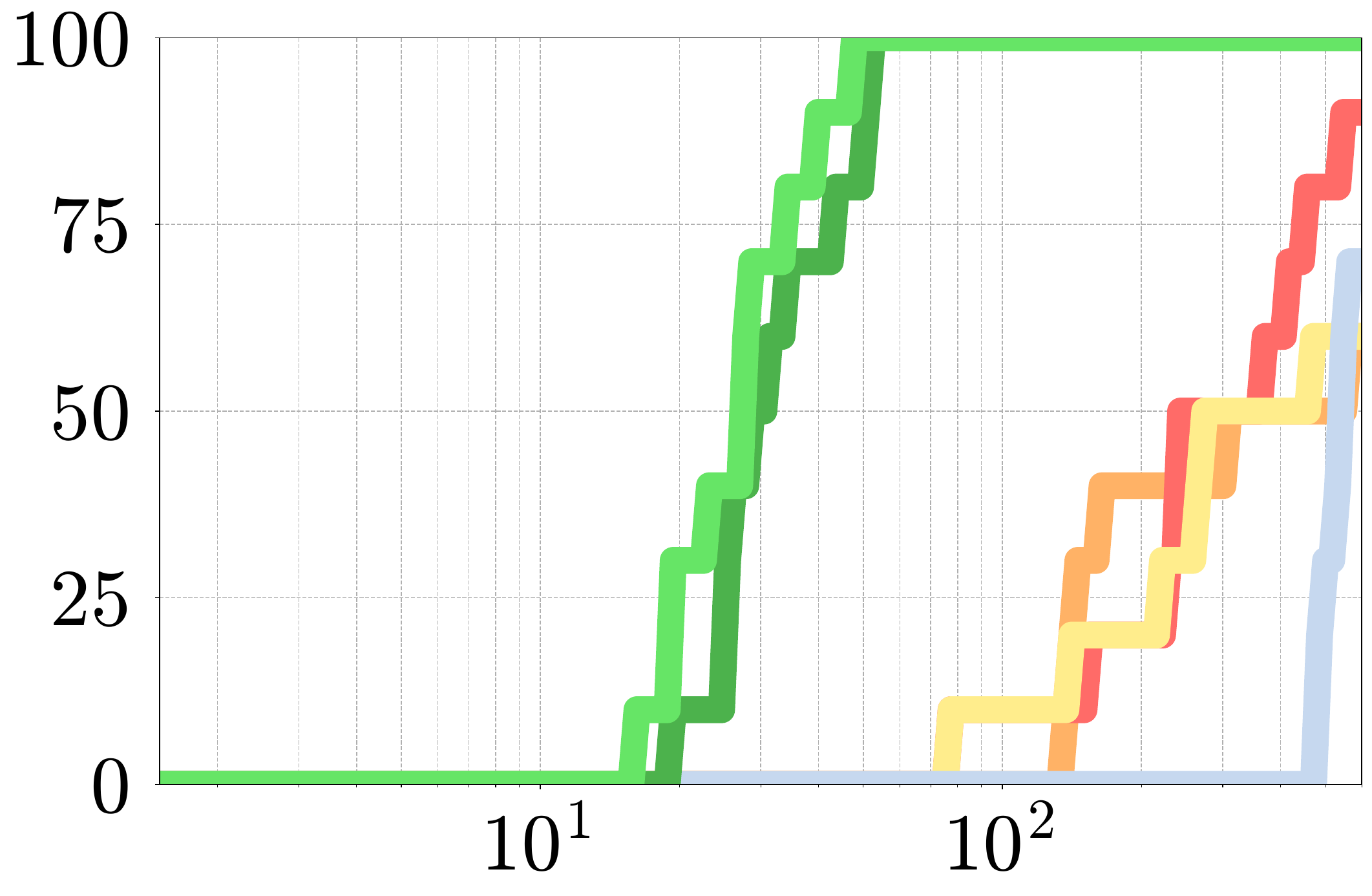}
        \caption{Reeds Shepp Simple}
    \end{subfigure}

    \vspace{0.5em}

    \begin{subfigure}[b]{0.24\textwidth}
        \centering
        \includegraphics[width=\linewidth,height=\subfigheight]{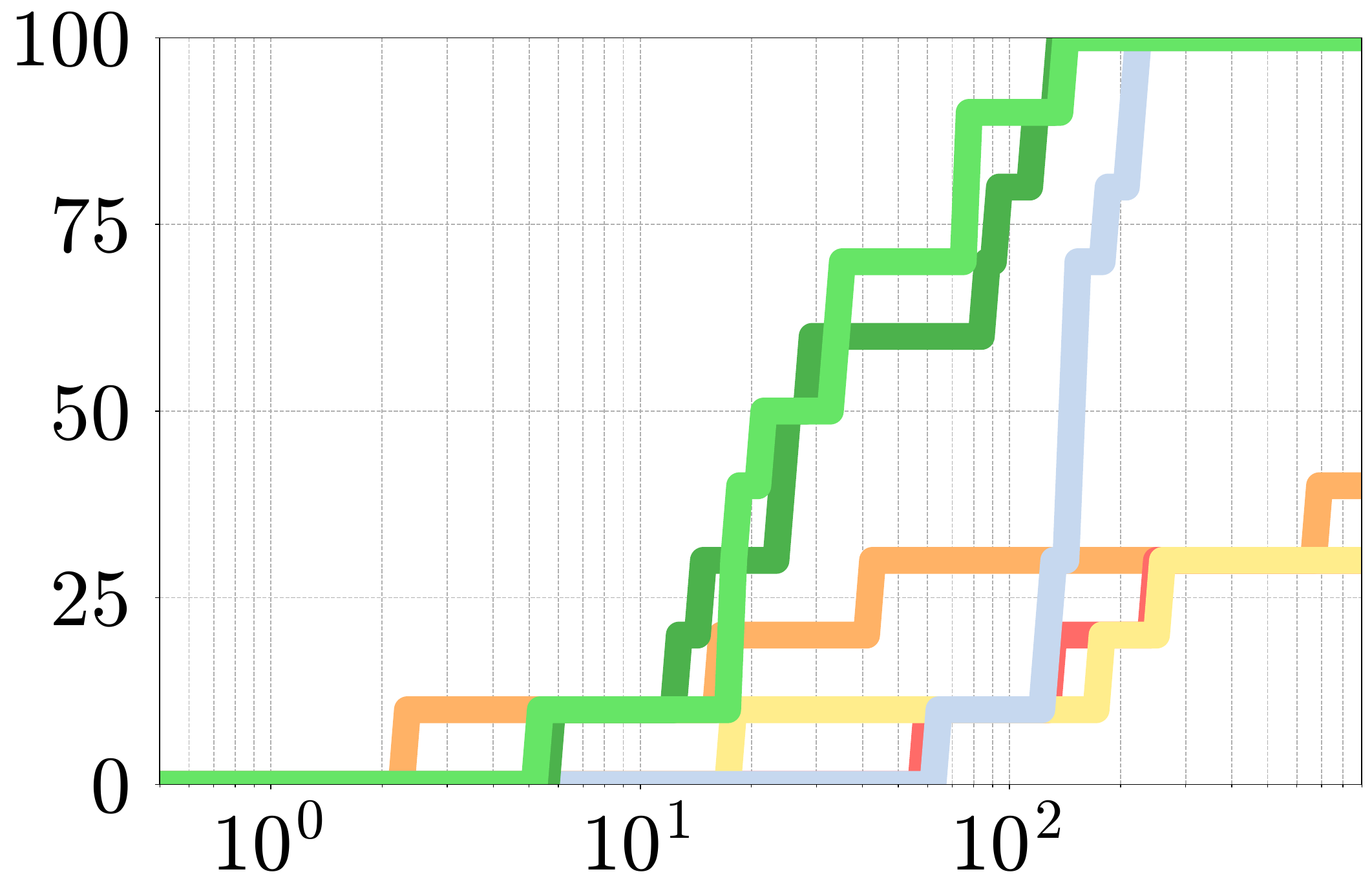}
        \caption{Cube Robots}
    \end{subfigure}
    \begin{subfigure}[b]{0.24\textwidth}
        \centering
        \includegraphics[width=\linewidth,height=\subfigheight]{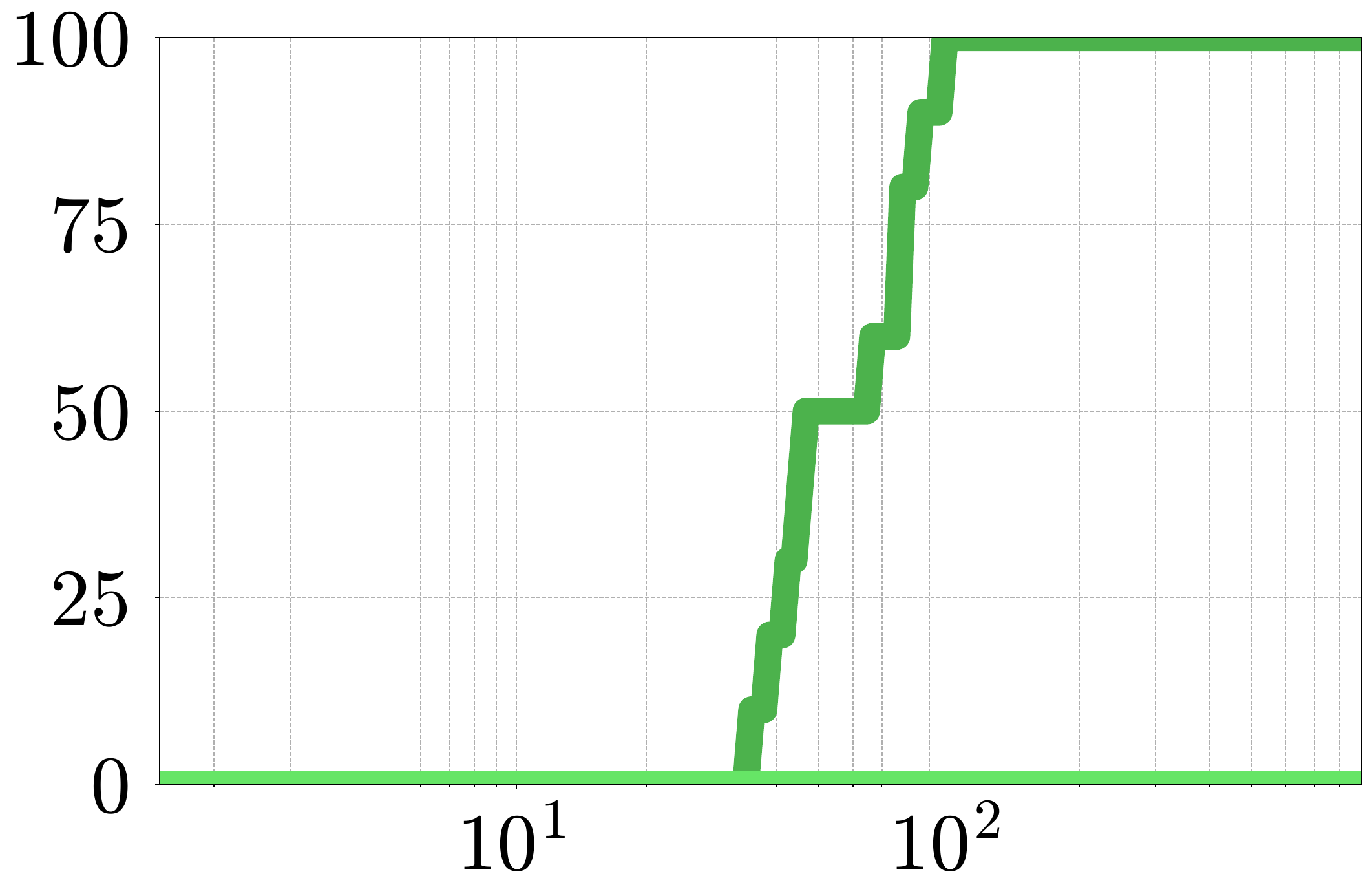}
        \caption{Drones In Pipe}
    \end{subfigure}
    \begin{subfigure}[b]{0.24\textwidth}
        \centering
        \includegraphics[width=\linewidth,height=\subfigheight]{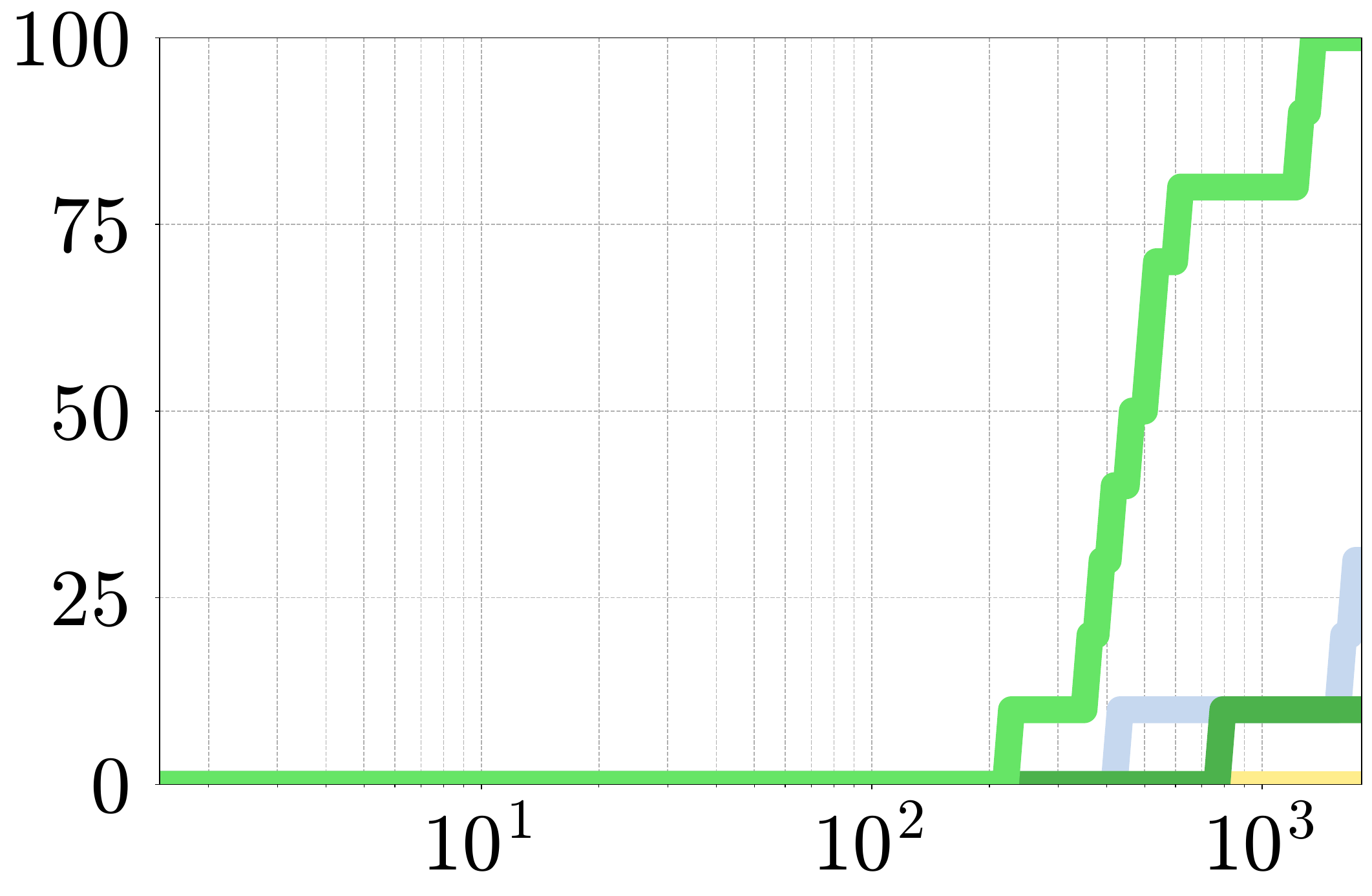}
        \caption{Mobile Robots Forest}
    \end{subfigure}
    \begin{subfigure}[b]{0.24\textwidth}
        \centering
        \includegraphics[width=\linewidth,height=\subfigheight]{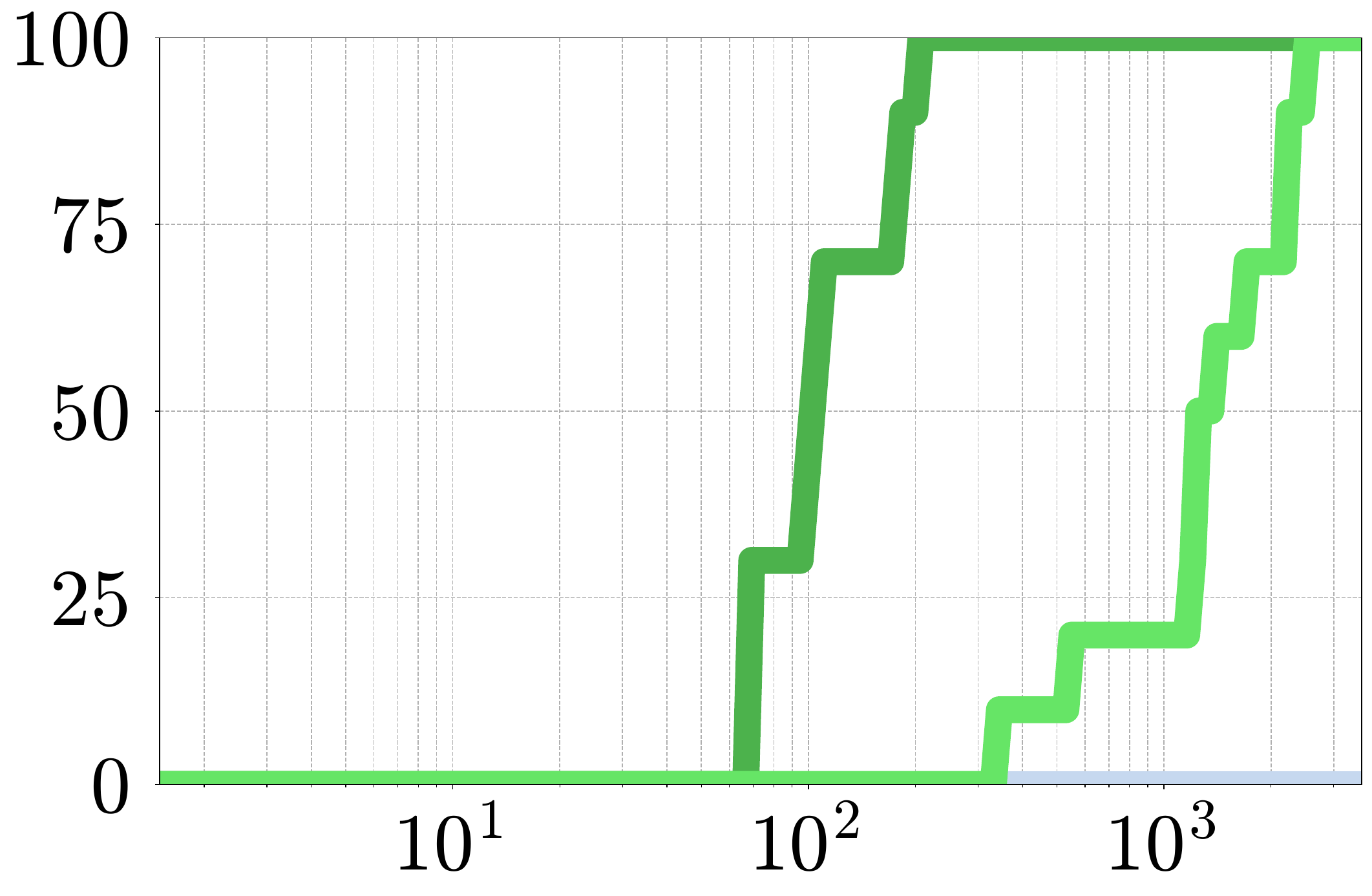}
        \caption{Warehouse}
    \end{subfigure}

    \vspace{0.5em}
\centering

\fbox{\parbox{\dimexpr\linewidth-4\fboxsep-2\fboxrule}{\centering\small
\vspace{0.15cm}
\sqbox{colorFTD} \textbf{\FibrationRRT-Decomposition [Ours]} \quad
\sqbox{colorFTP} \textbf{\FibrationRRT-Prioritization [Ours]}  \quad
\sqbox{colorRRT} RRT \quad
\sqbox{colorLBT} LBTRRT \quad
\sqbox{colorEST} EST \quad
\sqbox{colorFMT} FMT
\vspace{0.15cm}
}}
    \caption{Success graphs of the benchmarks for scenarios from Fig.~\ref{fig:scenarios}. The $x$-axis shows time in log-space, while the $y$-axis shows the success rate from $0$ to $100$ percent. Colors indicate the planner as shown above.}
    \label{fig:benchmark_structure}
\end{figure*}

\setlength{\subfigheight}{0.3\linewidth}

\begin{figure*}[htbp]
    \centering
    \begin{subfigure}[b]{0.24\textwidth}
        \centering
        \includegraphics[width=\linewidth,height=\subfigheight]{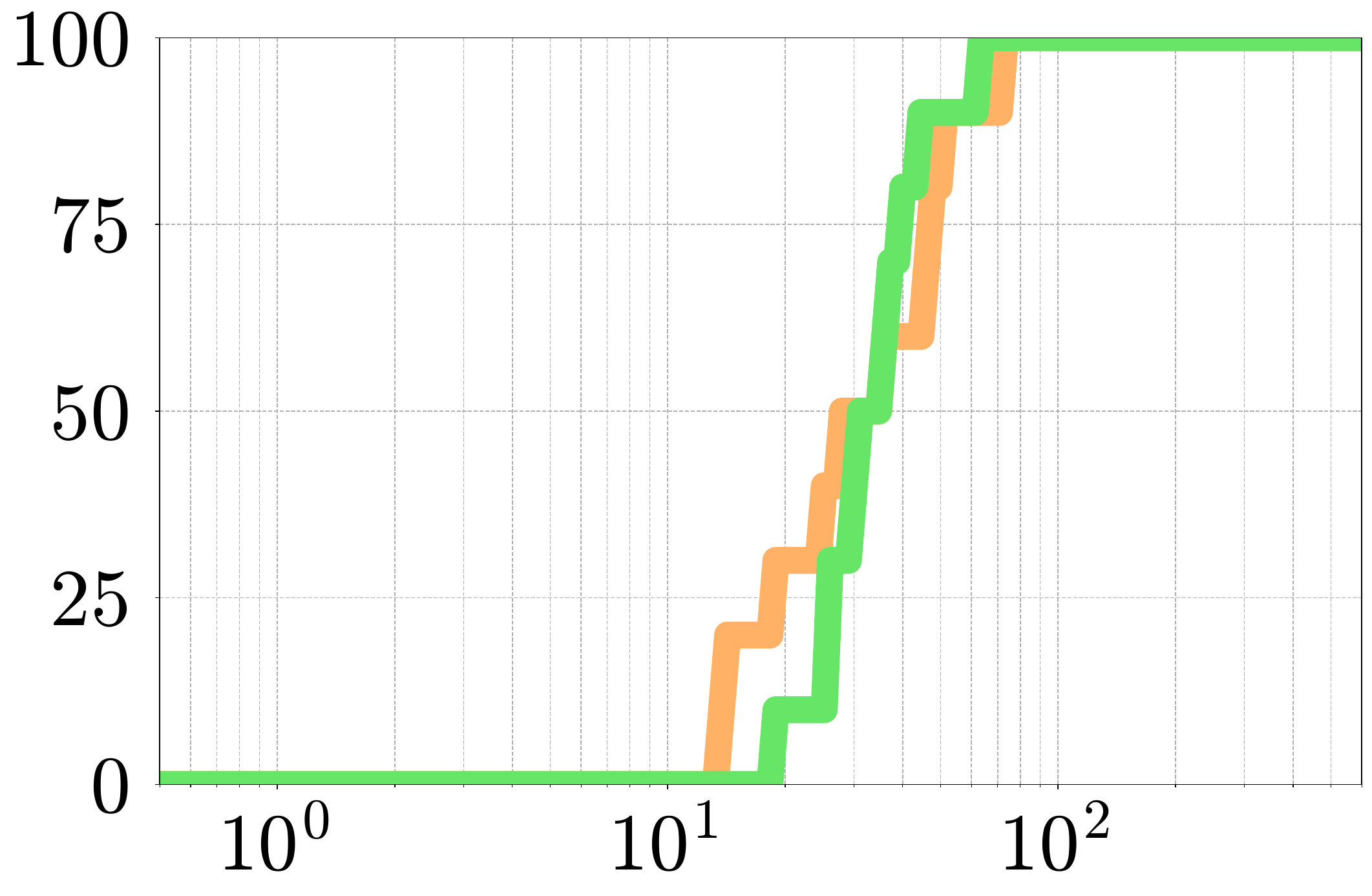}
        \caption{Multi Disks}
    \end{subfigure}
    \begin{subfigure}[b]{0.24\textwidth}
        \centering
        \includegraphics[width=\linewidth,height=\subfigheight]{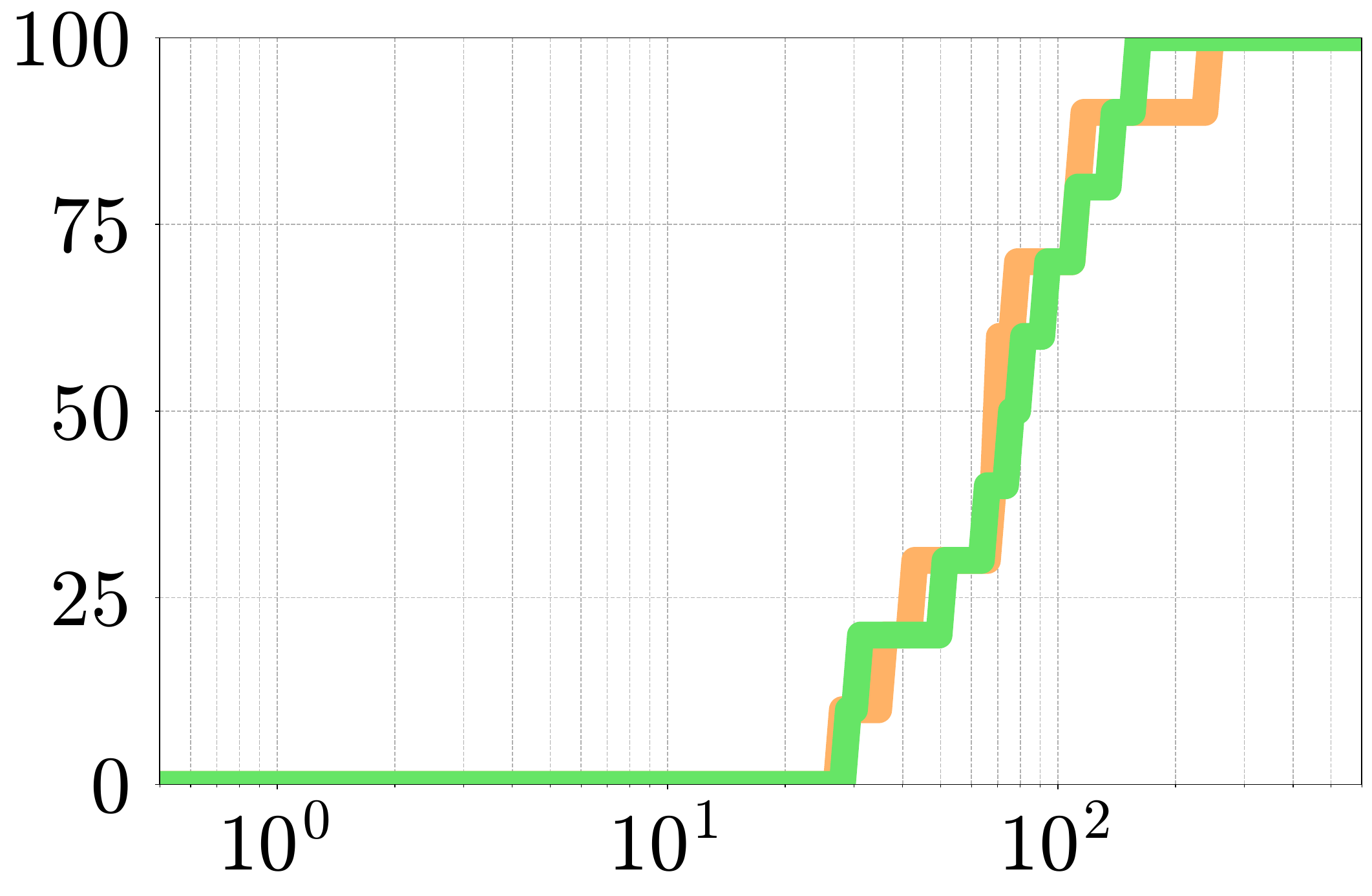}
        \caption{Airship Coordination}
    \end{subfigure}
    \begin{subfigure}[b]{0.24\textwidth}
        \centering
        \includegraphics[width=\linewidth,height=\subfigheight]{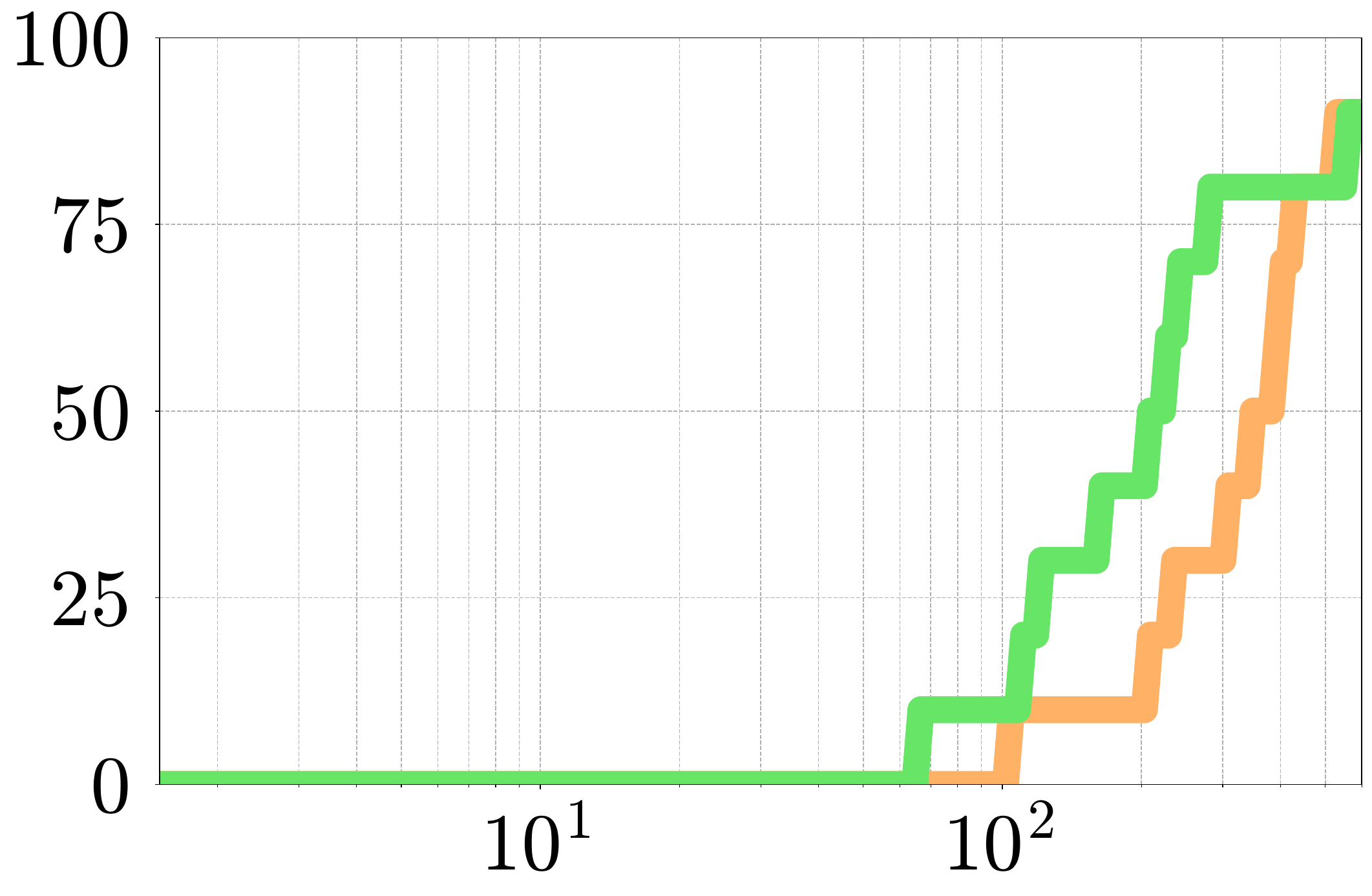}
        \caption{Mobile Navigation}
    \end{subfigure}
    \begin{subfigure}[b]{0.24\textwidth}
        \centering
        \includegraphics[width=\linewidth,height=\subfigheight]{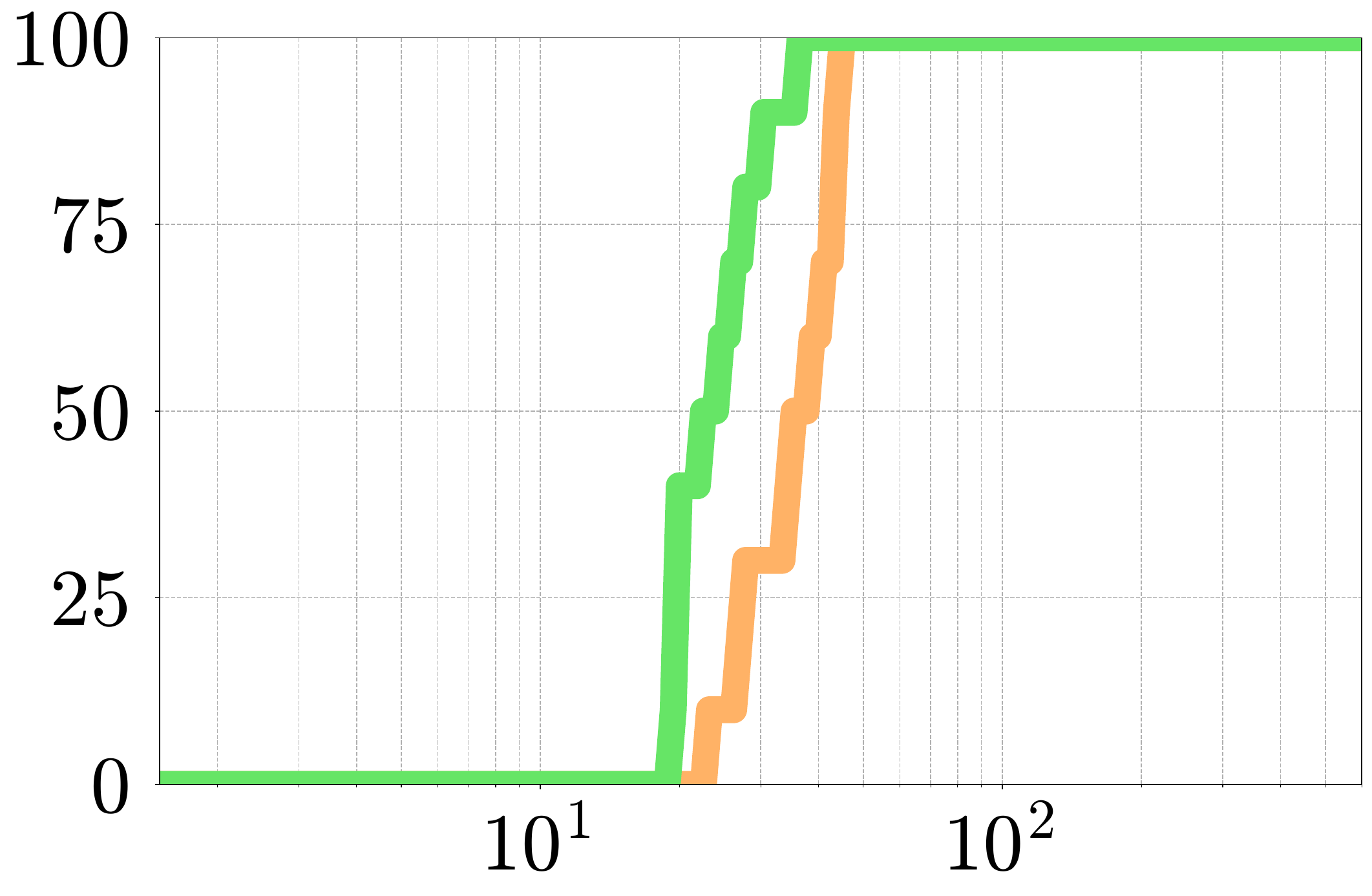}
        \caption{Reeds Shepp Simple}
    \end{subfigure}

    \vspace{0.5em}

    \begin{subfigure}[b]{0.24\textwidth}
        \centering
        \includegraphics[width=\linewidth,height=\subfigheight]{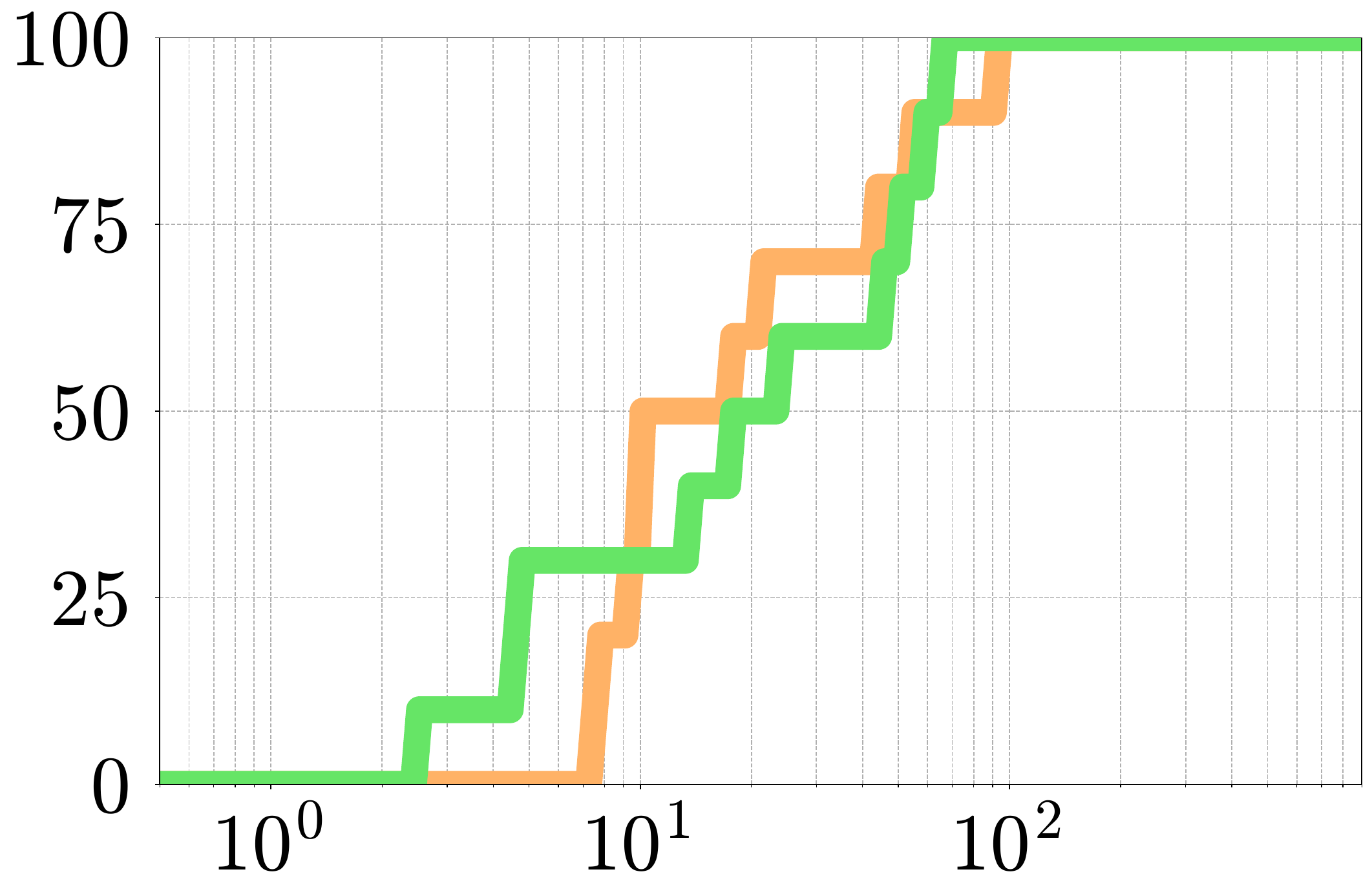}
        \caption{Cube Robots}
    \end{subfigure}
    \begin{subfigure}[b]{0.24\textwidth}
        \centering
        \includegraphics[width=\linewidth,height=\subfigheight]{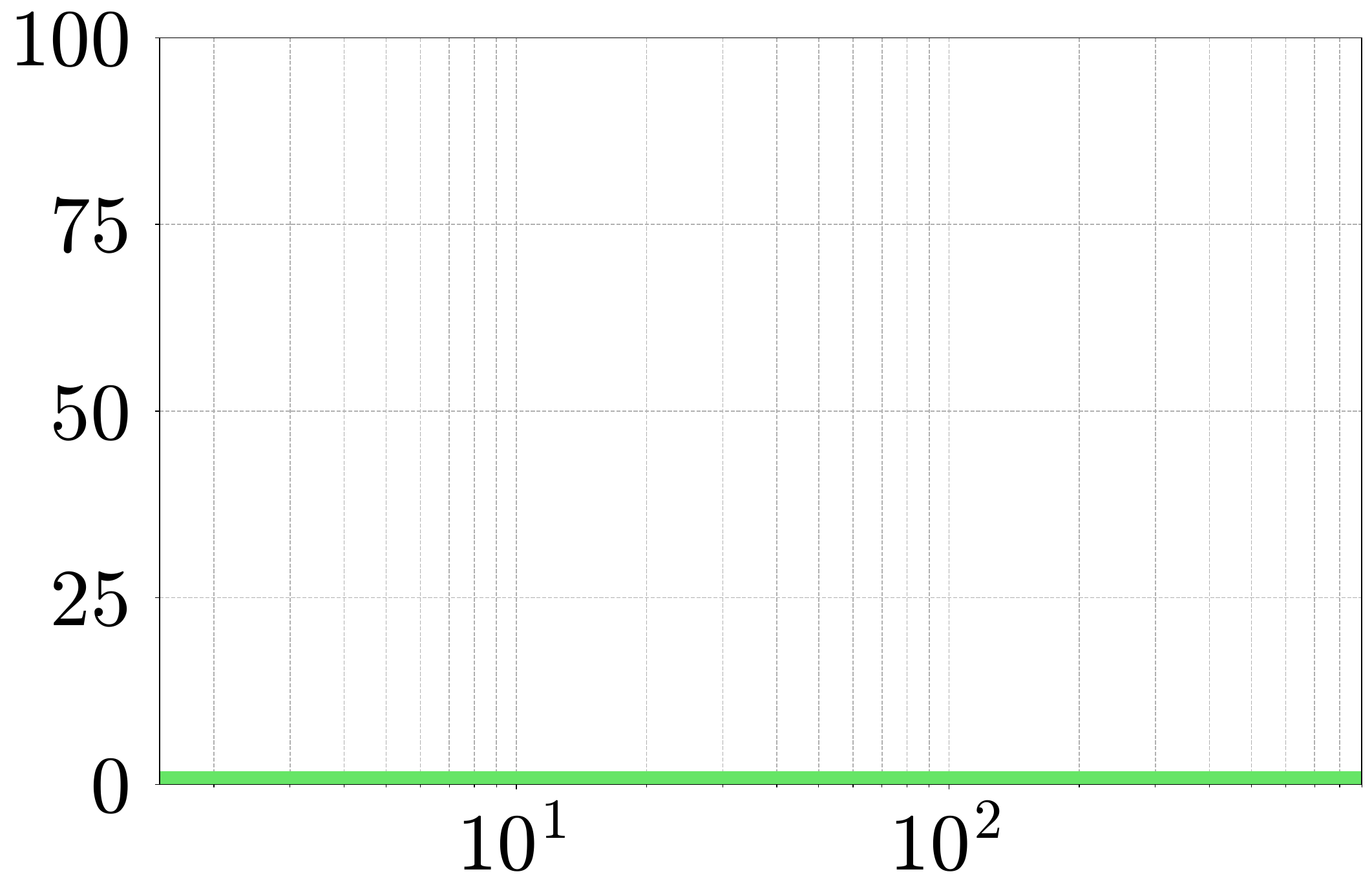}
        \caption{Drones In Pipe}
    \end{subfigure}
    \begin{subfigure}[b]{0.24\textwidth}
        \centering
        \includegraphics[width=\linewidth,height=\subfigheight]{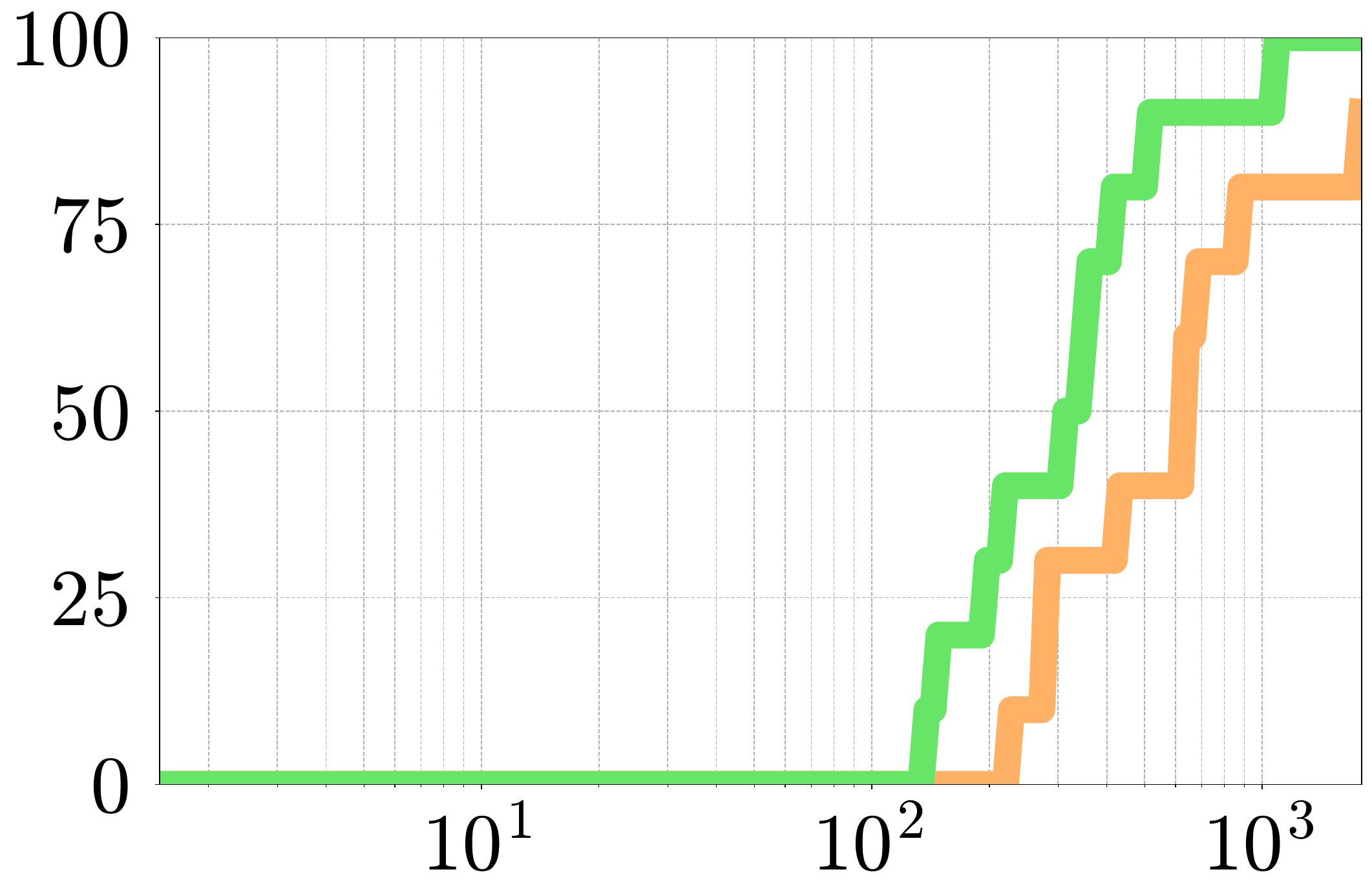}
        \caption{Mobile Robots Forest}
    \end{subfigure}
    \begin{subfigure}[b]{0.24\textwidth}
        \centering
        \includegraphics[width=\linewidth,height=\subfigheight]{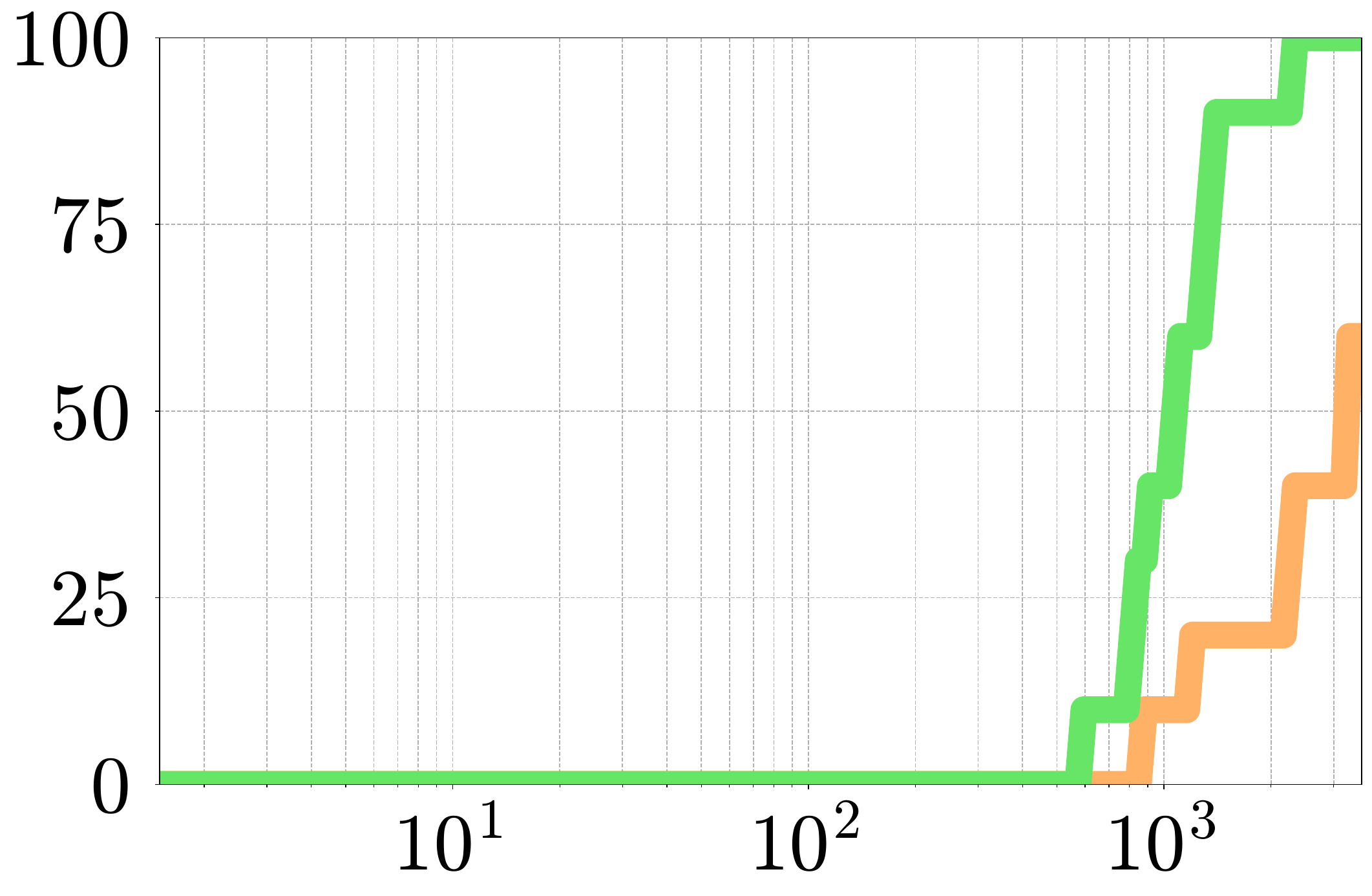}
        \caption{Warehouse}
    \end{subfigure}

    \vspace{0.5em}
\centering

\fbox{\parbox{\dimexpr\linewidth-4\fboxsep-2\fboxrule}{\centering\small
\vspace{0.15cm}
\sqbox{colorFTP} \textbf{\FibrationRRT-Prioritization [Ours]}  \quad
\sqbox{colorQRRT} QRRT
\vspace{0.15cm}
}}
    \caption{Success graphs for the prioritization benchmarks.}
    \label{fig:benchmark_prioritization}
\end{figure*}

\setlength{\subfigheight}{0.3\linewidth}

\begin{figure*}[htbp]
    \centering
    \begin{subfigure}[b]{0.24\textwidth}
        \centering
        \includegraphics[width=\linewidth,height=\subfigheight]{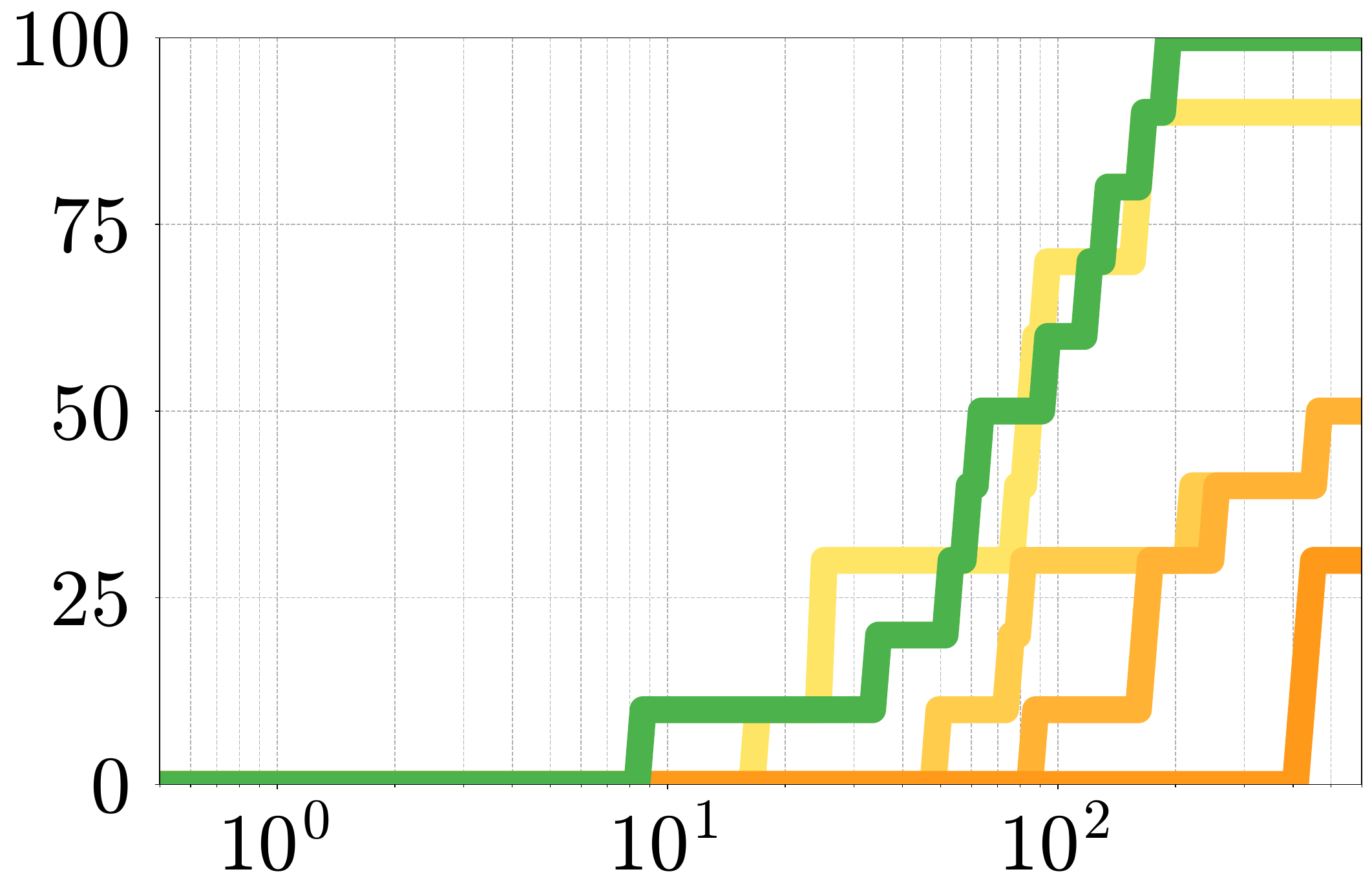}
        \caption{Multi Disks}
    \end{subfigure}
    \begin{subfigure}[b]{0.24\textwidth}
        \centering
        \includegraphics[width=\linewidth,height=\subfigheight]{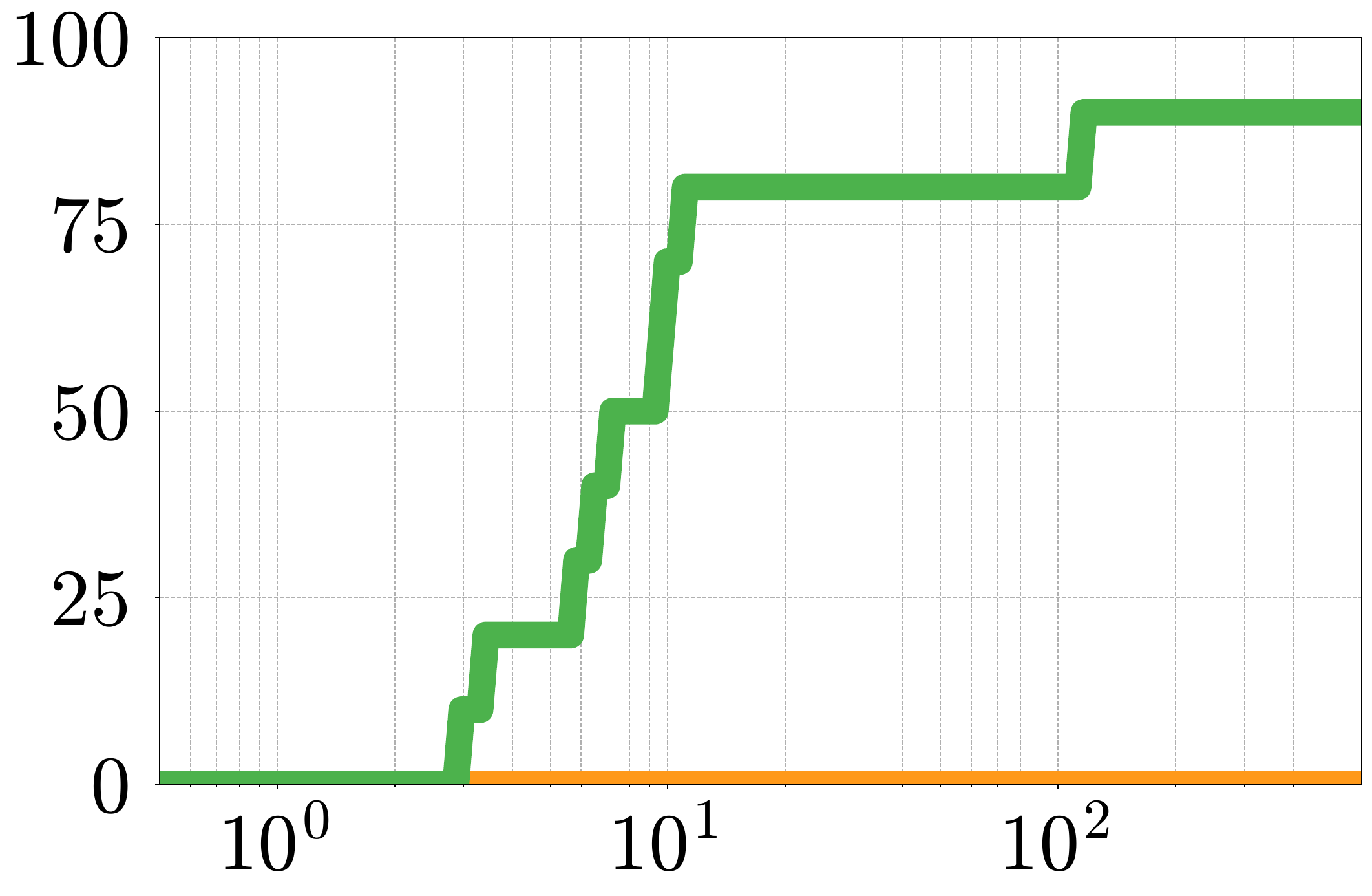}
        \caption{Airship Coordination}
    \end{subfigure}
    \begin{subfigure}[b]{0.24\textwidth}
        \centering
        \includegraphics[width=\linewidth,height=\subfigheight]{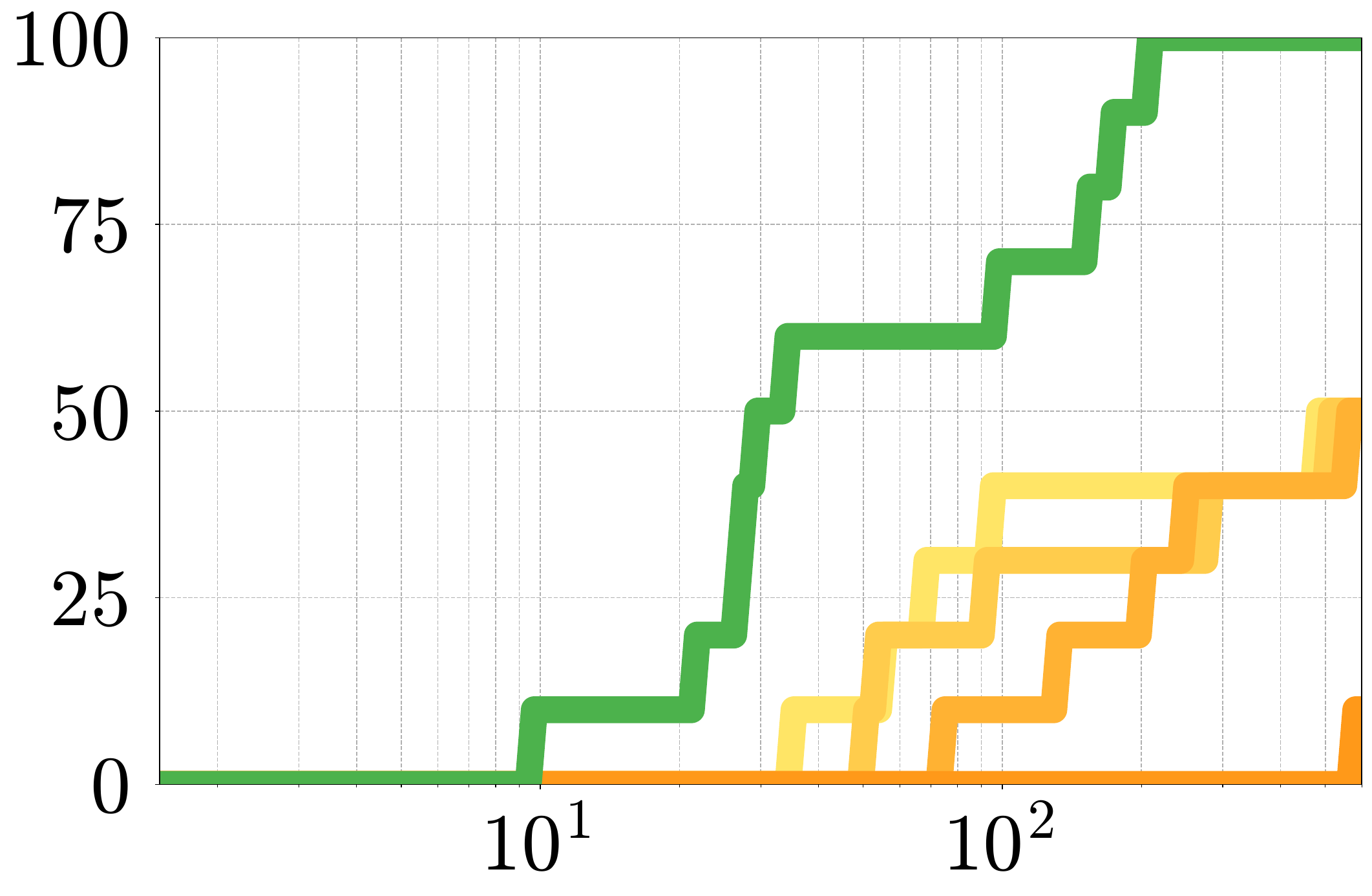}
        \caption{Mobile Navigation}
    \end{subfigure}
    \begin{subfigure}[b]{0.24\textwidth}
        \centering
        \includegraphics[width=\linewidth,height=\subfigheight]{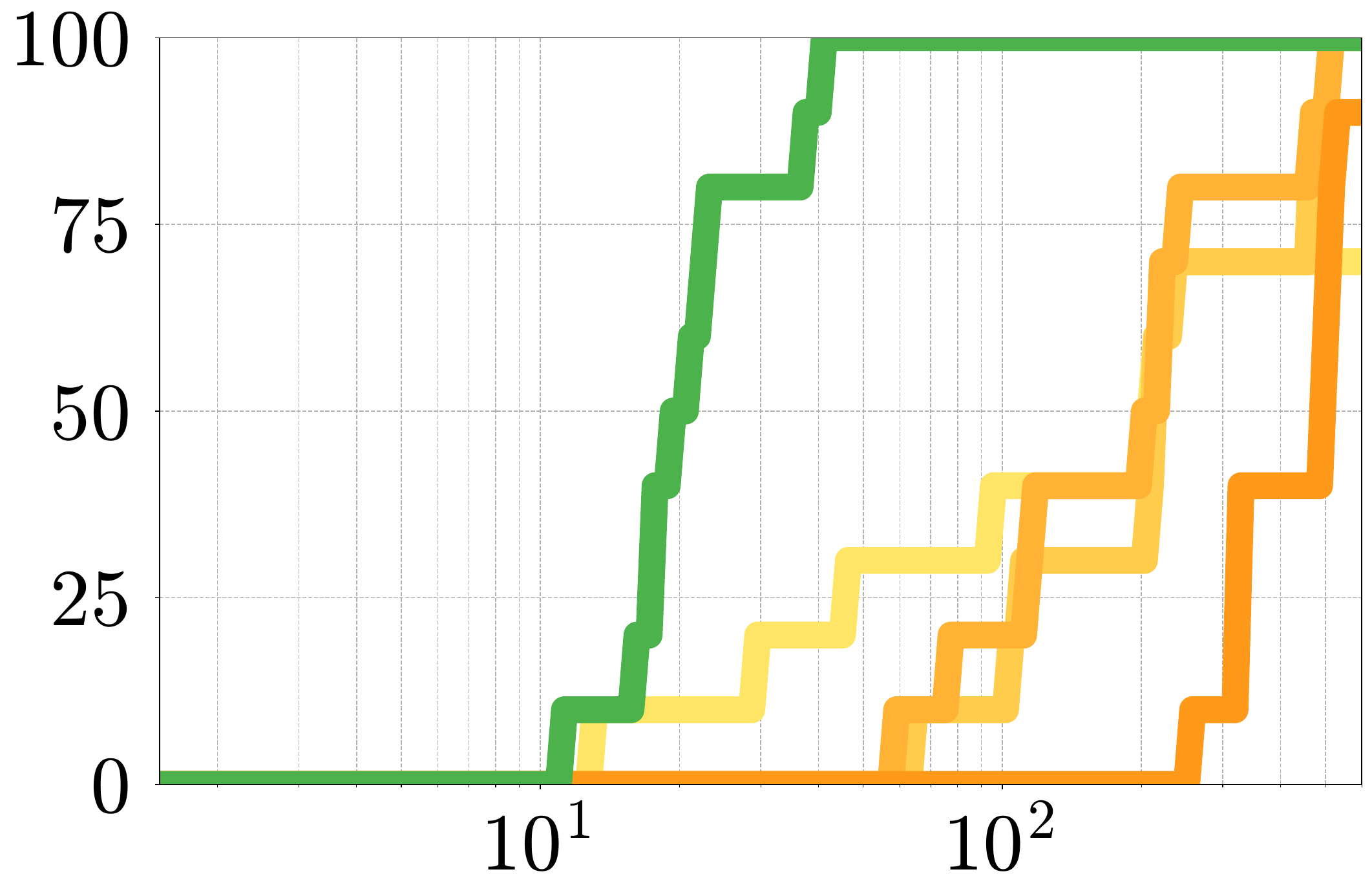}
        \caption{Reeds Shepp Simple}
    \end{subfigure}

    \vspace{0.5em}

    \begin{subfigure}[b]{0.24\textwidth}
        \centering
        \includegraphics[width=\linewidth,height=\subfigheight]{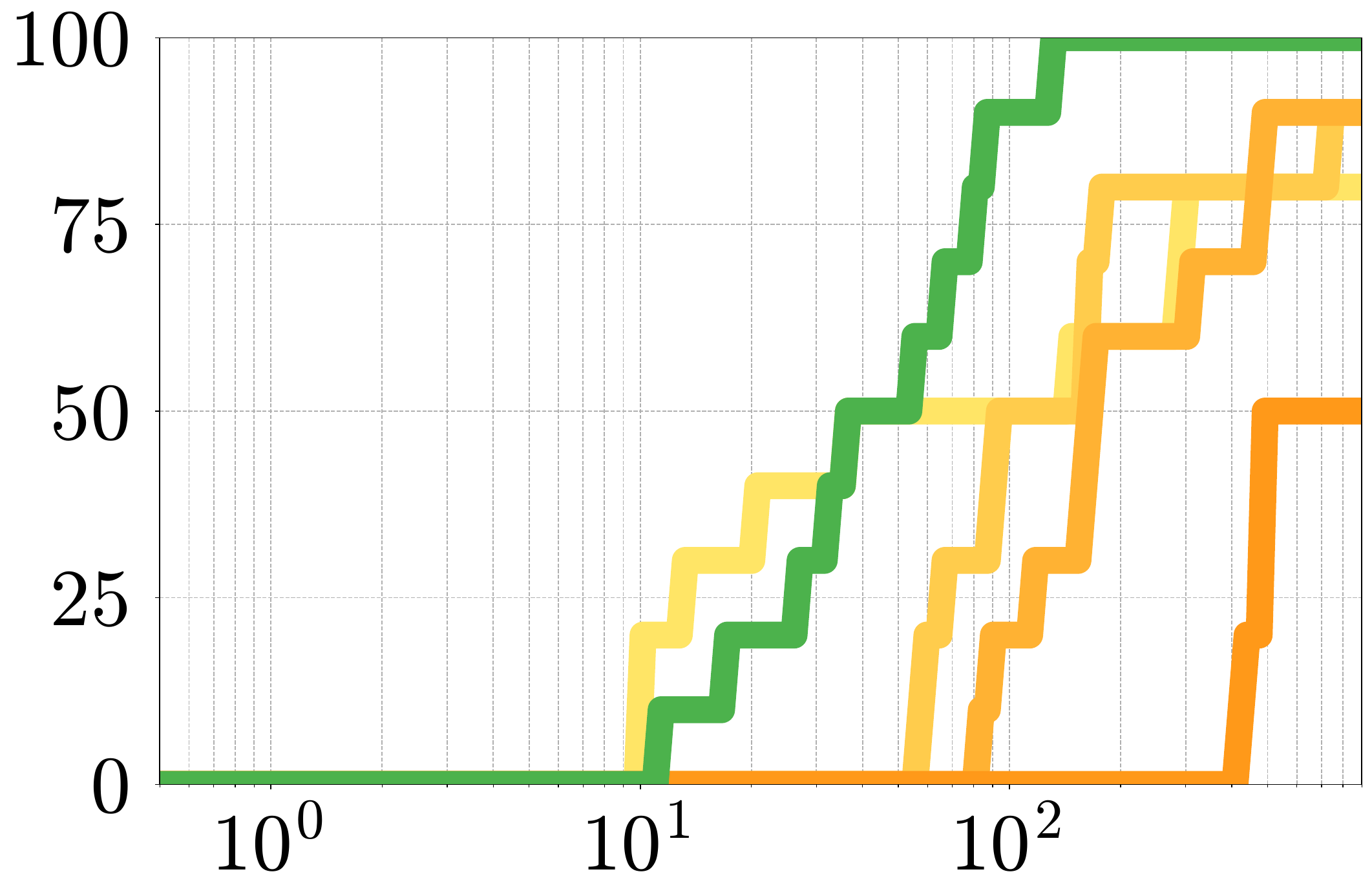}
        \caption{Cube Robots}
    \end{subfigure}
    \begin{subfigure}[b]{0.24\textwidth}
        \centering
        \includegraphics[width=\linewidth,height=\subfigheight]{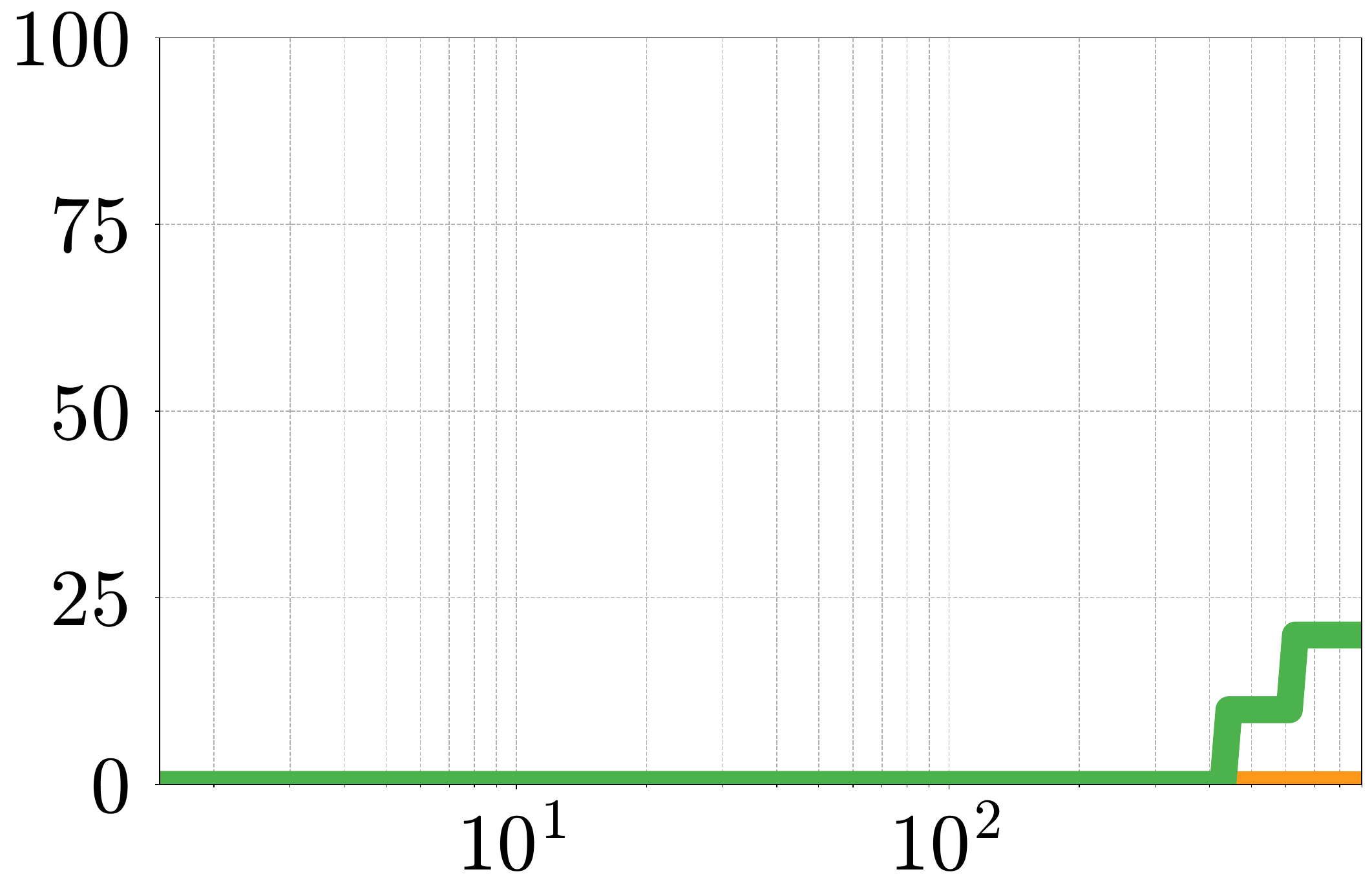}
        \caption{Drones In Pipe}
    \end{subfigure}
    \begin{subfigure}[b]{0.24\textwidth}
        \centering
        \includegraphics[width=\linewidth,height=\subfigheight]{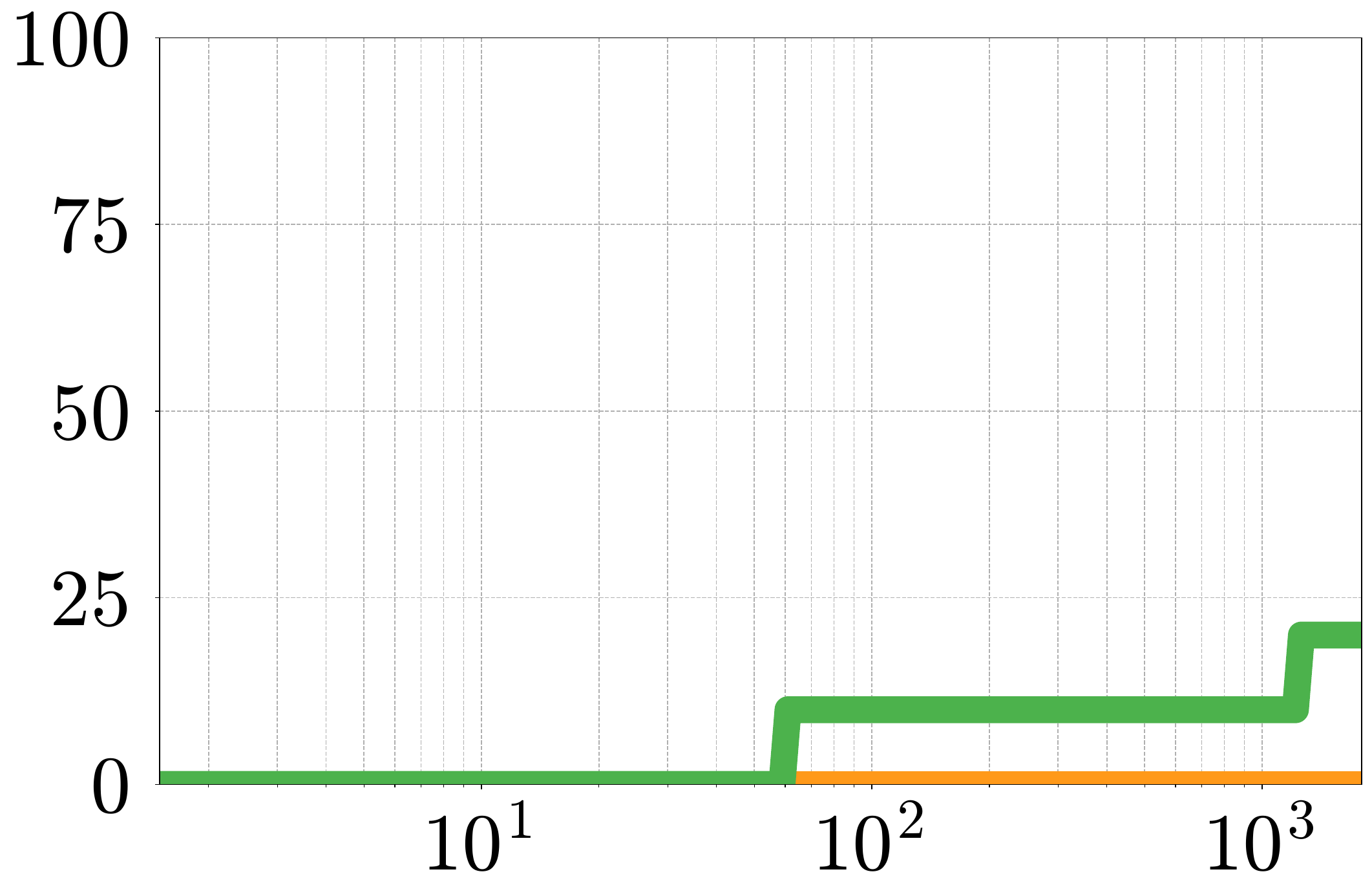}
        \caption{Mobile Robots Forest}
    \end{subfigure}
    \begin{subfigure}[b]{0.24\textwidth}
        \centering
        \includegraphics[width=\linewidth,height=\subfigheight]{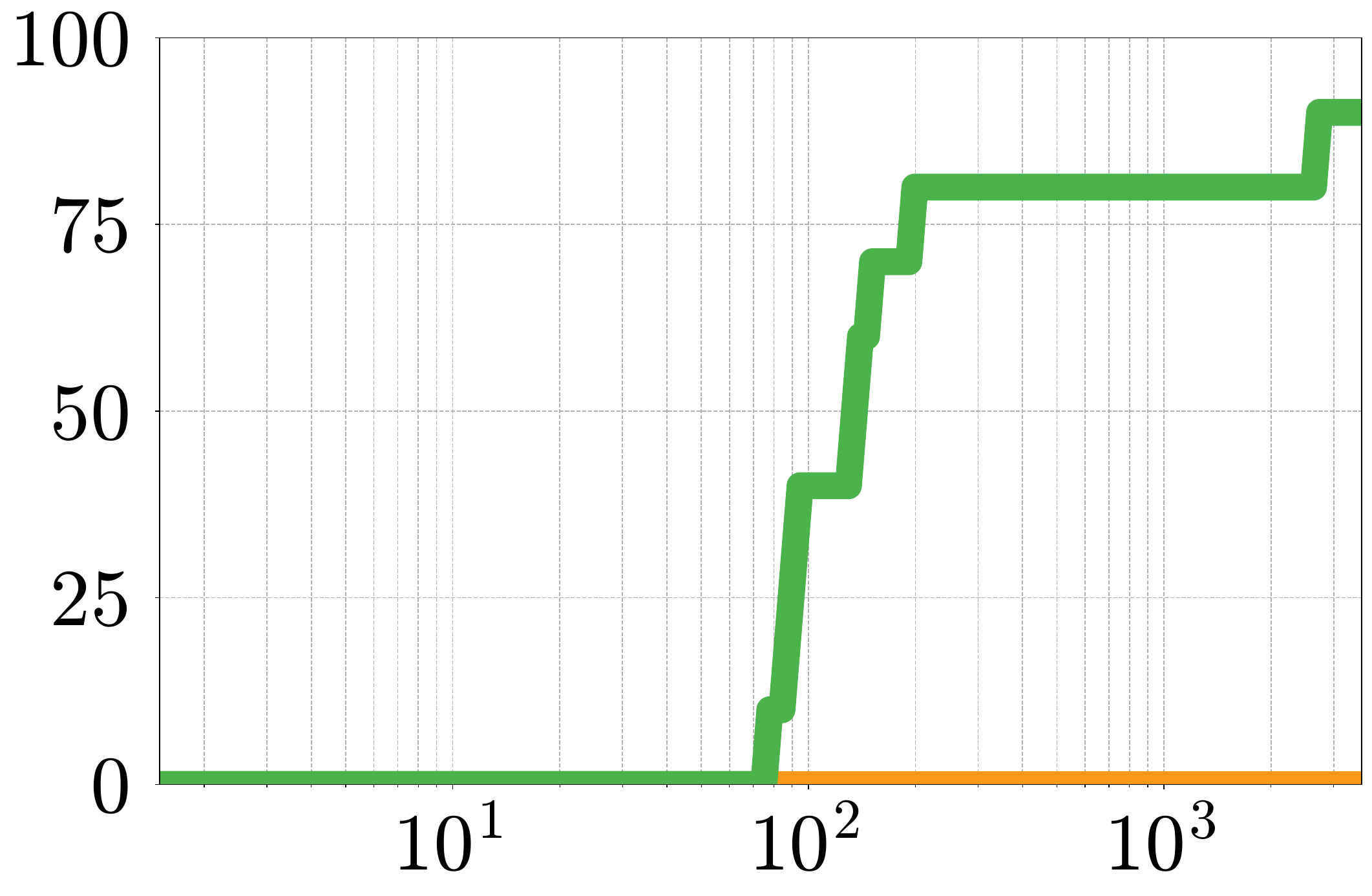}
        \caption{Warehouse}
    \end{subfigure}

    \vspace{0.5em}
\centering

\fbox{\parbox{\dimexpr\linewidth-4\fboxsep-2\fboxrule}{\centering\small
\vspace{0.15cm}
\sqbox{colorFTD} \textbf{\FibrationRRT-Decomposition [Ours]}  \quad
\sqbox{colorDiscreteRRT1} dRRT-1 \quad
\sqbox{colorDiscreteRRT5} dRRT-5 \quad
\sqbox{colorDiscreteRRT10} dRRT-10 \quad
\sqbox{colorDiscreteRRT50} dRRT-50
\vspace{0.15cm}
}}
    \caption{Success graphs for the decomposition benchmarks.}
    \label{fig:benchmark_decomposition}
\end{figure*}

\setlength{\subfigheight}{0.3\linewidth}
\setlength{\scenarioheight}{0.4\linewidth}

\def\vdistance{0.7em}

\begin{figure*}[htbp]
 \centering
    \begin{subfigure}[b]{0.24\textwidth}
        \centering
        \includegraphics[width=\linewidth,height=\subfigheight]{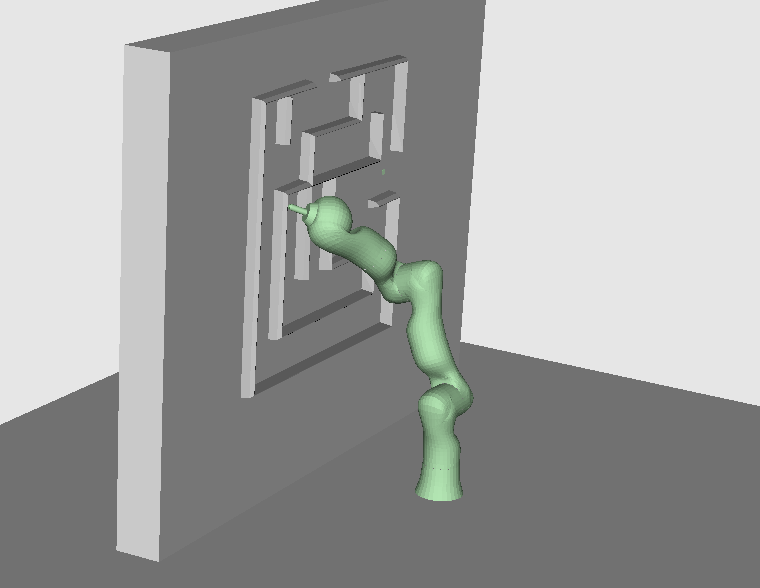}

        \vspace{\vdistance}
        \includegraphics[width=\linewidth,height=\subfigheight]{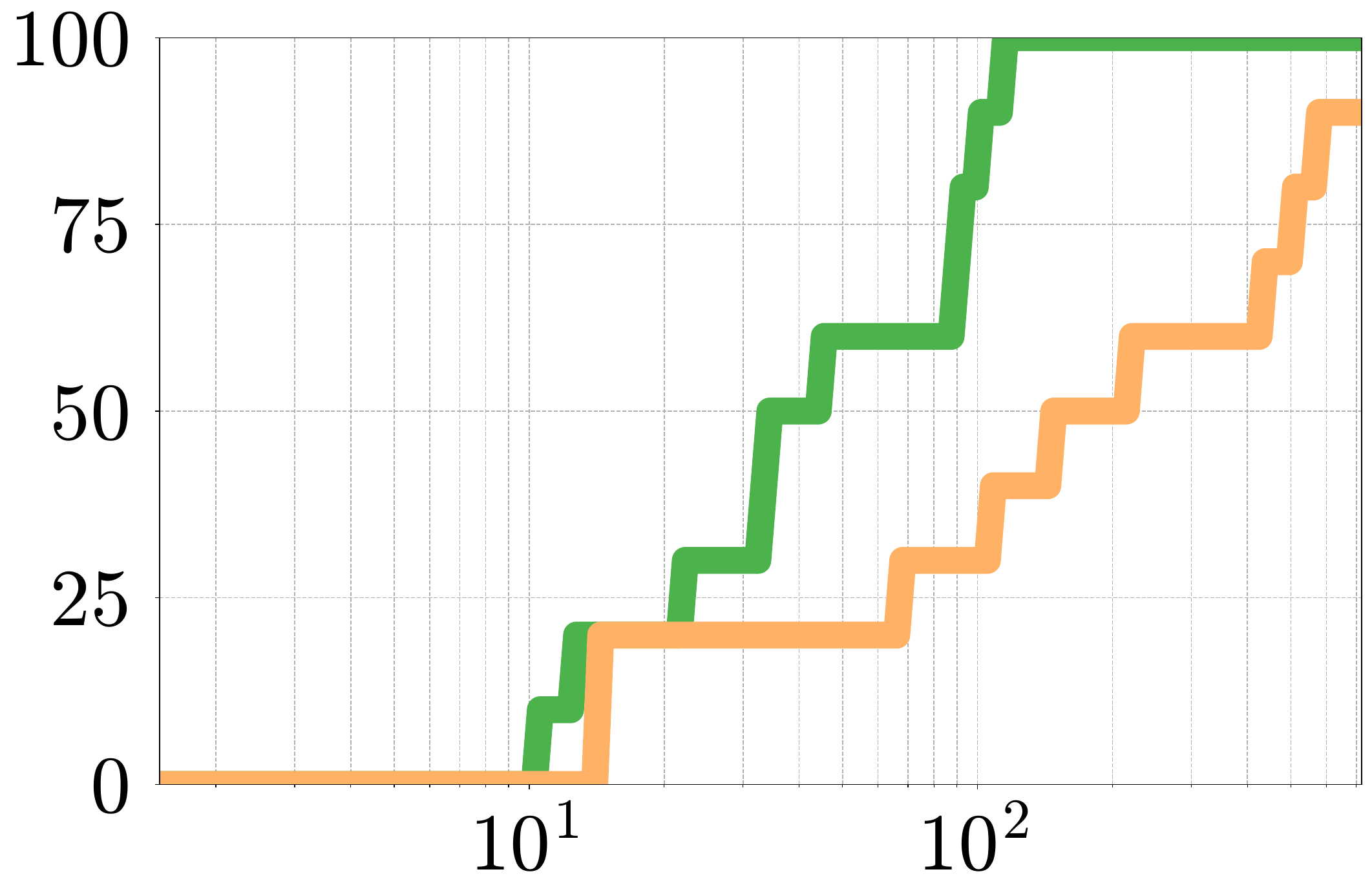}
        \caption{Vertical Maze}
    \end{subfigure}
    \begin{subfigure}[b]{0.24\textwidth}
        \centering
        \includegraphics[width=\linewidth,height=\subfigheight]{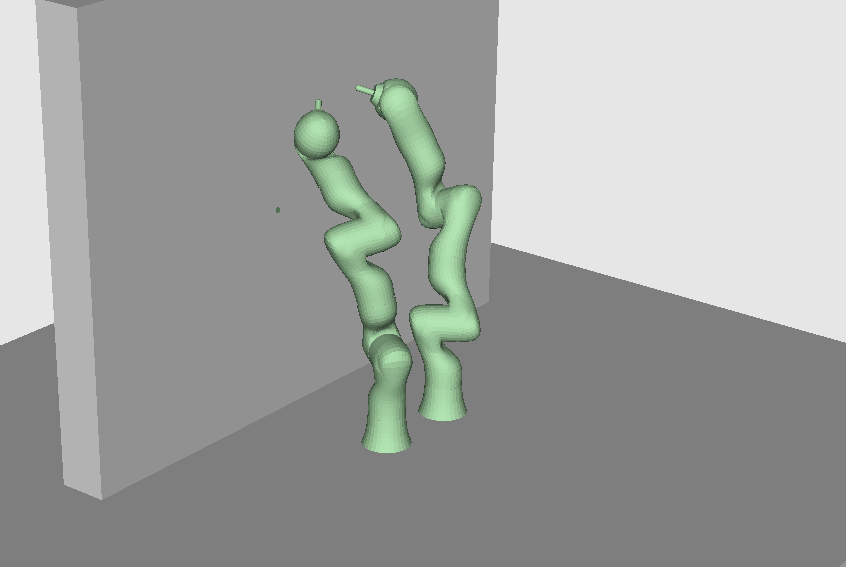}

        \vspace{\vdistance}
        \includegraphics[width=\linewidth,height=\subfigheight]{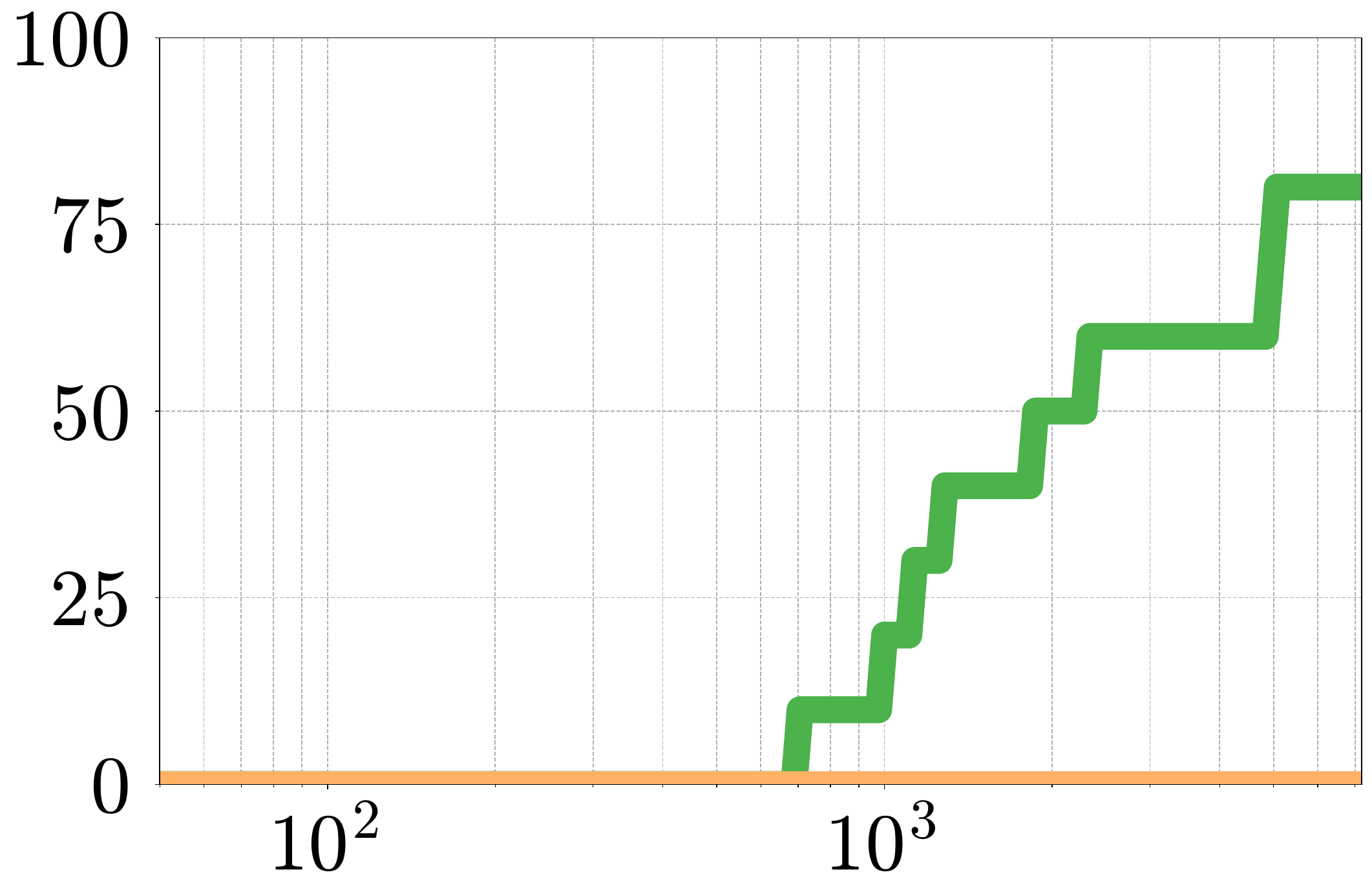}
        \caption{Fixed on Wall}
    \end{subfigure}
    \begin{subfigure}[b]{0.24\textwidth}
        \centering
        \includegraphics[width=\linewidth,height=\subfigheight]{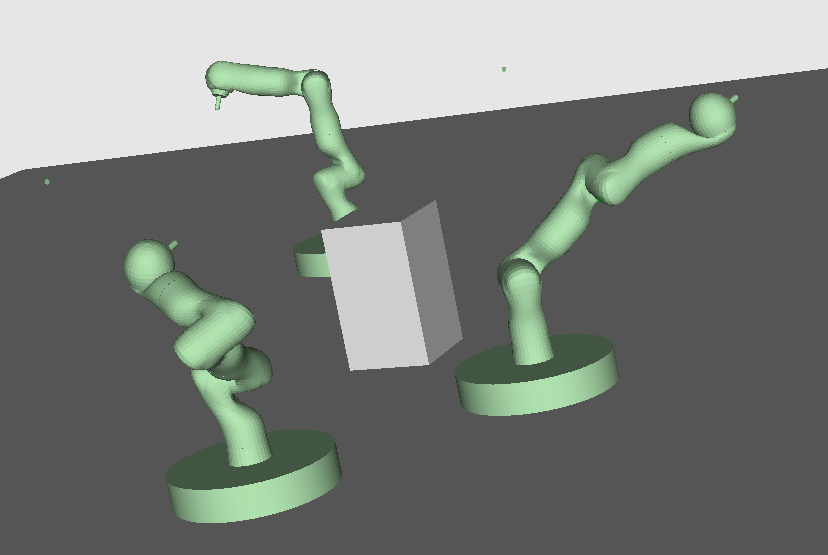}

        \vspace{\vdistance}
        \includegraphics[width=\linewidth,height=\subfigheight]{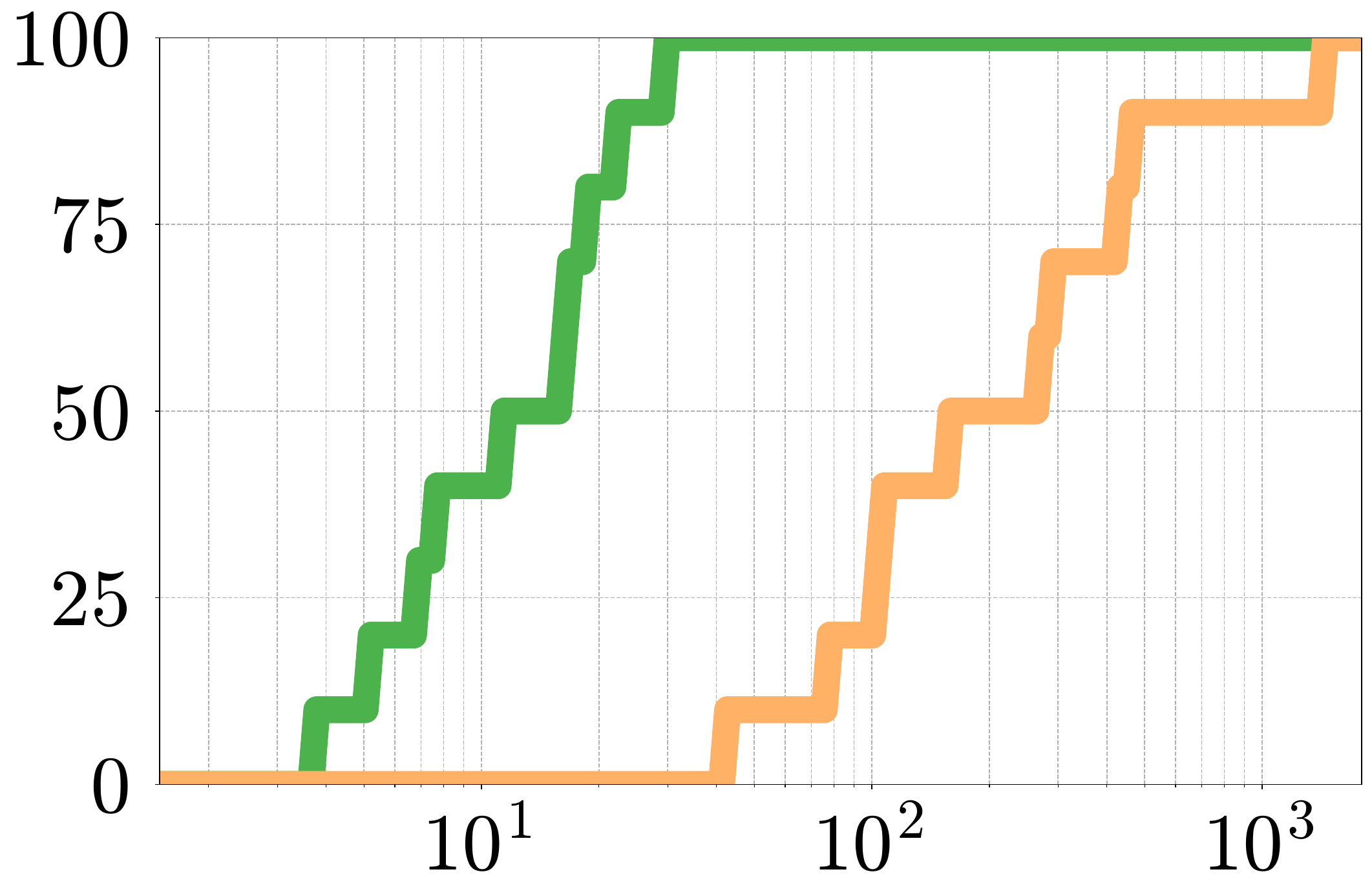}
        \caption{Mobile Manipulators}
    \end{subfigure}
    \begin{subfigure}[b]{0.24\textwidth}
        \centering
        \includegraphics[width=\linewidth,height=\subfigheight]{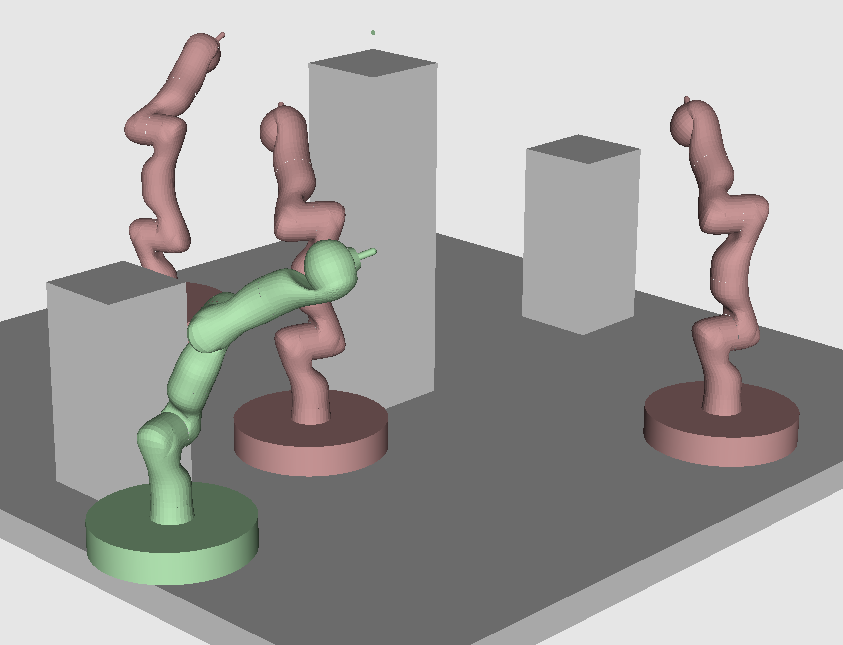}

        \vspace{\vdistance}
        \includegraphics[width=\linewidth,height=\subfigheight]{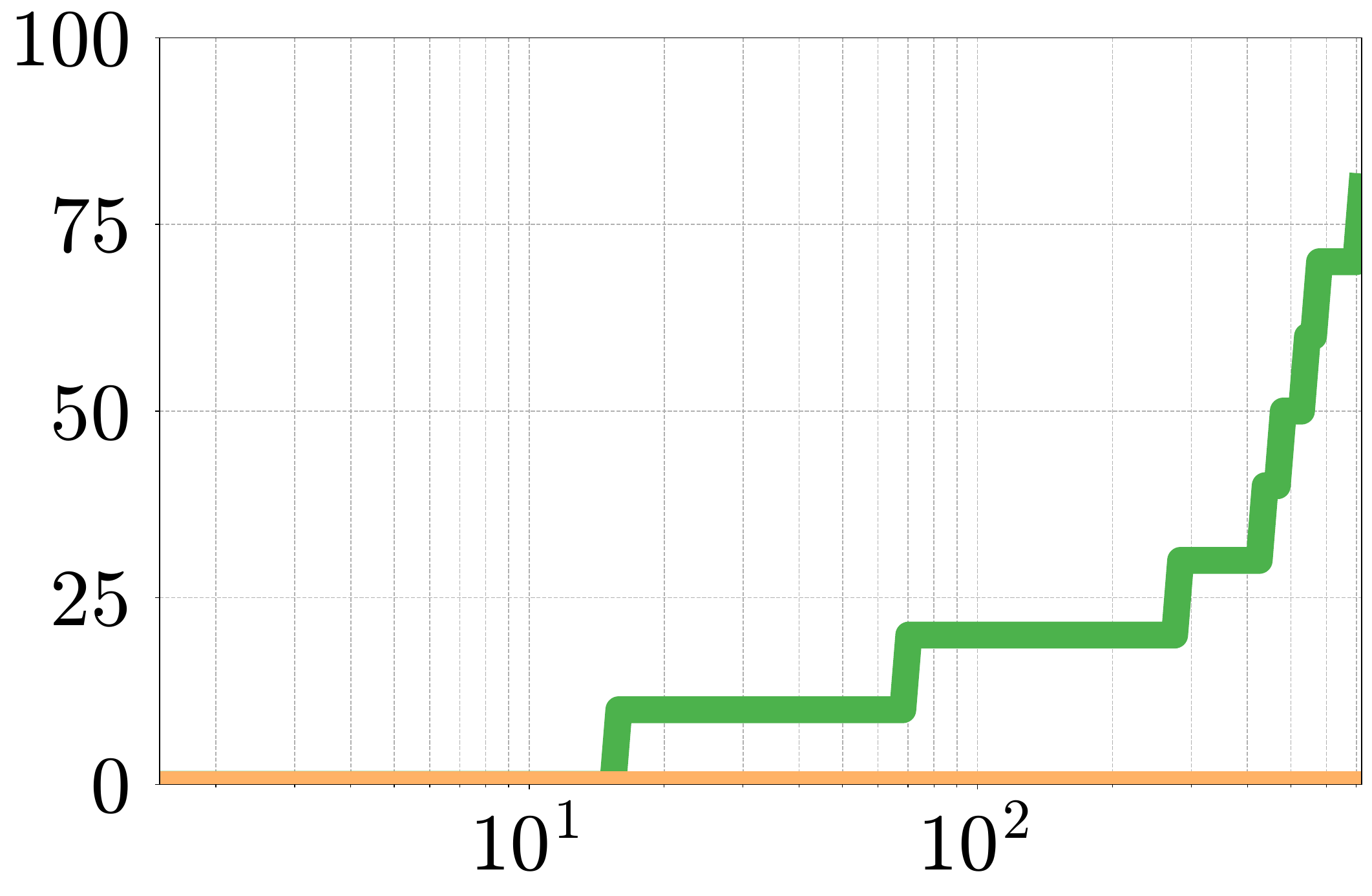}
        \caption{Path Velocity Decomposition}
    \end{subfigure}
    \centering

\fbox{\parbox{\dimexpr\linewidth-4\fboxsep-2\fboxrule}{\centering\small
\vspace{0.15cm}
\sqbox{colorFTD} \textbf{\FibrationRRT [Ours]} \quad
\sqbox{colorRRT} Task-RRT \quad
\vspace{0.15cm}
}}
    \caption{Success graphs of the benchmarks for scenarios from Fig.~\ref{fig:scenarios}. The $x$-axis shows time in log-space, while the $y$-axis shows the success rate from $0$ to $100$ percent. Colors indicate the planner as shown above.}
    \label{fig:benchmark_taskspace}
\end{figure*}
\begin{figure*}[t]
\centering
\captionsetup[subfigure]{justification=centering}

    \begin{subfigure}[b]{0.15\textwidth}
        \vskip 0pt
        \centering
        \begin{tikzpicture}
  \node[root] {$\R^{7}$}
    child[partial] { node[partial] {$\R^{3}$}
    };
\end{tikzpicture}
        \caption{Scenario 1}
    \end{subfigure}

    \begin{subfigure}[b]{0.15\textwidth}
        \vskip 0pt
        \centering
        \begin{tikzpicture}
  \node[root] {$\R^{14}$}
    child[parallel] { node[parallel] {$\R^{7}$}
        child[partial] { node[partial] {$\R^{3}$}
        }
    }
    child[parallel] { node[parallel] {$\R^{7}$}
        child[partial] { node[partial] {$\R^{3}$}
        }
    };
\end{tikzpicture}
        \caption{Scenario 2}
    \end{subfigure}

    \begin{subfigure}[b]{0.5\textwidth}
        \vskip 0pt
        \centering
        \begin{tikzpicture}[
    sibling distance=7.0em
    ]
  \node[root] {$\left(\SE{2}\times \R^{7}\right)^{3}$}
    child[parallel] { node[parallel] {$\SE{2} \times \R^{7}$}
        child[partial] { node[partial] {$\R^{1}$}
        }
    }
    child[parallel] { node[parallel] {$\SE{2} \times \R^{7}$}
        child[partial] { node[partial] {$\R^{1}$}
        }
    }
    child[parallel] { node[parallel] {$\SE{2} \times \R^{7}$}
        child[partial] { node[partial] {$\R^{2}$}
        }
    };
\end{tikzpicture}
        \caption{Scenario 3}
    \end{subfigure}

    \begin{subfigure}[b]{0.15\textwidth}
        \vskip 0pt
        \centering
        \begin{tikzpicture}
  \node[root] {$\SE{2}\times \R^{7} \times T$}
    child[sequential] { node[sequential] {$\SE{2} \times \R^{7}$}
        child[partial] { node[partial] {$\R^{3}$}
        }
    };
\end{tikzpicture}
        \caption{Scenario 4}
    \end{subfigure}

\caption{Fibration trees used for task-space evaluations. Colors indicate type of projection: Parallel~\parallelColorBox, Sequential~\sequentialColorBox, and Partial~\partialColorBox.\label{fig:fibration-trees-task-space-scenarios}}
\end{figure*}

\subsubsection{Scenarios}
The task-space constraints in those four scenarios are limitations on the
movement of the Tool-center-point (Tcp).
In scenario Vertical Maze, a fixed manipulator robot is tasked to
move its Tcp through a maze while keeping the Tcp close to the surface of the wall as shown in Fig.~\ref{fig:benchmark_taskspace} (a).
This is modeled using a single partial fibration as shown in Fig.~\ref{fig:fibration-trees-task-space-scenarios}. The projection is defined as $\pi: X \rightarrow T$ with $X$ being the joint space, and $T$ being the position of the Tcp of the manipulator. The projection mapping itself is given by the forward kinematics from joint space to Tcp space. The lift of the projection is given by solving an inverse kinematics problem with a random seed. This is an admissible projection, since a valid joint space configuration has by definition a valid Tcp position. 

In scenario Fixed on Wall (Fig.~\ref{fig:benchmark_taskspace} (b))
, we combine parallel and partial fibrations by having
two fixed manipulator robots in close proximity, which both need to cross to
reach goal points along the wall. Both robots are constrained to keep the Tcp at all times close to the wall surface. This scenario is modeled using a fibration tree consisting of a parallel fibration and two partial fibrations (see Fig.~\ref{fig:fibration-trees-task-space-scenarios}). 

The third scenario is Mobile Manipulators
(Fig.~\ref{fig:benchmark_taskspace} (c)), which consists of three mobile manipulators
which have different Tcp constraints, whereby two manipulators need to follow a straight line along the coordinate
axes at a height of $0.5$m, while the third robot needs to keep its Tcp at a height
of $0.5$m, but not necessarily along a straight line. We model this using a parallel, and three partial fibration as shown in Fig.~\ref{fig:fibration-trees-task-space-scenarios}.  

In the last scenario, Path Velocity Decomposition (Fig.~\ref{fig:benchmark_taskspace} (d)), we use a time-based state space with a mobile manipulator which has to cross over a floor segment, where it faces both static and dynamic obstacles. We also constrain the manipulator to keep its Tcp at a height of $0.7 \pm 0.05$m above the floor. This scenario is modeled using a fibration tree with a single sequential fibration from the time-based state space to a time-less state space, plus a partial fibration onto the task-constraint (see Fig.~\ref{fig:fibration-trees-task-space-scenarios}).

To find motions which are consistent with time, we require two modifications to the steering method~\citep{kant1986toward, Grothe2022ICRA}. 
First, we need to ensure monotonicity, 
i.e. a connection between two elements $(s_1, t_1)$ and $(s_2, t_2)$ is only possible if $t_2 > t_1$. 
Second, we cannot move arbitrarily fast, i.e. we have to ensure that a maximum velocity is respected. 
This is achieved by constraining the slope $\left|\frac{ds}{dt}\right| < V_{max}$. For this scenario, the mobile robot has $V_{max} = 1.0$.

\subsubsection{Results}

The results are shown in Fig.~\ref{fig:benchmark_taskspace}.
They show that \FibrationRRT achieves a success rate of over $80$ percent
in $4$ out of $4$ cases and reaches a $100$\% success rate in $2$
cases. TaskRRT achieves a success rate of over $80$ percent
on $2$ out of $4$ cases and reach a $100$\% success rate in only one case for the Mobile Manipulators (Fig.~\ref{fig:benchmark_taskspace}c).

In the cases where \FibrationRRT reaches $100$\% success rate, it outperforms Task-RRT in $2$ cases by at least one order of magnitude in reaching the
$100$\% success rate. 
This is the case for Vertical Maze (Fig.~\ref{fig:benchmark_taskspace}a) and the Mobile Manipulators (Fig.~\ref{fig:benchmark_taskspace}c).

\section{Discussion and Limitations\label{sec:discussion}}

Let us discuss the results from the benchmarks in Sec.~\ref{sec:benchmarks}. We compared \FibrationRRT on different scenarios to showcase the flexibility and unified nature of fibration trees. 

The results indicate that both prioritization-based and decomposition-based
approaches are valuable
structures for multi-robot motion planning. While approaches which plan purely
in the composite state space have problems finding solutions, we were able to 
show that \FibrationRRT
can exploit both structures and reach results which improve runtime
significantly (Sec.~\ref{sec:benchmarks_structure}). This is in line with previous results on multi-robot motion
planning, where decomposition-based
approaches~\citep{Wagner2015,Shome2020,Dobson2017scalable,guo2026stac} and
prioritization-based
approaches~\citep{erdmann_1987,VanDenBerg2005Prioritized,Orthey2024IJRR} have
shown similar results and significant improvement on runtime. This again
underlines that \FibrationRRT can efficiently exploit a given fibration tree
structure.

A rather surprising result is the advantage of the decomposition-based approach
compared to a prioritization-based approach. While both approaches perform
similar in the structure benchmark (Sec.~\ref{sec:benchmarks_structure}) on scenarios (a,d,e), there is a significant difference between them
on the remaining scenarios with the decomposition-based approach outperforming
the prioritization-based approach on (b,c,f,h), while the opposite is true for
scenario (g). We believe this could be caused by some scenarios having fewer
inter-robot collisions which have to be resolved (which would be an advantage
for decomposition-based approaches because they would not need to resolve them). However, scenario (g) consists of a
difficult narrow-passage environment where most single-robot movements would
results in inter-robot collisions. In those types of scenarios, resolving robot
paths one by one seems to be advantageous to find a solution quicker.

\subsection{Limitations}

While we showed \FibrationRRT to perform well on all scenarios, there are still several improvements possible. 

\noindent\textbf{Handling of variable-volumetric fibers}
In restriction sampling, we handle all fibers as if they would have the same volume. Often, however, the volume of fibers might differ significantly and could lead to a highly non-uniform distribution of samples. A classical example would be a partial fibration where an inverse IK solver lifts workspace points to the configuration space. The fibers involved could represent subsets of the full configuration space, lower-dimensional manifolds, or even zero sets.
To handle such fibrations, we could add an additional distance check, which estimates the volume of a fiber by comparing samples lying on a single fiber. Depending on this check, we could adjust the global sampling distribution for graph- and path-restriction sampling accordingly. 


\noindent\textbf{Parallel execution of parallel fibrations}
Nodes in a parallel fibrations are usually independent, and could be computed in parallel to speed the method up. This could be handled by analyzing the fibration tree and creating a new thread for each decomposition split. This would create a set of independent subtrees, which each can then be run in parallel.

\noindent\textbf{Integration of specialized solutions} 
While we propose a unified framework, there are some cases which can be solved using specialized methods. This includes:
\begin{itemize}
    \item The pebbles-on-a-graph reduction for parallel fibrations with identical base spaces (homogeneous decompositions)~\citep{kornhauser_1984, yu_2016, ryan_2010, luna_2011}.
    \item Implicit graph search for parallel fibrations with non-identical base spaces (heterogeneous decompositions)~\citep{wagner_2015, Shome2020}.
    \item Advanced path section methods for sequential fibrations~\citep{Grey2017, Reid2019, Orthey2021TRO, Ichter2019LatentSpaces}.
    \item Reciprocal velocity obstacles for simple robot shapes in path-velocity decompositions~\citep{Vandenberg2011}.
    \item Specialized planners for path-velocity decompositions, like the admissible velocity propagation (AVP) method~\citep{pham2017admissible} to forward propagate admissible velocity profiles, or the space-time RRT (ST-RRT*)~\citep{Grothe2022ICRA} to handle state spaces with unbounded time.
\end{itemize}
    
Eventually, we aim to integrate all those specialized solutions as modules into our framework to increase the efficiency of \FibrationRRT, and to make it a one-stop solution for any multi-robot motion planning problem.

\noindent\textbf{Infeasible Problems}
\FibrationRRT can currently only deal with feasible problems, but it cannot finish in finite time on an infeasible problem. Recently, we used sparse roadmaps to find probabilistic guarantees that a problem is infeasible~\citep{Orthey2021ICRA}. A combination of \FibrationRRT with sparse planners~\citep{Simeon2000, dobson_2014} would be an effective extension to tackle infeasible problems. Another valuable approach would be to investigate exact infeasibility proofs in this context~\citep{Li2023Infeasibility}, which could be beneficial for lower-dimensional projection mappings.

\noindent\textbf{Asymptotically Optimal Planner}
While \FibrationRRT does not guarantee asymptotic optimality (AO)~\citep{Gammell2020Survey}, there is in theory nothing preventing us from extending it into an AO planner. However, while this is a fruitful future venue to explore, it is beyond the scope of the current paper, since the focus is on the general structure supporting multi-robot planners and how we can efficiently exploit them.

\noindent\textbf{Additional Heuristics} 
\FibrationRRT could benefit from additional heuristics like informed sets~\citep{Gammell2020}. However, the application is not straightforward, because pruning of states might prune edges belonging to the path restriction of the global optimal solution in the total space (root node of fibration tree). A better communication between nodes could alleviate this problem by propagating solution costs downwards.

\noindent\textbf{Informative Selection}
The current implementation of \FibrationRRT selects factors for expansion based on the exponential selection criterion as described in Sec.~\ref{sec:planner-selection}. However, as previous works have shown~\citep{Orthey2019ISRR, Orthey2021ICRA}, different selection methods could improve the planning time. In future work, we envision a selection unit, which collects data about all runs of all nodes in the fibration tree. In a subsequent step, this selection unit uses the collected data to choose a node which has the most utility of leading to a successful outcome. Such a system could also be learned from previous planning data.

\noindent\textbf{Interpolation and Differentiability}
The framework of fibration trees also provides us with the opportunity to
further exploit path restrictions for more efficient planning. We envision two
forms of this. First, interpolation methods could exploit valid paths on a base
space as additional information. This information could be used to generate more
valid paths on the bundle space in comparison to geodesic
interpolation~\citep{Orthey2018IROS}. Second, graph restrictions could be used
to add differentiable information to a planner similar to the artificial potential field approach~\citep{Khatib1986}. This is left for future work.

\section{Conclusion}

We proposed fibration trees, a unified approach to simplify and decompose high-dimensional, multi-robot motion planning problems. 
Our formulation allows us to directly compare prioritization-based and decomposition-based approaches and create a single algorithm (\FibrationRRT) which is independent of the underlying structure. 
In our benchmarks, we demonstrated that \FibrationRRT has minimal overhead compared to RRT, that it can handle fibration
trees of different kinds, and that we can use \FibrationRRT to directly compare
prioritization-based and decomposition-based approaches.
We also showed that \FibrationRRT performs similar to prioritization-based approaches on prioritization structures, and can outperform decomposition-based approaches when using a decomposition structure. Finally, we showed that 
task-space constraints can be integrated into fibration trees as another
fibration type. This comprehensive evaluation on 32 scenarios shows that \FibrationRRT can tackle a wide variety of scenarios and fibration trees, thereby underlying the unified nature of fibration trees for multi-robot motion planning problems.

We believe that fibration trees will not only simplify future benchmarking of
multi-robot planners, but could also lay the foundation for a unified
multi-robot planning method which is independent of the underlying structure. 
This could establish fibration trees as a general-purpose framework for multi-robot planning problems.

\bibliographystyle{style/SageH}
\balance
\bibliography{bib/general}

\appendix
\section{Motion Planning using the Hopf Fibration\label{appendix:hopffibration}}

The Hopf fibration~\citep{hopf1931abbildungen} is a prototypical example of a
fibration. It formalizes a projection from the sphere $\S^3$ (an object
embeddable in four-dimensional space) to the sphere $\S^2$ (embeddable in
three-dimensional space). The main idea is to decompose $\S^3$ into subspaces
(also called cosets~\citep{judson2020abstract}) of identical copies of $\S^1$. Each copy of $\S^1$ is then associated to a point on $\S^2$ (taking the quotient of $\S^3/\S^1$ and associating equivalence classes of the quotient with $\S^2$). 

Its relevance to motion planning comes from the double cover of \SO{3} by the
sphere $\S^3$. \SO{3} is equivalent (homeomorphic) to $\S^3$ with antipodal
points identified, i.e. $\SO{3} \cong \S^3 / \{x \sim
-x\}$~\citep{Yershova2010HopfFibration}. This can be exploited for planning
purposes like efficiently sampling $\SO{3}$~\citep{Yershova2010HopfFibration} or
by constructing controllers for UAVs which are less prone to
singularities~\citep{Watterson2020HopfFibration}. 

In our example, we use the Hopf mapping, whereby points on $\S^3$ are associated with quaternions, which are elements $q$, represented by tuples $(a,b,c,d)$ such that $q = a + b\cdot i + c\cdot j + d\cdot k$ and $\|q\|=1$. Those quaternions are mapped onto the $\S^2$ sphere as

\begin{equation}
  f(a,b,c,d) = \begin{bmatrix}
       \atantwo(x_1, x_0)\\
       \arccos(x_2/r)
  \end{bmatrix}
\end{equation}
whereby $x_0 = a^2 + b^2 - c^2 - d^2$, $x_1=2 (a d + b c)$, $x_3 =2 (b d - a c)$, and $r = \sqrt{x_0^2 + x_1^2 + x_2^2}$ is the radius of the sphere. For visualization purposes, we employ a second projection, the stereographic projection of the quaternion $q$ onto the 3-dimensional space as
\begin{equation}
  g(a,b,c,d) = \begin{bmatrix}
       b / (1 - a)\\
       c / (1 - a)\\
       d / (1 - a)
  \end{bmatrix}
\end{equation}.

Both those maps allow us to visualize quaternions in 3-dimensional space, either
by their associated point on the sphere $\S^2$ or by their stereographic projection.

\begin{figure}[t]
    \centering
    \tikzset{
        bendArrow/.style={->,>=stealth',shorten >=3pt,shorten <=3pt},
        spaceNode/.style = {shape=rectangle, rounded corners,
        draw, align=center,
        top color=white}
    }
    \begin{tikzpicture}
        \node[spaceNode] (S3) at (0,0) {$\S^3$};
        \node[spaceNode] (S2) at (3,0) {$\S^2$};
        \node[spaceNode] (S3Proj) at (-3,0) {$\R^3$};
        \draw[bendArrow] (S3.east) to [bend left,looseness=0.8] node[above] {$f(a,b,c,d)$} (S2.west);
        \draw[bendArrow] (S2.west) to [bend left,looseness=0.8] node[below] {$h(\phi, \theta, \lambda)$} (S3.east);
        \draw[bendArrow] (S3.west) to node[above]{$g(a,b,c,d)$} (S3Proj.east);
    \end{tikzpicture}
    \caption{Hopf fibration with associated mappings. The Hopf map $f$ from $\S^3$ to $\S^2$, the inverse hopf map $h$ from $\S^2$ and fiber $\S^1$ to $\S^3$, and the stereographic projection $g$ from $\S^3$ to 3-dimensional space.}
    \label{fig:hopffibration-mappings}
\end{figure}

Finally, we require the inverse mapping from the sphere $\S^2$ to $\S^3$ to lift points from the sphere up to $\S^3$. This requires both a point on $\S^2$ as spherical coordinates and an element of $\S^1$ (the fiber). Let $\theta \in [-\pi,+\pi]$ and $\phi \in [0,\pi]$ be the coordinates of $\S^2$ (azimuth and elevation) and let $\lambda \in [0, 2\pi]$ be an element of $\S^1$. We can then lift the sphere element together with the fiber element by using
\begin{equation}
  h(\phi, \theta, \lambda) = \begin{bmatrix}
    -t_1 \sin(\lambda)\\
    +t_1 \cos(\lambda)\\
    t_2 \cos(\lambda) + t_3 \sin(\lambda)\\
    t_3 \cos(\lambda) - t_2 \sin(\lambda)
  \end{bmatrix}
\end{equation}
whereby $t_1 = \frac{(1+x)}{n}$, $t_2 = \frac{y}{n}$, and $t_3 = \frac{z}{n}$ with $n = \sqrt{2(1 + x)}$, $x = \sin(\phi) \cos(\theta)$, $y = \sin(\phi)\sin(\theta)$, and $z = \cos(\phi)$. Those mappings are summarized in Fig.~\ref{fig:hopffibration-mappings}.

\section{Proof of Completeness\label{appendix:proof-completeness}}

We provide here a proof of probabilistic completeness (PC) for
\FibrationRRT. The proof requires the following assumptions to hold:

\begin{enumerate}
    \item \textbf{Single Node Probabilistic Completeness} The underlying planner
      used to plan paths on a single node is assumed to be PC. 
      Any choice of tree-based~\citep{Kuffner2000} or
      roadmap-based planners~\citep{Kavraki1996} are possible. 
      \FibrationRRT is fulfilling this property in \texttt{PlanNode}, since it is based upon
      RRT~\citep{Kuffner2000}.
    \item \textbf{Uniform Infinite Recurrence}. Given a fibration tree, we
      assume that every node in a fibration tree is chosen infinitely many times
      when the number of iterations of the planner goes to infinity. 
      This is fulfilled by the \texttt{SelectNode} function for \FibrationRRT.
    \item \textbf{Liftable} Every fibration has to be liftable (or partially
      liftable) to ensure that information can be transfered upwards (see Sec.~\ref{sec:fibration-axioms}).
    \item \textbf{Admissibility} Every fibration in the fibration tree has to be admissible as
      detailed in Sec.~\ref{sec:fibration-axioms}. 
      This is ensured for \FibrationRRT by only using admissible projections throughout.
\end{enumerate}

Using those assumptions, we can prove the required property of \FibrationRRT.

\begin{theorem}[Probabilistic Completeness of \FibrationRRT]
    
Given Assumptions 1--4, together with a motion planning problem and an arbitrary fibration
  tree (as defined in Sec.~\ref{sec:fibration-trees}), \FibrationRRT will find a feasible path if one exists, with probability one when the number of iterations goes to infinity.

\end{theorem}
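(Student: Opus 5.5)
We prove the statement by induction on the fibration tree $T_X$, from the leaves up to the root, using Assumptions 1--4 as the only ingredients. The claim to establish for every node $A$ of $T_X$ is: \emph{if the (projected) planning problem on $A$ has a feasible solution, then with probability one the node $A$ eventually becomes active, is selected infinitely often, and is eventually marked solved.} Applied to the root $X$, this gives the theorem. Note that Assumption 4 is what makes the induction start. If $p:[0,1]\rightarrow X_f$ is a feasible path, then admissibility of each fibration $\pi_{AB}$ implies that $\pi_{AB}\circ p$ is feasible on $B$. Composing along the tree, every node, and in particular every leaf, carries a feasible projected problem whose start and goal are obtained by projecting $x_I$ and $X_G$ (top-down goal), or are compatible with them through valid leaf-state lifting (bottom-up goal).

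\textbf{Base case (leaves).} Leaves are in $\PriorityQueue$ from Line 1 of Alg.~\ref{alg:fibrationrrt}. By Case 1 of Alg.~\ref{alg:restrictionsampling}, \texttt{PlanNode} on a leaf is plain RRT with uniform sampling. Assumption 2 gives infinitely many calls to \texttt{PlanNode} on each leaf. Assumption 1 (PC of RRT), applied to the concatenation of these calls, then yields a solution with probability one. The calls can be concatenated because the datastructure $G$ persists across selections, so interleaving is harmless.

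\textbf{Inductive step.} Let $A$ be a non-leaf node whose children $B$ (one child for a sequential or partial fibration, children $B_1,\dots,B_M$ for a parallel fibration) all satisfy the claim. With probability one, after finitely many iterations all children are solved. Alg.~\ref{alg:updateactivenodes} then inserts $A$ into $\PriorityQueue$; for parallel fibrations, \texttt{SelectNode} waits until all siblings are solved, which also happens almost surely. From then on, Assumption 2 gives infinitely many selections of $A$.

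It remains to show that restriction sampling on $A$ still drives RRT to a solution. \texttt{PathSectionSearch} only adds valid edges to $G$, so it can be ignored for completeness. The key point is that the support of the restriction-sampling distribution must cover a neighbourhood of some feasible path on $A$. Here I would use the following facts in combination. First, the base datastructure on $B$ keeps growing, since $B$ is still selected infinitely often; by PC and density of the RRT vertices on $B$, the graph restriction $\pi^{-1}(G_B)$ becomes dense in the projection of a $\delta$-clearance neighbourhood of $\pi\circ p$. Second, fiber sampling is liftable (Assumption 3), with full support on fibers, and the sampling perturbation bias gives positive probability to an open neighbourhood of each sampled fiber point. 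Third, for parallel fibrations, the insertion map of independently sampled factor points reaches any product of neighbourhoods with positive probability. Together these give a positive lower bound, uniform over later iterations, on the probability of sampling each ball in a finite covering of a clearance tube around $p$. The standard RRT completeness argument (as in \citep{Kuffner2000}) then applies verbatim, with uniform sampling replaced by this sampler, and $A$ is solved almost surely.

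\textbf{Main obstacle.} I expect the hard step to be the positive-density claim. It needs a feasible path on $A$ with positive clearance, an assumption implicit in Assumption 1, and the observation that the perturbation bias prevents sampling from collapsing onto the measure-zero set $\pi^{-1}(G_B)$. For partial fibrations this also requires that IK lifts have positive probability of landing near the chosen path. I would state this as a mild regularity requirement folded into Assumption 3 and argue it through the random IK seed. Finally, the event ``all nodes solved'' is a finite intersection of probability-one events, so the root is solved with probability one.
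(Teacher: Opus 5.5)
Your overall architecture matches the paper's: a leaf-to-root induction over the fibration tree, admissibility pushing a single feasible root path down to every node, and an RRT series-of-balls argument. The problem is that your induction does not close as written.

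The invariant you carry up the tree is that each child $B$ is eventually solved. The inductive step at $A$ consumes something different: that the datastructure on $B$ becomes dense near $\pi\circ p$, with enough vertices there that \texttt{SampleDatastructure} lands near any given base point with non-vanishing probability. For a leaf this is the classical density of RRT vertices under uniform sampling. For a non-leaf $B$, it holds only because $B$'s own restriction sampler charges every ball around $\pi_B\circ p$. That is the conclusion of your inductive step one level down, not part of your hypothesis. Being solved says nothing about where $B$'s vertices lie. $B$ could be solved by a tree that never comes near $\pi\circ p$ (say, in another homotopy class), and then restriction sampling on $A$ never approaches $p$. So the argument as stated stops after height one.

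The same missing ingredient sits behind your ``uniform over later iterations'' lower bound, which you assert rather than prove. Since $B$'s tree keeps growing, the mass near a base point is the fraction of $B$'s vertices there. Bounding that fraction below requires knowing that $B$'s sampler puts mass at least $c>0$ on small balls, so that $B$ adds vertices there at a linear rate. Alternatively, drop uniformity and use a divergent sum of conditional hitting probabilities.

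The fix is to make the sampler property the invariant, which is exactly what the paper does. It proves by structural induction that restriction sampling on every node is dense in that node's constraint-free space (its property P1). It propagates this through the sequential, partial, and parallel cases via admissibility, and applies the RRT completeness argument only once, at the root. Strengthen your claim to: ``the sampler on $A$ eventually charges every ball of a finite cover of a tube around $\pi_A\circ p$ with probability bounded below, hence the tree on $A$ becomes dense there and $A$ is solved.'' With that, your argument closes. It also justifies node activation via \texttt{UpdateActiveNodes}, which the paper simply folds into Assumption 2.

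One minor correction to your stated obstacle: the perturbation bias is not needed for positive probability. A sampler supported on the measure-zero set $\pi^{-1}(G_B)$ still charges every open ball it meets, as long as fiber sampling has full support and $G_B$ has a vertex in the ball's projection.
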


\begin{proof}

To prove PC for \FibrationRRT, we rely on the PC proofs given for RRT-like
  planners~\citep{berenson2011task, solovey2020revisiting}, which show PC by relying on a two-step method. In the first step, it is shown (or assumed) that a dense sampling sequence is given on the state space. In the second step, the series-of-balls argument is used. This involves assuming that a feasible path with clearance $\epsilon$ exists, and then covering this path with balls of size $\delta < \epsilon$. This guarantees that the $\delta$ neighborhood of the path has positive volume and is constrained-free. Using mathematical induction, it is then shown that the first ball is reached by the tree-based planner (base case), and it is then shown that ball $k+1$ is reached once ball $k$ is reached (induction step).
This relies on the fact that the dense sampling sequence will, eventually, sample a point in any positive volume ball in the state space. In fact, we can make a stronger statement: RRT-like algorithms are PC if the provided sampling sequence is dense in the \emph{constrained-free} state space. This is enough, since the balls are assumed to be lying exclusively in the constrained-free state space.

Our argument for PC for \FibrationRRT builds upon this proof. We argue that \FibrationRRT is PC given any fibration tree, because (a) the planner on the root node is PC, and (b) since the restriction sampling is dense on the \emph{constrained-free} state space. Let us show that both (a) and (b) are true. Property (a) is true by definition since RRT is used to plan on a single node and since by the uniform infinite recurrence (Assumption 2), this RRT is run for infinitely many iterations. It remains to show that (b) is true.

To show that restriction sampling is dense on the \emph{constrained-free} state space, however, requires a bit of work. 
We formulate here a proof based upon structural induction on tree heights~\citep{cormen2022introduction}. We divide this into two sections, whereby we first explain how structural induction extends induction, and we then use this to prove restriction sampling to be dense.

\noindent\textbf{From Induction to Structural Induction}

In induction, we try to prove a property $P$ defined on the natural numbers $\N$ which come with their standard ordering $<$. To prove $P$ for any $n \in \N$, we need to prove two cases: First we prove the \emph{base case}, namely that $P(0)$ is true. Second, we prove the \emph{induction step} that if $P(n)$ is true, then so is $P(n+1)$.

Structural induction generalizes this statement from the natural numbers to
  ordered structures like trees~\citep{cormen2022introduction}. If we want to prove a property $P$ on a tree $T$, we first define an ordering of trees by defining the ordering relation $T < T'$ if $T$ is a subtree of $T'$. Given this ordering, we can prove properties of a tree $T$ by proving the \emph{base case}, namely that it is true for a single leaf-node tree. Once this is proven, we can prove the \emph{induction step} namely that if the property is true for a finite set of trees $T_1,\ldots,T_N$, then it is true for $T'$, which is the tree obtained by joining $T_1,\ldots,T_N$ under a new root node. 

\noindent\textbf{Denseness of Restriction Sampling}

Let us prove, for any fibration tree, the property "Restriction sampling is dense on the constrained-free configuration space" (P1). To do that, we need to prove P1 on a single leaf node (base case), and we need to prove that P1 holds on a fibration tree $A$ which is obtained as the root node of a set of fibration trees  $B_1,\ldots,B_N$ on which P1 holds (induction step).

The base case is straightforward, since \FibrationRRT plans only on a single state space, thereby operating as the underlying RRT planner~\citep{Kuffner2000}, which is using uniform sampling, which is dense on the state space. 

The main work lies in the induction step. Let us assume that P1 is true for a set of trees $B_1, \ldots, B_N$, and that we want to show that P1 still holds on tree $A$, obtained by joining $B_1, \ldots, B_N$ under a new root node. There are three cases which we have to look at.

\begin{enumerate}
    \item \textbf{Sequential Fibration}. When $A \rightarrow B$ is sequential
      fibration, there exists only a single subtree by definition. As was shown
      in~\citep{Orthey2019ISRR}, restriction sampling on $A$ is guaranteed to
      produce a sample in any positive volume ball on the
      \emph{constrained-free} space of $A$. This is guaranteed since a
      projection of such a ball lies strictly inside the \emph{constrained-free}
      space of $B$, since the projection onto $B$ is admissible (see
      Sec.~\ref{sec:fibration-axioms}). Since the uniform infinite recurrence property holds (Assumption 2), we will, eventually, obtain a sample in any positive volume ball. See~\citep{Orthey2019ISRR} for details. 
    \item \textbf{Partial Fibration}. Partial fibrations are similar to
      sequential fibrations. However, there is one important detail, namely that
      one has to ensure that the \texttt{SampleFiber} function is dense. For
      example, if the projection is onto the end-effector, we need to ensure
      that all possible inverse kinematics solution are sampled eventually. The
      remaining argument follows the argument above for sequential fibrations.
    \item \textbf{Parallel Fibration}. Let $A \rightarrow A_1 \times \cdots \times A_M$ be a parallel fibration. By induction assumption, the sampling sequence on $A_1,\ldots,A_M$ is dense on the constrained-free space. Let $D$ be a positive-volume ball on the constrained free space $A$. Then, by admissibility (Assumption 3), the projection of $D$ onto $A_1,\ldots,A_M$ yields $M$ balls $D_1,\ldots,D_M$, which are strictly inside the respective constrained free spaces. Since the sampling sequences on those spaces are dense by induction assumption, we will, eventually, find samples in each ball $D_1\ldots,D_M$, which can be lifted into $D$.
\end{enumerate}

We have thus proven the induction step and have shown that restriction sampling
  is dense on any arbitrary fibration tree (as defined in
  Sec.~\ref{sec:fibration-trees}) which uses liftable and admissible fibrations. Thus, restriction sampling acts as a dense sampler in the constrained configuration space of the root node. Since \FibrationRRT uses RRT on the root node, this thereby shows that \FibrationRRT is PC.\hfill\qedsymbol{}
\end{proof}

\end{document}